\documentclass{article} % For LaTeX2e
\usepackage{times}
\usepackage{fullpage}

\usepackage{natbib}

% Optional math commands from https://github.com/goodfeli/dlbook_notation.
%%%%% NEW MATH DEFINITIONS %%%%%

\usepackage{amsmath,amsfonts,bm}

% Mark sections of captions for referring to divisions of figures

% Highlight a newly defined term

% Figure reference, lower-case.

% Figure reference, capital. For start of sentence

% Section reference, lower-case.

% Section reference, capital.

% Reference to two sections.

% Reference to three sections.

% Reference to an equation, lower-case.
\def\eqref#1{equation~\ref{#1}}
% Reference to an equation, upper case
\def\Eqref#1{Equation~\ref{#1}}
% A raw reference to an equation---avoid using if possible

% Reference to a chapter, lower-case.

% Reference to an equation, upper case.

% Reference to a range of chapters

% Reference to an algorithm, lower-case.

% Reference to an algorithm, upper case.

% Reference to a part, lower case

% Reference to a part, upper case

\def\1{\bm{1}}

% Random variables

% rm is already a command, just don't name any random variables m

% Random vectors

% Elements of random vectors

% Random matrices

% Elements of random matrices

% Vectors
\def\vzero{{\bm{0}}}
\def\vone{{\bm{1}}}

\def\vg{{\bm{g}}}

\def\vm{{\bm{m}}}

\def\vr{{\bm{r}}}
\def\vs{{\bm{s}}}

\def\vu{{\bm{u}}}
\def\vv{{\bm{v}}}
\def\vw{{\bm{w}}}

\def\vz{{\bm{z}}}

% Elements of vectors

% Matrix
\def\mA{{\bm{A}}}
\def\mB{{\bm{B}}}

\def\mD{{\bm{D}}}

\def\mG{{\bm{G}}}

\def\mM{{\bm{M}}}

\def\mP{{\bm{P}}}

\def\mW{{\bm{W}}}

% Tensor
\DeclareMathAlphabet{\mathsfit}{\encodingdefault}{\sfdefault}{m}{sl}
\SetMathAlphabet{\mathsfit}{bold}{\encodingdefault}{\sfdefault}{bx}{n}

% Graph

% Sets

% Don't use a set called E, because this would be the same as our symbol
% for expectation.

% Entries of a matrix

% entries of a tensor
% Same font as tensor, without \bm wrapper

% The true underlying data generating distribution

% The empirical distribution defined by the training set

% The model distribution

% Stochastic autoencoder distributions

 % Laplace distribution

\newcommand{\R}{\mathbb{R}}

% Wolfram Mathworld says $L^2$ is for function spaces and $\ell^2$ is for vectors
% But then they seem to use $L^2$ for vectors throughout the site, and so does
% wikipedia.

 % See usage in notation.tex. Chosen to match Daphne's book.

\DeclareMathOperator*{\argmax}{arg\,max}
\DeclareMathOperator*{\argmin}{arg\,min}

\DeclareMathOperator{\sign}{sign}

\usepackage{hyperref}
\usepackage{url}

%% Robert commands

%\providecommand{\trace}[1]{\traceOp\left(#1\right)}
%\newcommand{\E}[1]{\mathbb{E}\left[#1\right] } 
 
\usepackage{amsthm}
\usepackage{amsbsy}
\usepackage{mathtools}
\usepackage{mdframed} 
\usepackage{thmtools}

\newcommand{\alg}[1]{$\mathrm{#1}$}
\definecolor{shadecolor}{gray}{0.90}
\declaretheoremstyle[
headfont=\normalfont\bfseries,
notefont=\mdseries, notebraces={(}{)},
bodyfont=\normalfont,
postheadspace=0.5em,
spaceabove=6pt,
mdframed={
  skipabove=8pt,
  skipbelow=8pt,
  hidealllines=true,
  backgroundcolor={shadecolor},
  innerleftmargin=4pt,
  innerrightmargin=4pt}
]{shaded}

\declaretheorem[style=shaded,within=section]{definition}

\declaretheorem[style=shaded,sibling=definition]{proposition}

\declaretheorem[style=shaded,sibling=definition]{lemma}

% Commenting
\usepackage[colorinlistoftodos,bordercolor=orange,backgroundcolor=orange!20,linecolor=orange]{todonotes}

% Michael commands
\usepackage{algorithm}
\newcommand{\Eqmark}[2]{\stackrel{(#1)}{#2}}

\usepackage{booktabs}
\usepackage{caption}
\usepackage{subcaption}

% Colored boxes in pseudocode
\usepackage{algpseudocode}
\usepackage{tikz}
\usetikzlibrary{fit,calc,tikzmark}
\makeatletter
\algrenewcommand\algorithmicrequire{\textbf{Input:}}
\algrenewcommand\alglinenumber[1]{\textcolor{gray!70}{\footnotesize#1}}
\algrenewcommand\algorithmiccomment[1]{\hfill\textcolor{gray!70}{\scriptsize$\triangleright$~#1}}
\makeatother
\newcommand{\AlgIO}[2]{\noindent\textbf{#1:}~#2\par}
 
% Colors for the two stages
\definecolor{algC1}{HTML}{1F77B4} % blue-ish
\definecolor{algC2}{HTML}{2CA02C} % green-ish
\definecolor{algC3}{HTML}{FA8072} % green-ish

% --- Boxing helper: draw a rounded box around any span marked with \tikzmark ---
% Usage: \AlgBox{start-mark}{end-mark}{color}{Label text}
\newlength{\AlgBoxPadLeft}   \setlength{\AlgBoxPadLeft}{0.25em}
\newlength{\AlgBoxPadRight}  \setlength{\AlgBoxPadRight}{0em}
\newlength{\AlgBoxPadTop}    \setlength{\AlgBoxPadTop}{0.8ex}
\newlength{\AlgBoxPadBottom} \setlength{\AlgBoxPadBottom}{0.2ex}
\newcommand{\AlgBox}[4]{%
\begin{tikzpicture}[remember picture,overlay]
\node[
draw=#3, rounded corners, ultra thick, inner sep=6pt,
% fit={( $ (pic cs:#1) + (0,0.6ex) $ ) ( $ (pic cs:#2) + (0,-0.3ex) $ )}
fit={( $ (pic cs:#1) + (-\AlgBoxPadLeft, \AlgBoxPadTop) $ ) ( $ (pic cs:#2) + ( \AlgBoxPadRight, -\AlgBoxPadBottom) $ )}
] (box-#1) {};
% Label badge
\node[
anchor=south west, yshift=2pt, xshift=4pt,
fill=#3!10, draw=#3, rounded corners=2pt, inner xsep=4pt, inner ysep=1pt,
font=\scriptsize\bfseries
] at (box-#1.north west) {#4};
\end{tikzpicture}%
}

% For colorful pseudo-code
% --- In your preamble ---
\usepackage{xcolor}
\usepackage{listings}

% Optional: nicer monospace
\usepackage[scaled=0.92]{inconsolata}

% Color palette (accessible, print-friendly)
\definecolor{codebg}{RGB}{250,250,250}
\definecolor{codeframe}{RGB}{215,215,215}
\definecolor{codekw}{RGB}{0,69,134}
\definecolor{codestr}{RGB}{160,82,45}
\definecolor{codecm}{RGB}{0,102,102}
\definecolor{codenum}{gray}{0.45}

\lstdefinestyle{mypython}{
  language=Python,
  basicstyle=\ttfamily\footnotesize,
  backgroundcolor=\color{codebg},
  keywordstyle=\color{codekw}\bfseries,
  stringstyle=\color{codestr},
  commentstyle=\color{codecm}\itshape,
  numberstyle=\scriptsize\color{codenum},
  showstringspaces=false,
  keepspaces=true,
  columns=fullflexible,
  frame=single,
  frameround=ffff,
  rulecolor=\color{codeframe},
  framesep=5pt,
  xleftmargin=0pt,
  xrightmargin=0pt,
  aboveskip=4pt,
  belowskip=0pt,
  tabsize=2
}

%%%% Rob's nasty squeezing commands
%%% For squeezing some extra space that appears after paragraphs sometimes
\newcommand{\psqueeze}{\vspace{-0.0cm}}
%%% Change the spacing before and after all equations
\setlength{\abovedisplayskip}{5pt}
\setlength{\belowdisplayskip}{5pt}
\setlength{\abovedisplayshortskip}{0pt}
\setlength{\belowdisplayshortskip}{5pt}

\usepackage{titlesec}
\definecolor{sectionblue}{RGB}{0,60,125}
\titleformat{\section}{\normalfont\Large\bfseries\color{sectionblue}}{\thesection}{1em}{}
\titleformat{\subsection}{\normalfont\large\bfseries\color{sectionblue}}{\thesubsection}{1em}{}

% Make the title use the same dark blue style (larger font)
\usepackage{titling}
\pretitle{\begin{center}\begingroup\huge\bfseries\color{sectionblue}}
\posttitle{\par\endgroup\end{center}}

\makeatletter
  \renewcommand*{\@fnsymbol}[1]{%
    \ensuremath{%
      \ifcase#1\or % Case 0 is usually unused for footnotes, but included for completeness.
      1\or % Symbol for footnote 1
      2\or % Symbol for footnote 2
      3\or % Symbol for footnote 3
      \else\@ctrerr\fi%
    }%
  }
\makeatother

\title{An Exploration of Non-Euclidean Gradient Descent: {\alg{Muon}} and its Many Variants}

%% Non-Euclidean Adventures in Optimization land: {\alg{Muon}} and Its Variants

%% An Exploration of Non-Euclidean Gradient Descent: Muon and its Many Variants

%% Making {\alg{Muon}}  Fast, Robust and Formal.

%% 

\author{Michael Crawshaw\thanks{Department of Computer Science, George Mason University, \texttt{\{mcrawsha,mingruil\}@gmu.edu}}, $\quad$
Chirag Modi\thanks{Center for Cosmology and Particle Physics, New York University, \texttt{modichirag@nyu.edu}}, $\quad$
Mingrui Liu$^1$, $\quad$
Robert M. Gower\thanks{Center for Computational Mathematics, Flatiron Institute, \texttt{rgower@flatironinstitute.org}}
}

% The \author macro works with any number of authors. There are two commands
% used to separate the names and addresses of multiple authors: \And and \AND.
%
% Using \And between authors leaves it to \LaTeX{} to determine where to break
% the lines. Using \AND forces a linebreak at that point. So, if \LaTeX{}
% puts 3 of 4 authors names on the first line, and the last on the second
% line, try using \AND instead of \And before the third author name.

%\iclrfinalcopy % Uncomment for camera-ready version, but NOT for submission.
\begin{document}

\maketitle

\begin{abstract}
To define a steepest descent method over a neural network, we need to choose a norm for each layer, a way to aggregate these norms across layers, and whether to use normalization.
We systematically explore different alternatives for aggregating norms across layers, both formalizing existing combinations of {\alg{Adam}} and the recently proposed {\alg{Muon}} as a type of non-Euclidean gradient descent, and deriving new variants of the {\alg{Muon}} optimizer.
Through a comprehensive experimental evaluation of the optimizers within our framework, we find that {\alg{Muon}} is sensitive to the choice of learning rate, whereas a new variant we call {\alg{MuonMax}} is significantly more robust.
We then show how to combine any Non-Euclidean gradient method with model based momentum (known as \alg{Momo}).
The new {\alg{Momo}} variants of {\alg{Muon}} are significantly more robust to hyperparameter tuning, and often achieve a better validation score.
Thus for new tasks, where the optimal hyperparameters are not known, we advocate for using {\alg{Momo}} in combination with {\alg{MuonMax}} to save on costly hyperparameter tuning.
\end{abstract}

\begin{figure}[h]
\centering
\includegraphics[width=0.49\textwidth, height=0.33\textwidth]{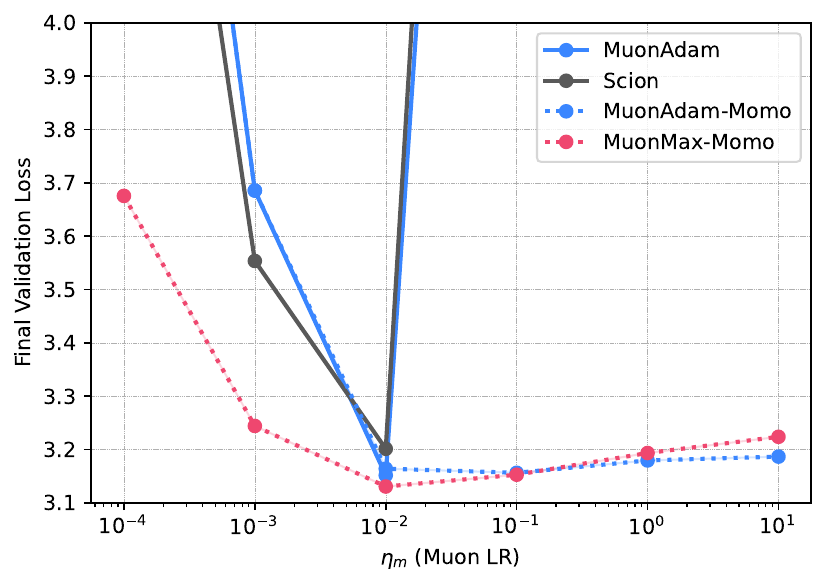}
\includegraphics[width=0.49\textwidth]{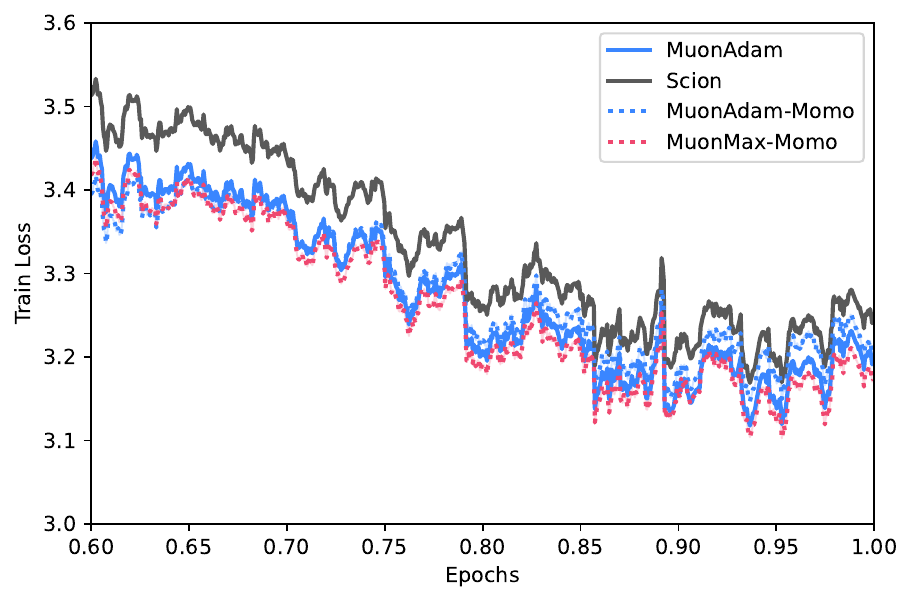}
\caption{Learning rate sweep for training GPT2-Large (774M params) on SlimPajama with 1B
tokens. \textbf{Left:} Final validation loss for various learning rates. \alg{MuonAdam}
and \alg{Scion} require precise tuning, whereas our \alg{MuonAdam}-\alg{Momo} and
\alg{MuonMax}-\alg{Momo} achieve low loss for a significantly wider range of learning
rates. \textbf{Right:} Training loss (with tuned LRs) for the last 40\% of steps.}
\label{fig:slim_pajama1B}
\end{figure}

\section{Introduction}
The recently proposed \alg{Muon} optimizer \citep{muon} has generated increasing
interest due to its efficiency for training language models \citep{scion,moonlight}.
This algorithm was initially introduced \citep{anthology,muon} and often interpreted
\citep{scion, kovalev2025understanding, fan2025implicit} as steepest descent with
respect to the spectral norm for each weight matrix in a neural network.

However, this interpretation does not entirely apply for practical implementations of
\alg{Muon}. In practice, \alg{Muon} is used side-by-side with another optimizer, where
hidden weight matrices are trained with \alg{Muon}, and all other parameters by
\alg{Adam} \citep{muon,moonlight,modded_nanogpt_2024} or \alg{SignDescent}
\citep{scion}. We will refer to this combination as \alg{MuonAdam} throughout, see
Algorithm~\ref{alg:muonadam} in  the appendix. Furthermore, for the weight matrices only
the normalized version of \alg{Muon} has been explored in practice.

% Also \alg{Muon}
% is understood as applying steepest descent separately to each weight matrix, rather than
% jointly to the entire space of network parameters.

Here we aim to strengthen the theoretical foundation of \alg{MuonAdam} and develop new
optimizers by systematically investigating different design choices which have not been
explored.
% were previously taken for granted.
We introduce a framework for steepest descent on the entire space of network parameters,
which involves a choice of norm for each layer, a \emph{product norm} to aggregate norms
across layers, and whether to normalize updates. This framework encompasses
\alg{MuonAdam} and other variations, which provides a more principled interpretation of
these algorithms as genuine steepest descent on the entire space of network parameters,
and also opens a design space for previously unexplored \alg{Muon}-type algorithms.

One such unexplored variant is what we call \alg{MuonMax}, that arises from a new
product norm and does not use update normalization. The updates of \alg{MuonMax} depend
on the nuclear norm of the momentum from every weight matrix, which is slightly less
efficient per-step than \alg{Muon}. To make \alg{MuonMax} more efficient, we introduce a
stale approximation of these nuclear norms, which can be implemented with near-identical
memory and $5\%$ additional wall-clock time per step as \alg{Muon}.
%This approximation has negligible effect on performance and can be used for many other methods.
%See Proposition \ref{prop:muonmax} for the definition of \alg{MuonMax} combined with \alg{Momo}.

Now that we can frame \alg{MuonAdam} and other variants as a type of steepest descent, we can import other tools used for steepest descent gradient methods.
% Beyond a more solid theoretical footing for \alg{Muon}, our framework offers the
% practical benefit of enabling the adaptation of techniques designed for SGD (or
% normalized SGD) to all these steepest descent algorithms. 
% Finally introduce an
One such tool is \alg{Momo} \citep{momo}, an adaptive step size based on model
truncation~\citep{AsiDuchi2019PNAS} that increases robustness to learning rate tuning.
We extend the \alg{Momo} step size to general steepest descent for arbitrary norms and
subsequently apply it to the algorithms in our framework.

We perform a systematic evaluation of many algorithms in our framework for training GPT
models with up to 774M parameters for language modeling on the FineWeb
\citep{penedo2024fineweb} and SlimPajama \citep{cerebras2023slimpajama} datasets with up
to 6B tokens. We find that \alg{MuonMax}-\alg{Momo} consistently matches or outperforms
\alg{MuonAdam} and \alg{Scion}~\citep{scion} while enjoying a much larger range of
competitive learning rates, meaning that \alg{MuonMax}-\alg{Momo} is much less sensitive
to tuning. We also observe that \alg{Momo} increases tuning robustness for all
variations and that our stale nuclear norm approximation causes negligible change in
performance, while decreasing wall-clock time per iteration. Our contributions are:

\begin{enumerate}
    \item \textbf{Formalizing \alg{MuonAdam}.} We introduce a steepest descent framework
        that encompasses the practical implementation of \alg{Muon} (with \alg{Adam}
        used for a subset of parameters), % \alg{Scion}, and other existing variants,
        demonstrating that even these side-by-side optimizers can be interpreted as
        steepest descent with respect to certain norms on the space of \textit{all}
        network parameters. This solidifies the theoretical foundation for practical
        variants of \alg{Muon}, and sheds light on unexplored aspects of \alg{Muon}'s
        design. Our framework also includes \alg{Scion} and other existing variants.
    \item \textbf{Defining non-Euclidean \alg{Momo}.} We show how to incorporate the
        adaptive step size \alg{Momo} with any steepest descent algorithm, which we find
        significantly increases robustness to the learning rate tuning.
    \item \textbf{\alg{MuonMax}: New practical, robust variant of \alg{Muon}.} We
        propose a new optimizer, \alg{MuonMax}, which arises within our framework from a
        novel product norm. With a stale approximation of the nuclear norm of each
        layer's momentum, \alg{MuonMax} has near-identical memory cost and $5\%$
        additional wall-clock time per iteration compared to \alg{Muon}.
    \item \textbf{Systematic Evaluation.} We perform a comprehensive evaluation of
        optimizers in our framework for language modeling. \alg{MuonMax}-\alg{Momo}
        consistently matches or outperforms \alg{Muon} and other baselines while
        widening the range of competitive step size choices by several orders of
        magnitude.
\end{enumerate}

\paragraph{Notation.}
We use $\langle \cdot, \cdot \rangle$ to denote the Euclidean inner product on $\R^d$ or
on products of the form $\R^{d_1} \times \ldots \times \R^{d_n}$ naturally by viewing
elements of $\R^{d_1} \times \ldots \times \R^{d_n}$ as elements of $\R^{d_1 + \ldots +
d_n}$. Note that for matrices, which can also be viewed as elements of $\R^{mn}$, this
inner product is consistent with the trace inner product, since $\text{Tr}(\mA^T \mB) =
\langle \text{vec}(\mA), \text{vec}(\mB) \rangle$.

%We use $\|\cdot\|$ to denote an arbitrary norm, and denote its dual by $\|\vv\|_* := \max_{\|\vu\|=1} \langle \vv, \vu \rangle$.

\section{Related Work}

\paragraph{Muon.} 
The use of \emph{Spectral descent}, that is steepest descent with respect to the
spectral norm, on deep neural networks dates back to~\citet{carlson2015stochastic,
carlson2015preconditioned}. \alg{Muon} is the combination of spectral descent with
momentum~\citep{anthology}, and a carefully crafted polynomial algorithm for computing
the polar factor~\citep{muon}. Recent work has designed an optimal such polynomial
algorithm for the polar factor called \alg{PolarExpress}~\citep{amsel2025polar}, which
we use in our \alg{Muon} implementation. \citet{scion} introduced \alg{Scion}, which
uses \alg{SignSGD} with momentum (instead of \alg{Adam}) to train non-matrix parameters.
%and is formulated as normalized steepest descent applied individually to each layer.
\citet{moonlight} scaled \alg{Muon} to train a 16B parameter language model with 5.7T
tokens. Several works have developed theory of \alg{Muon}'s convergence
\citep{li2025note, kovalev2025understanding, riabinin2025gluon} and implicit bias
\citep{tsilivis2025flavors, fan2025implicit}.

Most similar to ours is the line of work developing the modular norm \citep{anthology,
large2024scalable, modular_duality}. This line of work also argues that we should
perform steepest descent on the entire space of network parameters, instead of
separately at each layer, and focuses on steepest descent with respect to a particular
norm called the modular norm. This norm enables Lipschitz continuity of the neural
network with respect to both weights and inputs. In this work, we take an orthogonal
approach, where we develop a general theory of steepest descent on product spaces, and
numerically investigate many possible norms on these spaces. We are not aware of any
existing evaluation of steepest descent with respect to the modular norm. \psqueeze

\paragraph{Model Truncation.} 
Gradient descent can be viewed as using the local linearization of the loss as a \emph{model} of the loss itself. If we know a lower bound of the loss, for instance most loss functions are positive, then we can improve this linear model by \emph{truncating} the model at this lower bound~\citep{Asi2019}. Follow-up work emphasizes the importance of such model choices in stochastic optimization~\citep{AsiDuchi2019PNAS}, and extends the framework to minibatch settings~\citep{AsiEtAl2020NeurIPS}.
Using model truncation often leads to methods that are more stable and easier to tune~\citep{SPS,Davis2019,meng2022a,Schaipp2023}. Recently~\citet{momo} showed how to combine momentum with model truncation. Furthermore~\citet{Chen2022} combine parameter-free coin betting methods with truncated models.

\section{Steepest Descent on Neural Networks} \label{sec:steepest}
Let $F: \R^d \rightarrow \R$ be the loss function, and  
$\|\cdot\|$ be any norm on $\R^d$. We define the \emph{Linear Minimization Oracle} (LMO)
and the \emph{dual norm} as
\begin{equation}
    {\sf LMO}_{\|\cdot\|}(\vv) = \argmin_{\|\vu\|=1} \langle \vu, \vv \rangle, \quad \mbox{and} \quad \|\vv\|_* = \max_{\|\vu\|=1} \langle \vu, \vv \rangle,
\end{equation}
respectively.
When the norm is clear from context, we will omit the subscript and write ${\sf LMO}$.
Throughout we denote the stochastic gradient at step $t$ by $\vg_t$, and the momentum
buffer $\vm_t$ which is an exponential moving average of stochastic gradients, i.e.
$\vm_t = \beta \vm_{t-1} + (1 - \beta) \vg_t$ for $\beta \in [0, 1)$.

\subsection{Constrained vs Regularized Steepest Descent}
For a single weight matrix, the {\alg{Muon}}  update is often motivated as the {\sf LMO}~\citep{scion} with
respect to the spectral norm.
% of the momentum buffer $\vm_t$ at each step. 
The following proposition shows that for a general norm, updating in the direction of
${\sf LMO}(\vm_t)$ is equivalent to minimizing a first-order Taylor approximation of $F$
around $\vw_t$, with a constraint on the update's norm and approximating $\nabla F(\vw_t) \approx \vm_t$.

\begin{restatable}{proposition}{propcsd}[Constrained Steepest Descent]\label{prop:csd}
The CSD update is given by
\begin{align} \label{eq:csd_opt}
    \vw_{t+1} &= \argmin_{\|\vw - \vw_t\| \leq \eta} \left\{ F(\vw_t) + \langle \vm_t, \vw - \vw_t \rangle \right\} \;=\; \vw_t + \eta \,{\sf LMO}(\vm_t). 
\end{align}
% has the closed form
% \begin{equation} \label{eq:csd}
%     \vw_{t+1} = \vw_t + \eta \,{\sf LMO}(\vm_t).
% \end{equation}
\end{restatable}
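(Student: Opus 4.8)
The plan is to unwind the definitions: strip the objective of everything that does not affect the $\argmin$, reduce the constraint set from the ball to the sphere, and then recognize the resulting problem as exactly the definition of the LMO after a rescaling. First I would observe that the term $F(\vw_t)$ in \eqref{eq:csd_opt} is a constant in $\vw$ and so plays no role in the minimization, and then perform the change of variables $\vu = \vw - \vw_t$. With this substitution the problem becomes $\min_{\|\vu\| \le \eta} \langle \vm_t, \vu \rangle$, and the claim reduces to showing that $\vu = \eta\,{\sf LMO}(\vm_t)$ is a minimizer, since then $\vw_{t+1} = \vw_t + \eta\,{\sf LMO}(\vm_t)$.

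The core of the argument is a lower bound matched by an explicit feasible point. By the definition of the dual norm, every feasible $\vu$ satisfies $\langle \vm_t, \vu \rangle \ge -\|\vm_t\|_* \,\|\vu\| \ge -\eta \|\vm_t\|_*$, so $-\eta\|\vm_t\|_*$ is a lower bound on the objective over the whole ball. To see this bound is attained, I would use the identity $\langle {\sf LMO}(\vm_t), \vm_t \rangle = \min_{\|\vu\|=1} \langle \vu, \vm_t \rangle = -\|\vm_t\|_*$, which follows from the definitions of ${\sf LMO}$ and $\|\cdot\|_*$ together with the symmetry of the norm (replacing $\vu$ by $-\vu$ turns the defining $\min$ into $-\max$). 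The point $\vu = \eta\,{\sf LMO}(\vm_t)$ is feasible because $\|{\sf LMO}(\vm_t)\| = 1$ implies $\|\vu\| = \eta$, and by linearity its objective value is $\eta\langle {\sf LMO}(\vm_t), \vm_t\rangle = -\eta\|\vm_t\|_*$, which equals the lower bound. Hence it is a minimizer, establishing the identity in \eqref{eq:csd_opt}.

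I would close by addressing two minor points. In the degenerate case $\vm_t = \vzero$ the linear term vanishes, every feasible point is optimal, and the stated formula holds for any choice of LMO, so the claim is vacuously consistent. Second, for a general norm whose unit ball is not strictly convex the $\argmin$ defining ${\sf LMO}$ may be set-valued, so the minimizer of \eqref{eq:csd_opt} need not be unique; I would therefore phrase the conclusion as identifying \emph{a} minimizer, with ${\sf LMO}(\vm_t)$ understood as a selection from the (possibly set-valued) $\argmin$.

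There is no substantial obstacle here, since the result is essentially a rescaling of the definition of the LMO. The only steps requiring any care are the reduction to the boundary of the ball, which I handle cleanly via the dual-norm inequality rather than an explicit scaling argument, and the bookkeeping around non-uniqueness of the LMO for non-strictly-convex norms.
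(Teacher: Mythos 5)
Your proof is correct, but it takes a different route from the paper's. The paper changes variables to a magnitude--direction pair $r = \|\vw - \vw_t\|$, $\boldsymbol{\Delta} = (\vw - \vw_t)/\|\vw - \vw_t\|$, argues that the joint minimization separates, solves the direction subproblem to get ${\sf LMO}(\vm_t)$, and then solves the one-dimensional subproblem $\argmin_{r \in [0,\eta]} \{-r\|\vm_t\|_*\} = \eta$ to get the radius. You instead skip the reparametrization entirely: after discarding the constant $F(\vw_t)$ and substituting $\vu = \vw - \vw_t$, you lower-bound the objective over the whole ball via the dual-norm (generalized H\"older) inequality $\langle \vm_t, \vu\rangle \geq -\|\vm_t\|_*\|\vu\| \geq -\eta\|\vm_t\|_*$, and then exhibit the feasible point $\vu = \eta\,{\sf LMO}(\vm_t)$ attaining that bound via the identity $\langle {\sf LMO}(\vm_t), \vm_t\rangle = -\|\vm_t\|_*$. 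Both arguments pivot on the same identity, but your version buys a few things: it avoids the singularity of the paper's change of variables at $\vw = \vw_t$, it does not need to justify that the joint $\argmin$ over $(r, \boldsymbol{\Delta})$ separates, and it explicitly handles the degenerate case $\vm_t = \vzero$ and the set-valuedness of the ${\sf LMO}$ for non-strictly-convex norms --- points the paper's proof passes over silently. The paper's decomposition, on the other hand, is reused verbatim as the template for Propositions \ref{prop:rsd}, \ref{prop:momo_csd}, and \ref{prop:momo_rsd}, where the radial subproblem is no longer trivial, so it earns its keep there.
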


The ball constraint above ensures that we only use the Taylor approximation close to its
center $\vw_t$,
% (where the approximation can be trusted),
but another natural choice is to use regularization instead of a constraint as follows.

\begin{restatable}{proposition}{proprsd}[Regularized Steepest Descent]\label{prop:rsd}
% \begin{proposition}[Regularized Steepest Descent] \label{prop:rsd}
The RSD update is given by
\begin{align} \label{eq:rsd_opt}
    \vw_{t+1} &= \argmin_{\vw} \left\{ F(\vw_t) + \langle \vm_t, \vw - \vw_t \rangle + \tfrac{1}{2 \eta} \left\| \vw - \vw_t \right\|^2 \right\} \; = \vw_t + \eta \|\vm_t\|_* {\sf LMO}(\vm_t)
% \end{equation}
% has the following closed form:
% \begin{equation}
% \label{eq:rsd}
%     &= \vw_t + \eta \|\vm_t\|_* {\sf LMO}(\vm_t).
\end{align}
\end{restatable}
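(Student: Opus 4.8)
The plan is to reduce the vector minimization in \eqref{eq:rsd_opt} to a one-dimensional problem by separating the update into its magnitude and direction. First I would discard the constant $F(\vw_t)$ and substitute $\vd = \vw - \vw_t$, so that the objective becomes $\langle \vm_t, \vd \rangle + \tfrac{1}{2\eta}\|\vd\|^2$, to be minimized over all $\vd \in \R^d$. This affine reparametrization does not change the minimizer, so it suffices to find the optimal $\vd^\star$ and then set $\vw_{t+1} = \vw_t + \vd^\star$.

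Next I would use a radial decomposition: every nonzero vector can be written uniquely as $\vd = r\vu$ with $r = \|\vd\| > 0$ and $\|\vu\| = 1$, treating $\vd = 0$ (that is, $r = 0$) as a separate case. Substituting yields the objective $r\langle \vm_t, \vu\rangle + \tfrac{r^2}{2\eta}$, and I would minimize first over the direction $\vu$ for fixed $r \geq 0$. Because $r \geq 0$, this reduces to minimizing the linear functional $\vu \mapsto \langle \vm_t, \vu\rangle$ over the unit sphere, whose minimizer is exactly ${\sf LMO}(\vm_t)$ by definition. Here I would invoke the symmetry of the unit sphere ($\|\vu\| = \|-\vu\|$) to identify the optimal value as $\min_{\|\vu\|=1}\langle \vm_t, \vu\rangle = -\max_{\|\vu\|=1}\langle \vm_t, \vu\rangle = -\|\vm_t\|_*$.

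With the direction optimized out, the objective collapses to the scalar function $h(r) = -r\|\vm_t\|_* + \tfrac{r^2}{2\eta}$ on $r \geq 0$. This is a strictly convex quadratic, so I would set $h'(r) = -\|\vm_t\|_* + r/\eta = 0$ to obtain $r^\star = \eta\|\vm_t\|_* \geq 0$, which lies in the feasible region precisely because dual norms are nonnegative. Combining the optimal radius and direction gives $\vd^\star = \eta\|\vm_t\|_*\,{\sf LMO}(\vm_t)$, and hence the claimed update $\vw_{t+1} = \vw_t + \eta\|\vm_t\|_*\,{\sf LMO}(\vm_t)$.

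I expect the main obstacle to be bookkeeping around the degenerate cases rather than any deep difficulty: when $\vm_t = 0$ the dual norm vanishes and one must check directly that $\vd^\star = 0$ is optimal, so that the radial reduction remains consistent; and one must justify interchanging the order of minimization over $r$ and $\vu$, which is legitimate here because the feasible set factors as a product of the radius and the unit sphere and the objective is separable once written in radial form. The only genuinely substantive step is the symmetry argument turning the minimization of $\langle \vm_t, \cdot\rangle$ into $-\|\vm_t\|_*$; everything else is routine single-variable calculus.
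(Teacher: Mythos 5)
Your proposal is correct and follows essentially the same route as the paper's proof: the same radial change of variables $r = \|\vw - \vw_t\|$, $\boldsymbol{\Delta} = (\vw-\vw_t)/\|\vw-\vw_t\|$, the same separation into a direction subproblem solved by ${\sf LMO}(\vm_t)$ with optimal value $-\|\vm_t\|_*$, and the same scalar quadratic minimization yielding $r^\star = \eta\|\vm_t\|_*$. Your extra care about the $\vm_t = 0$ case and the validity of interchanging the two minimizations is sound but does not change the argument.
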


In the case without momentum (i.e. $\beta = 0$), both of these algorithms have appeared
throughout the literature under the name steepest descent, but the recent line of work
around {\alg{Muon}}  \citep{muon,modular_duality,scion,moonlight} has mostly focused
on
% almost entirely considered
the constrained variant. To the best of our knowledge, the only works which consider the
regularized variant over the space of all parameters was \citet{anthology}.
\citet{polargrad} also use the regularized interpretation of \alg{Muon} on a per layer
basis instead of the entire product space.

Notice that CSD and RSD have the same update direction, but with regularization the
update magnitude is multiplied by the dual norm of the momentum. Therefore, the primal
norm of the update $\|\vw_{t+1} - \vw_t\|$ is $\eta$ for CSD and $\eta \|\vm_t\|_*$ for
RSD. Intuitively, CSD enforces a \textit{normalized update}. \psqueeze

\subsection{Product Norms}
To describe steepest descent, we first need a norm over the space of \emph{all} network
parameters~\citep{anthology}.
% Muon's update  applies constrained steepest descent \textit{separately} to
% each layer of a neural network. Instead, we argue that steepest descent should operate
% directly on the space of all parameters of a neural network, and to do so we must define
% a norm on this space.
Instead of flattening all parameters into a single vector, we consider the Cartesian
product $\mW = (\vw^1, \vw^2, \ldots, \vw^n)$ of network parameters (where each $\vw^i$
could be a flattened weight matrix, a bias vector, etc). We assign a norm $\|\cdot\|_{(i)}$
for parameter $\vw^i$, then aggregate these norms into a single norm on the product
space. Two natural examples of product norms are 
\begin{equation} \label{eq:product_norms}
   \textstyle \|\mW\|_{\infty} := \max_{1 \leq i \leq n} \|\vw^i\|_{(i)}, \quad \mbox{and} \quad \|\mW\|_2 := \sqrt{\sum_{i=1}^n \|\vw^i\|_{(i)}^2}.
\end{equation}
%which we will repeatedly use throughout the paper.
% In order to compute the steepest descent direction with respect to a product norm, 
% we need to compute both the ${\sf LMO}$ and the dual norm of a product norm in
% terms of the aggregated per-parameter norms and the norm used to aggregate them. The following lemma help us do just that. 
Computing the steepest descent direction with respect to a product norm requires: the
linear minimization oracle ({\sf LMO}) and the dual norm of the product norm. As we show
next, both can be expressed in terms of the underlying per-parameter norms and the norm
used to aggregate them.
%For notational convenience,  we use functions instead of verticals bars for our norms.

%\textcolor{red}{ML: in the setting of deep neural network, $w_i$ should be the matrix of the $i$-th layer, but it seems that $w_i$ is a vector from the presentation of Lemma 3.3. Maybe need to mention that every vector norm induces a corresponding matrix norm, so Lemma 3.3 can assume every argument is a vector without loss of generality and can handle both vector and matrix norm.}

%\textcolor{blue}{Michael: Yes that's probably good. Alternatively we can consider products of the form $\mathcal{V}_1 \times \ldots \times \mathcal{V}_n$, where each $\mathcal{V}_i$ is simply a finite dimensional real vector space, which could be either a space of vectors or a space of matrices. I like it except that then we have to clarify how to extend the inner product onto these spaces, which may be more trouble than keeping things as they are.}

\begin{restatable}{lemma}{lemproductnorm}[{\sf LMO} and Dual of Product Norms]\label{lem:product_norm}
For each $i \in [n]$, let $g_i$ be a norm on $\R^{d_i}$, and let $f$ be a norm on
$\R^n$, and denote their dual norms as $g_{i,*}$ and $f_*$, respectively. Then the
product norm $h: \R^{d_1} \times \ldots \times \R^{d_n} \rightarrow \R$ defined by
\begin{equation}
    h(\vw^1, \ldots, \vw^n) = f\big(g_1(\vw^1), \ldots, g_n(\vw^n)\big)
\end{equation}
is indeed a norm, and its ${\sf LMO}$ and dual norm are given by
\begin{align}
    {\sf LMO}_h(\vw^1, \ldots, \vw^n) &= (\phi_1 {\sf LMO}_{g_1}(\vw^1), \ldots, \phi_n {\sf LMO}_{g_n}(\vw^n)) \\
    h_*(\vw^1, \ldots, \vw^n) &= f_*(g_{1,*}(\vw^1), \ldots, g_{n,*}(\vw^n)),
\end{align}
where $(\phi_1, \ldots, \phi_n) := -{\sf LMO}_f(g_{1,*}(\vw^1), \ldots, g_{n,*}(\vw^n))$.
\end{restatable}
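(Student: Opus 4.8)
The plan is to prove the three claims in order — that $h$ is a norm, then the dual formula, then the ${\sf LMO}$ formula — with the last two both reduced to a common ``magnitudes, then directions'' two-stage optimization. Throughout I write $\mW = (\vw^1, \ldots, \vw^n)$ for the argument, $\vu = (\vu^1, \ldots, \vu^n)$ for the optimization variable, and abbreviate $s_i := g_{i,*}(\vw^i) \geq 0$ with $\vs := (s_1, \ldots, s_n)$.

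First I would check $h$ is a norm. Nonnegativity and positive homogeneity are immediate from the corresponding properties of the $g_i$ and $f$, and definiteness follows because $f(g_1(\vw^1), \ldots, g_n(\vw^n)) = 0$ forces each $g_i(\vw^i) = 0$, hence $\vw^i = \vzero$. The triangle inequality is the only delicate point: the triangle inequality of each $g_i$ gives $g_i(\vw^i + \vv^i) \leq g_i(\vw^i) + g_i(\vv^i)$, and I then want to apply $f$ to both sides. This passage requires $f$ to be \emph{monotone} on the nonnegative orthant (equivalently, by the Bauer--Stoer--Witzgall characterization, an \emph{absolute} norm with $f(\vx) = f(|\vx|)$); monotonicity lets me replace the argument by the coordinatewise-larger one, and the triangle inequality of $f$ then closes the bound. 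Both product norms in \eqref{eq:product_norms} are absolute, so I would state monotonicity of $f$ as the (benign) standing hypothesis that makes $h$ a norm.

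For the dual norm I would write $h_*(\mW) = \max_{\vu \neq \vzero} \langle \vu, \mW \rangle / h(\vu)$ and optimize in two stages. Fixing the magnitudes $r_i := g_i(\vu^i)$, the inner maximization of $\sum_i \langle \vu^i, \vw^i \rangle$ subject to $g_i(\vu^i) = r_i$ equals $\sum_i r_i\, g_{i,*}(\vw^i) = \langle \vr, \vs \rangle$, directly from the definition of each dual norm. The outer problem is then $\max_{\vr \geq \vzero,\, \vr \neq \vzero} \langle \vr, \vs \rangle / f(\vr)$. Since $\vs \geq \vzero$ and $f$ is absolute, replacing any $\vr$ by $|\vr|$ does not decrease the numerator and leaves $f(\vr)$ unchanged, so the constraint $\vr \geq \vzero$ is free and the outer value is exactly $f_*(\vs)$. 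This yields $h_*(\mW) = f_*(g_{1,*}(\vw^1), \ldots, g_{n,*}(\vw^n))$. For the ${\sf LMO}$ I would run the identical decomposition on $\min_{h(\vu) = 1} \langle \vu, \mW \rangle$: at fixed magnitudes $r_i$ the inner minimum over directions is $-\langle \vr, \vs \rangle$, attained at $\vu^i = r_i\, {\sf LMO}_{g_i}(\vw^i)$ (the unit-sphere minimizer of a linear functional is the ${\sf LMO}$, and the sphere is symmetric). The outer problem $\min_{f(\vr) = 1,\, \vr \geq \vzero} -\langle \vr, \vs \rangle$ has maximizing $\vr$ equal to $-{\sf LMO}_f(\vs) =: (\phi_1, \ldots, \phi_n)$ by the same symmetry of the sphere, and this vector is $\geq \vzero$ precisely because $\vs \geq \vzero$ and $f$ is monotone (so the constraint $\vr\ge\vzero$ is again inactive). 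Substituting $r_i = \phi_i$ into the optimal directions gives ${\sf LMO}_h(\mW) = (\phi_1\, {\sf LMO}_{g_1}(\vw^1), \ldots, \phi_n\, {\sf LMO}_{g_n}(\vw^n))$, as claimed.

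The main obstacle I anticipate is managing the monotonicity/absoluteness of $f$, since it is the single hypothesis doing the real work in all three parts: it supplies the triangle inequality for $h$, it lets me drop the nonnegativity constraint and identify the outer optimum with $f_*(\vs)$, and it guarantees $\boldsymbol{\phi} \geq \vzero$ so that the stated ${\sf LMO}$ formula is well posed. Making these three uses explicit — together with invoking the monotone $\Leftrightarrow$ absolute equivalence — is the crux; once that is in place, the remainder is the routine bookkeeping of partial minimization over magnitudes and directions.
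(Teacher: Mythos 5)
Your proposal is correct and follows the same two-stage ``magnitudes, then directions'' decomposition as the paper's own proof: both arguments fix $r_i = g_i(\vu^i)$, solve the inner direction problem via the per-block dual norms and ${\sf LMO}$s, and then recognize the outer problem over $\vr$ as the dual norm / ${\sf LMO}$ of $f$ evaluated at $\vs = (g_{1,*}(\vw^1), \ldots, g_{n,*}(\vw^n))$.

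The one substantive difference is that your ``benign standing hypothesis'' that $f$ be monotone on the nonnegative orthant (equivalently, absolute) is not pedantry: the lemma as stated is \emph{false} without it, and the paper's proof silently uses it at exactly the two places you identify. For the norm property, take $n=2$, $d_1=d_2=1$, $g_1=g_2=|\cdot|$, and $f(x,y)=|x-y|+|y|$, which is a norm on $\R^2$ but not monotone on the nonnegative orthant; then $h(w^1,w^2)=\bigl||w^1|-|w^2|\bigr|+|w^2|$ gives $h(1,1)=h(-1,1)=1$ while $h(0,2)=4$, so the triangle inequality fails and the paper's claim that the norm axioms ``hold immediately'' is not justified. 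For the dual, the change of variables produces the constrained problem $\max\{\langle\vr,\vs\rangle : f(\vr)=1,\ \vr\ge\vzero\}$, and identifying this with $f_*(\vs)$ (i.e., dropping $\vr\ge\vzero$) again needs absoluteness of $f$, as does the nonnegativity of $\boldsymbol\phi$ that makes the ${\sf LMO}$ formula well posed. Since every product norm actually used in the paper ($\ell_\infty$, $\ell_2$, and the hybrid norm of \eqref{eq:hyb_norm_def}) is absolute, none of the downstream propositions are affected, but the hypothesis should be added to the lemma statement exactly as you propose.
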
 
%The above lemma shows how to compute the ${\sf LMO}$ and dual norm of a product norm in terms of the ${\sf LMO}$ of each parameter ${\sf LMO}_{g_i}(\vw_i)$ and the vector of parameter-wise dual norms $(g_{1,*}(\vw_1), \ldots, g_{n,*}(\vw_n))$. 

We can now compute steepest descent updates (both constrained and regularized) with
respect to the product norms $\|\cdot\|_{\infty}$, $\|\cdot\|_2$, or any other product
norm  by plugging the {\sf LMO} and dual of each product norm into the steepest
descent definitions (\Eqref{eq:csd_opt} and \Eqref{eq:rsd_opt}).

Denoting by $\vm_t^i$ the momentum buffer of parameter $i$, the updates for each
parameter $\vw^i$ are:
\begin{align}
    \text{CSD w.r.t. $\|\cdot\|_{\infty}$:} \quad \vw_{t+1}^i &= \vw_t^i + \eta \, {\sf LMO}_{\|\cdot\|_{(i)}}(\vm_t^i) \label{eq:csd_infty} \\
    \text{RSD w.r.t. $\|\cdot\|_{\infty}$:} \quad \vw_{t+1}^i &= \vw_t^i + \eta \Big( \sum_{j=1}^n \|\vm_t^j\|_{(j),*} \Big) {\sf LMO}_{\|\cdot\|_{(i)}}(\vm_t^i) \label{eq:rsd_infty} \\
    \text{CSD w.r.t. $\|\cdot\|_2$:} \quad \vw_{t+1}^i &= \vw_t^i + \eta \tfrac{\|\vm_t^i\|_{(i),*}}{\sqrt{\sum_{j=1}^n \|\vm_t^j\|_{(j),*}^2}} {\sf LMO}_{\|\cdot\|_{(i)}}(\vm_t^i) \label{eq:csd_2} \\
    \text{RSD w.r.t. $\|\cdot\|_2$:} \quad \vw_{t+1}^i &= \vw_t^i + \eta \,\|\vm_t^i\|_{(i),*} {\sf LMO}_{\|\cdot\|_{(i)}}(\vm_t^i) \label{eq:rsd_2}
\end{align}

For the methods above, the update direction for each parameter $\vw_t^i$ is always the
${\sf LMO}$ of $\vm_t^i$, regardless of the choice of product norm. However, the
magnitude of each parameter's update is determined by the product norm and the dual
norms of each parameter's momentum. Therefore, different choices of the product norm
amount to different \textit{parameter-wise learning rates}.

\subsection{Incorporating Adam}
% Although the previous section shows how to recover the parameter-wise ${\sf LMO}$ update
% style of {\alg{Muon}}, this is still not sufficient to represent the practical implementation of
% {\alg{Muon}}  as a steepest descent algorithm, since this practical implementation uses \alg{Adam} to
% train a subset of network parameters.

Now we show how to represent the hybrid \alg{MuonAdam} method as a steepest descent
method. For parameters $\boldsymbol{\theta}$, the \alg{Adam} update, where all vector operations are element-wise\footnote{We omit the bias correction since this bias can be removed by correctly initializing the momentum buffers~\cite{momo}. In any case it has little effect on  performance~\citep{secret_sauce}.}, is given by
\begin{equation}
    \boldsymbol{\theta}_{t+1} = \boldsymbol{\theta}_t - \eta \tfrac{\vm_t}{\sqrt{\vv_t} + \epsilon}, \quad \mbox{and}\quad \vv_{t+1} = \beta_2 \vv_t + (1 - \beta_2) \vg_t^2
\end{equation}
 \alg{Adam}  can be interpreted as steepest descent in two
different ways.

\begin{restatable}{proposition}{propadaminfty}\label{prop:adam_infty}
The $t$-th update of \alg{Adam}  is the CSD  with step size $\eta$ with respect to the norm:
\begin{equation} \label{eq:adam_infty}
    \|\boldsymbol{\theta}\|_{\text{ada}\infty} := \big\| \text{Diag} \big( \tfrac{\sqrt{\vv_t} + \epsilon}{|\vm_t|} \big) \boldsymbol{\theta} \big\|_{\infty}
\end{equation}
\end{restatable}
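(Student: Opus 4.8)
The plan is to compute the CSD update for the weighted $\ell_\infty$ norm $\|\cdot\|_{\text{ada}\infty}$ directly and check that it reproduces the Adam step. By Proposition~\ref{prop:csd} the CSD update is $\boldsymbol{\theta}_{t+1} = \boldsymbol{\theta}_t + \eta\,{\sf LMO}(\vm_t)$, so it suffices to show that ${\sf LMO}_{\|\cdot\|_{\text{ada}\infty}}(\vm_t) = -\tfrac{\vm_t}{\sqrt{\vv_t}+\epsilon}$, since then the update becomes exactly $\boldsymbol{\theta}_{t+1} = \boldsymbol{\theta}_t - \eta\tfrac{\vm_t}{\sqrt{\vv_t}+\epsilon}$.

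First I would record how the LMO behaves under an invertible linear reweighting of a norm. Writing $\mD := \text{Diag}\big(\tfrac{\sqrt{\vv_t}+\epsilon}{|\vm_t|}\big)$ so that $\|\boldsymbol{\theta}\|_{\text{ada}\infty} = \|\mD\boldsymbol{\theta}\|_\infty$, the change of variables $\vy = \mD\vu$ in the definition of the LMO turns the constraint $\|\mD\vu\|_\infty = 1$ into $\|\vy\|_\infty = 1$, while $\langle \vu, \vv\rangle = \langle \vy, \mD^{-1}\vv\rangle$ because $\mD$ is diagonal and hence symmetric. This yields the transformation rule
\[
  {\sf LMO}_{\|\mD\,\cdot\,\|_\infty}(\vv) = \mD^{-1}\,{\sf LMO}_{\|\cdot\|_\infty}(\mD^{-1}\vv).
\]

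Next I would substitute the standard fact that ${\sf LMO}_{\|\cdot\|_\infty}(\vv) = -\sign(\vv)$ element-wise, which follows from $\ell_1$/$\ell_\infty$ duality since $u_i = -\sign(v_i)$ minimizes $\langle\vu,\vv\rangle$ over the unit $\ell_\infty$-ball. Since $\mD^{-1} = \text{Diag}\big(\tfrac{|\vm_t|}{\sqrt{\vv_t}+\epsilon}\big)$ reweights by a nonnegative factor, we have $\sign(\mD^{-1}\vm_t) = \sign(\vm_t)$, and therefore
\[
  {\sf LMO}_{\|\cdot\|_{\text{ada}\infty}}(\vm_t) = -\mD^{-1}\sign(\vm_t) = -\tfrac{|\vm_t|\odot\sign(\vm_t)}{\sqrt{\vv_t}+\epsilon} = -\tfrac{\vm_t}{\sqrt{\vv_t}+\epsilon},
\]
using $|\vm_t|\odot\sign(\vm_t) = \vm_t$ coordinatewise. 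Plugging this into the CSD update from Proposition~\ref{prop:csd} gives precisely the Adam step, completing the argument.

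The one delicate point, which I expect to be the main obstacle to a fully rigorous statement, is the well-definedness of $\|\cdot\|_{\text{ada}\infty}$: the matrix $\mD$ must be invertible, which fails whenever some coordinate of $\vm_t$ vanishes, since then the diagonal entry $\tfrac{\sqrt{v}+\epsilon}{|m|}$ diverges and the expression degenerates. I would handle this either by restricting to the generic case where $\vm_t$ has all nonzero entries, or by a limiting interpretation in which a zero-momentum coordinate carries infinite weight and is held fixed, consistent with the Adam update assigning that coordinate a zero step. Modulo this caveat, everything above is a routine computation.
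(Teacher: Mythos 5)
Your proposal is correct and follows essentially the same route as the paper's proof: rewrite the norm as $\|\mD\boldsymbol{\theta}\|_\infty$, apply the linear-reweighting rule for the {\sf LMO} (which you derive inline and the paper isolates as Lemma~\ref{lem:linear_norm}), and substitute ${\sf LMO}_\infty(\vv) = -\sign(\vv)$. Your closing caveat about $\mD$ being ill-defined when a coordinate of $\vm_t$ vanishes is a legitimate point the paper glosses over, and your proposed resolution (generic case or limiting interpretation) is reasonable.
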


\begin{restatable}{proposition}{propadamtwo}\label{prop:adam_2}
The $t$-th update of \alg{Adam} is the RSD  with step size $\eta$ with respect to the norm:
\begin{equation} \label{eq:adam_2}
    \|\boldsymbol{\theta}\|_{\text{ada}2} :=  \sqrt{\big\langle \text{Diag}(\sqrt{\vv_t} + \epsilon) \boldsymbol{\theta}, \boldsymbol{\theta} \big\rangle} = \big\| \text{Diag} \big( \sqrt{\sqrt{\vv_t} + \epsilon} \big) \boldsymbol{\theta} \big\|_2
\end{equation}
\end{restatable}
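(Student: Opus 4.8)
The plan is to apply the Regularized Steepest Descent formula from Proposition~\ref{prop:rsd} to the norm $\|\cdot\|_{\text{ada}2}$ and verify that the resulting update is exactly the Adam step. Write $\mD := \text{Diag}(\sqrt{\vv_t} + \epsilon)$, which is symmetric positive definite since every diagonal entry $\sqrt{(\vv_t)_i} + \epsilon > 0$. Then $\|\boldsymbol{\theta}\|_{\text{ada}2} = \sqrt{\langle \mD \boldsymbol{\theta}, \boldsymbol{\theta}\rangle} = \|\mD^{1/2}\boldsymbol{\theta}\|_2$, so $\|\cdot\|_{\text{ada}2}$ is simply a weighted (Mahalanobis) Euclidean norm, and the whole statement reduces to computing the dual norm and ${\sf LMO}$ of such a norm.

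For the dual norm and ${\sf LMO}$ I would change variables to reduce to the ordinary $\ell_2$ case. Setting $\vz = \mD^{1/2}\vu$ gives $\|\vu\|_{\text{ada}2} = \|\vz\|_2$ and $\langle \vu, \vy\rangle = \langle \vz, \mD^{-1/2}\vy\rangle$, using that $\mD^{-1/2}$ is symmetric. Since over the unit $\ell_2$ sphere the maximizer of $\langle \vz, \vp\rangle$ gives $\|\vp\|_2$ and the minimizer is $\vz = -\vp/\|\vp\|_2$, taking $\vp = \mD^{-1/2}\vy$ yields the dual norm $\|\vy\|_{\text{ada}2, *} = \|\mD^{-1/2}\vy\|_2 = \sqrt{\langle \mD^{-1}\vy, \vy\rangle}$ and the oracle ${\sf LMO}_{\text{ada}2}(\vy) = -\mD^{-1}\vy / \|\mD^{-1/2}\vy\|_2 = -\mD^{-1}\vy/\|\vy\|_{\text{ada}2,*}$.

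Plugging these into the RSD update $\boldsymbol{\theta}_{t+1} = \boldsymbol{\theta}_t + \eta \|\vm_t\|_* {\sf LMO}(\vm_t)$ with $\vy = \vm_t$, the dual-norm prefactor $\|\vm_t\|_{\text{ada}2, *}$ cancels exactly against the normalizing denominator inside the oracle, leaving $\boldsymbol{\theta}_{t+1} = \boldsymbol{\theta}_t - \eta\, \mD^{-1}\vm_t$. Finally, since $\mD^{-1} = \text{Diag}(1/(\sqrt{\vv_t}+\epsilon))$, the product $\mD^{-1}\vm_t$ is precisely the element-wise quotient $\vm_t/(\sqrt{\vv_t}+\epsilon)$, recovering the Adam step $\boldsymbol{\theta}_{t+1} = \boldsymbol{\theta}_t - \eta\, \vm_t/(\sqrt{\vv_t}+\epsilon)$, as claimed.

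I do not expect a serious obstacle: the only genuine computation is the dual norm and ${\sf LMO}$ of a weighted Euclidean norm, which the substitution $\vz = \mD^{1/2}\vu$ handles routinely. The one point that warrants care is the cancellation of the dual-norm prefactor with the oracle's denominator—this is exactly what makes the \emph{regularized} (rather than constrained) variant reproduce Adam, and it is instructive to contrast this with Proposition~\ref{prop:adam_infty}, where the constrained formulation together with an $\ell_\infty$-type norm recovers the same Adam step but through a normalized, sign-like update instead.
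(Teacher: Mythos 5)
Your proof is correct and follows essentially the same route as the paper: both reduce $\|\cdot\|_{\text{ada}2}$ to a weighted Euclidean norm via the substitution $\vz = \mD^{1/2}\vu$, compute its dual norm and ${\sf LMO}$, and observe that the dual-norm prefactor in the RSD update cancels the oracle's normalization to leave the preconditioned step $-\eta\,\mD^{-1}\vm_t$. The only cosmetic difference is that the paper packages the change of variables as a separate lemma (Lemma~\ref{lem:linear_norm}, applied with $\mD = \text{Diag}(\sqrt{\sqrt{\vv_t}+\epsilon})$) while you inline that computation.
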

Thus \alg{Adam} can be interpreted as either an adaptive trust-region sign descent
\citep{dissecting_adam, secret_sauce} or preconditioned gradient descent \citep{momo}. A
distinctive feature of these forms of steepest descent is that the norm changes over
iterations.
%This means that these algorithms do not fit into the existing theoretical analyses of steepest descent \citep{kelner2014almost}, but this will not affect our goal of writing each iteration of Adam/{\alg{Muon}}  as steepest descent.

\subsection{The Whole Framework}
For a given neural network, we partition the parameters as $\mW = (\mW^1, \ldots, \mW^L,
\boldsymbol{\theta})$, where $\mW^1, \ldots, \mW^L$ are the hidden weight matrices and
$\boldsymbol{\theta}$ contains the  remaining parameters flattened into a single vector.
% The practical implementation of
{\alg{MuonAdam}}  applies ${\sf LMO}$ updates w.r.t. the
spectral norm for the hidden weight matrices, and uses \alg{Adam} for the remainder of
the parameters, with two separate learning rates for these side-by-side optimizers,
shown in Algorithm \ref{alg:muonadam} (Appendix \ref{app:steepest_proofs}).

~\\~\\ % extra lines to fix gray-box overlap over multiple pages
\begin{restatable}{proposition}{propmuon}\label{prop:muon}
% \begin{proposition} \label{prop:muon}
 \alg{MuonAdam} (Algorithm \ref{alg:muonadam}) is exactly CSD with step
size $\eta_m$ with respect to
\begin{equation}
    \|\mW\|_{\text{muon}} = \max \left( \max_{\ell \in [L]} \|\mW^{\ell}\|_{2 \rightarrow 2}, \tfrac{\eta_m}{\eta_b} \|\boldsymbol{\theta}\|_{\text{ada}\infty} \right).
\end{equation}
\end{restatable}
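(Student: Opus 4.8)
The plan is to recognize $\|\cdot\|_{\text{muon}}$ as an $\infty$-aggregated product norm over the blocks $(\mW^1,\ldots,\mW^L,\boldsymbol{\theta})$ and then read off the CSD update block-by-block using the machinery already in place. Concretely, I would take the aggregating norm to be $f = \|\cdot\|_{\infty}$ on $\R^{L+1}$, the per-block norms to be the spectral norm $g_\ell = \|\cdot\|_{2\rightarrow 2}$ for each hidden matrix $\mW^\ell$, and $g_\theta = \tfrac{\eta_m}{\eta_b}\|\cdot\|_{\text{ada}\infty}$ for the flattened remaining parameters. With these choices $h = f(g_1,\ldots,g_L,g_\theta)$ is exactly $\|\cdot\|_{\text{muon}}$, and each $g_i$ is a genuine norm (the spectral norm trivially, and a positive scalar multiple of the norm $\|\cdot\|_{\text{ada}\infty}$ from Proposition \ref{prop:adam_infty}), so Lemma \ref{lem:product_norm} applies.

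Next I would invoke the $\infty$-product CSD formula \eqref{eq:csd_infty}, which follows from Lemma \ref{lem:product_norm} because here $\phi_i = 1$ for all $i$: the vector ${\sf LMO}_{\|\cdot\|_\infty}$ is evaluated at $(g_{1,*}(\vm^1_t),\ldots,g_{\theta,*}(\vm^\theta_t))$, whose entries are nonnegative, so ${\sf LMO}_{\|\cdot\|_\infty}$ returns $-\vone$ and hence $\phi = \vone$. This decouples the single CSD step of size $\eta_m$ into independent per-block updates
\begin{equation}
\mW^\ell_{t+1} = \mW^\ell_t + \eta_m\, {\sf LMO}_{\|\cdot\|_{2\rightarrow 2}}(\vm^\ell_t), \qquad \boldsymbol{\theta}_{t+1} = \boldsymbol{\theta}_t + \eta_m\, {\sf LMO}_{g_\theta}(\vm^\theta_t).
\end{equation}

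I would then identify each block with the corresponding component of \alg{MuonAdam}. For the weight matrices, ${\sf LMO}_{\|\cdot\|_{2\rightarrow 2}}(\vm^\ell_t)$ is the negated polar factor of $\vm^\ell_t$, so the matrix update is exactly the \alg{Muon} step with learning rate $\eta_m$. For $\boldsymbol{\theta}$, the key step is the homogeneity of the ${\sf LMO}$ under rescaling the norm: for any $c>0$ we have ${\sf LMO}_{c\|\cdot\|}(\vv) = \tfrac{1}{c}{\sf LMO}_{\|\cdot\|}(\vv)$, since scaling the norm by $c$ shrinks its unit sphere by $1/c$. Taking $c = \eta_m/\eta_b$ converts the $\boldsymbol{\theta}$ update into $\boldsymbol{\theta}_t + \eta_b\,{\sf LMO}_{\|\cdot\|_{\text{ada}\infty}}(\vm^\theta_t)$, which by Proposition \ref{prop:adam_infty} is precisely the \alg{Adam} update with learning rate $\eta_b$. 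Combining the two blocks reproduces Algorithm \ref{alg:muonadam} exactly.

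The main obstacle—really the crux of the statement—is the rescaling argument that encodes two different learning rates ($\eta_m$ for the matrices and $\eta_b$ for \alg{Adam}) into a single global CSD step size $\eta_m$ by baking the ratio $\eta_m/\eta_b$ into the $\boldsymbol{\theta}$-block norm; one must track this factor through the homogeneity of the ${\sf LMO}$ so that the two learning rates land where intended. A secondary technical point is that $\|\cdot\|_{\text{ada}\infty}$ involves $|\vm_t|$ in its defining diagonal, so one should either restrict to iterations where these entries are nonzero or adopt the same convention used when establishing Proposition \ref{prop:adam_infty}; this does not affect the update direction.
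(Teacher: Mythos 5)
Your proof is correct and follows essentially the same route as the paper: decompose $\|\cdot\|_{\text{muon}}$ as a product norm, apply Lemma~\ref{lem:product_norm} to obtain block-wise updates with coefficients $\phi_i$, and identify the blocks with the \alg{Muon} and \alg{Adam} steps. The only difference is bookkeeping: you absorb the factor $\eta_m/\eta_b$ into the per-block norm $g_\theta$ and recover the learning rate $\eta_b$ via the homogeneity ${\sf LMO}_{c\|\cdot\|}(\vv)=\tfrac{1}{c}{\sf LMO}_{\|\cdot\|}(\vv)$, whereas the paper keeps $g_{L+1}=\|\cdot\|_{\text{ada}\infty}$ unscaled, absorbs the ratio into the aggregating norm $f(\vv)=\|\mD\vv\|_{\infty}$, and recovers $\eta_b$ through the coefficient $\phi_{L+1}=\eta_b/\eta_m$ computed with Lemma~\ref{lem:linear_norm} --- the two bookkeeping choices are equivalent, and both share the same implicit genericity caveat that the dual norms entering ${\sf LMO}_{\infty}$ are strictly positive.
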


The coefficient $\eta_m/\eta_b$ effectively allows for the use of different 
learning rates for hidden weight matrices compared to all other parameters; this
is a crucial feature of {\alg{Muon}}'s speedrun implementation \citep{muon} and
of other variations \citep{scion, moonlight}.

Proposition \ref{prop:muon} shows the precise sense in which {\alg{MuonAdam}}  is a
steepest descent algorithm: it is constrained steepest descent with respect to a
particular product norm that aggregates the spectral norm of each hidden weight matrix
and an adaptive $\ell_{\infty}$ norm for all other parameters. This still leaves several
other valid choices within 
% In deriving this perspective, we
% identified three design decisions that constitute
our general steepest descent framework to explore: whether to use constrained or
regularized steepest descent, which product norm to use ($\|\cdot\|_{\infty},
\|\cdot\|_2$), and which norm to assign to the non-matrix parameters
($\|\cdot\|_{\text{ada}\infty}, \|\cdot\|_{\text{ada}2}, \|\cdot\|_{\infty}$).
% besides hidden weight matrices.
% For each of these design decisions, we have so far discussed:
% \begin{itemize}
%     \item Steepest Descent formulation: Constrained, regularized
%     \item Product norm: $\|\cdot\|_{\infty}, \|\cdot\|_2$
%     \item Norm for non-matrix parameters: $\|\cdot\|_{\text{ada}\infty}, \|\cdot\|_{\text{ada}2}, \|\cdot\|_{\infty}$
% \end{itemize}

These three factors yield a design space for {\alg{Muon}}-type optimization algorithms,
all of which are founded on the principle of steepest descent, and most of which are
unexplored. Among these algorithms are several existing variations of {\alg{Muon}},
including \alg{Scion} \citep{scion} and \alg{PolarGrad} \citep{polargrad} (see Appendix
\ref{app:recover} for the full statements).
\psqueeze

\paragraph{Stale dual norms.}
Many of the updates we have presented so far require calculating dual norms of the
momentum buffers (e.g. \Eqref{eq:rsd_infty} through \Eqref{eq:rsd_2}). If that norm is
the spectral norm, this amounts to computing the nuclear norm of the momentum, which may
appear costly, but actually the dual norm is easy to compute once we have computed the
${\sf LMO}$, since $\|\vv\|_* = \langle -{\sf LMO}(\vv), \vv \rangle$. However, in the
case that updates are not separable across parameters, computing the dual norms of each
momentum buffer in this way requires either additional memory (to store the layer-wise
${\sf LMO}$s) or additional time (to compute the ${\sf LMO}$s twice). To see why,
consider the example of RSD with the $\|\cdot\|_{\infty}$ product norm
(\Eqref{eq:rsd_infty}), and assume for simplicity that all parameters are assigned the
spectral norm. For each layer $i$, the update $\mW_{t+1}^i = \mW_t^i - \eta \left(
\sum_{j=1}^L \|\mM_t^j\|_{\text{nuc}} \right) \text{polar}(\mM_t^i)$ cannot be executed
until $\|\mM_t^j\|_{\text{nuc}} = \langle \text{polar}(\mM_t^j), \mM_t^j \rangle$ has
been computed for every layer $j$. Crucially, the polar factors are used twice here:
once to compute dual norms, and again to update weights. So, we can either store the
polar factors for reuse (extra memory), or compute them twice (extra time); these
options are sketched in the first two columns below.

% --- More colorful python stlye pseudocode ---
% \begin{figure}[h]
% \centering
\begin{minipage}[t]{0.30\textwidth}
\centering \textbf{Extra Memory}
\begin{lstlisting}[style=mypython]
d = 0
lmos = {}
for i in range(1, L+1):
    lmos[i] = -polar(M[i])
    d -= lmos[i].dot(M[i])

for i in range(1, L+1):
    W[i] += eta * d * lmos[i]
\end{lstlisting}
\end{minipage}
\hfill
% \hspace{0.1\textwidth}
\begin{minipage}[t]{0.26\textwidth}
\centering \textbf{Extra Time}
\begin{lstlisting}[style=mypython]
d = 0
for i in range(1, L+1):
    lmo = -polar(M[i])
    d -= lmo.dot(M[i])

for i in range(1, L+1):
    lmo = polar(M[i])
    W[i] += eta * d * lmo 
\end{lstlisting} 
\end{minipage}
\hfill
\begin{minipage}[t]{0.30\textwidth}
\centering \textbf{Stale Norms}
\begin{lstlisting}[style=mypython]
new_d = 0
for i in range(1, L+1):
    lmo = -polar(M[i])
    new_d -= lmo.dot(M[i])
    W[i] += eta * old_d * lmo
old_d = new_d
\end{lstlisting} 
\hspace{-0.2cm}
\end{minipage}
% \vspace{-2mm}
% \caption{Variants of the update with different memory/time tradeoffs.}
% \end{figure}

% \begin{figure}[h]
% {\centering
% \begin{minipage}[t]{0.34\textwidth}
% \begin{center} Extra Memory \end{center}
% \codefont{\begin{verbatim}
% d = 0, lmos = []
% for i = 1 to L:
%     lmos[i] = -polar(M[i])
%     d -= lmos[i].dot(M[i])
    
% for i = 1 to L:
%     W[i] += eta*d*lmos[i]
% \end{verbatim}}
% \end{minipage}
% \hfill
% \begin{minipage}[t]{0.32\textwidth}
% \begin{center} Extra Time \end{center}
% \codefont{\begin{verbatim}
% d = 0
% for i = 1 to L:
%     lmo = -polar(M[i])
%     d -= lmo.dot(M[i])
    
% for i = 1 to L:
%     lmo = polar(M[i])
%     W[i] += eta*d*lmo
% \end{verbatim}}
% \end{minipage}
% \hfill
% \begin{minipage}[t]{0.32\textwidth}
% \begin{center} Stale Norms \end{center}
% \codefont{\begin{verbatim}
% new_d = 0
% for i = 1 to L:
%     lmo = -polar(M[i])
%     new_d -= lmo.dot(M[i])
%     W[i] += eta*old_d*lmo
% old_d = new_d
% \end{verbatim}}
% \end{minipage}}
% \end{figure}

The first option requires additional memory proportional to the size of the network,
while the second option doubles the wall-clock time needed to compute polar factors. As
an efficient approximation, we propose to reuse momentum dual norms from the previous
step (shown in the third column), which can be implemented without storing or
recomputing polar factors, and only requires a single additional scalar of memory for
each layer. We found in our experiments that using these ``stale" dual norms had near
negligible effect on performance. Informally, we expect this approximation to work on
the grounds that the momentum doesn't change too drastically in a single step, since
\begin{equation}
    \vm_t - \vm_{t-1} = \beta \vm_{t-1} + (1 - \beta) \vg_t - \vm_{t-1} = (1 - \beta) (\vg_t - \vm_{t-1})
\end{equation}
has small magnitude when $\beta$ is close to 1. \vspace{-0.43cm}
\paragraph{A New Product Norm.} Our proposed algorithm {\alg{MuonMax}} is regularized
steepest descent with respect to the following norm:
\begin{equation} \label{eq:muonmax_norm}
    \|\mW\|_{\text{MM}} := \textstyle\sqrt{ \big( \max_{\ell \in [L]} \|\mW^{\ell}\|_{2 \rightarrow 2} \big)^2 + \tfrac{\eta_m}{\eta_b} \|\boldsymbol{\theta}\|_{\text{ada2}}^2 }
\end{equation}
% This will be demonstrated in Proposition \ref{prop:muonmax}.
This norm comes from assigning $\|\cdot\|_{\text{ada2}}$ to the non-matrix parameters, spectral norm to the matrix parameters, then aggregating both using a combination of the $\ell_{\infty}$ and $\ell_2$ norms. We denote the corresponding product norm as $\|\cdot\|_{\text{hyb}}$, defined in \Eqref{eq:hyb_norm_def} of Appendix \ref{app:experiment_details}.
\psqueeze

% and using the following ``hybrid" product norm:
% \begin{equation}
%     \|(v_1, \ldots, v_L, v_{L+1})\|_{\text{hyb}} = \sqrt{ \big( \max_{\ell \in [L]} v_{\ell} \big)^2 + \tfrac{\eta_m}{\eta_b} v_{L+1}^2 }.
% \end{equation}

\section{Model Truncation} \label{sec:truncation}
Beyond a more solid theoretical footing for Muon-type algorithms, our steepest descent
framework also offers practical benefits: techniques designed for SGD (or normalized
SGD) can now be easily adapted for Muon-type algorithms by generalizing to arbitrary
norms. In this section, we generalize \alg{Momo} \citep{momo} for steepest descent with
respect to arbitrary norms.
%which can then be immediately applied to any algorithm in our framework to increase training stability and improve robustness to the choice of learning rates.

Recall that both variations of steepest descent are motivated by locally minimizing a
first-order Taylor approximation of the loss around the current iterate. \alg{Momo}
makes use of \emph{model truncation}~\citep{AsiDuchi2019PNAS}, which leverages knowledge
of a loss lower bound $F_*$ to construct a better approximation of the loss which is
more accurate than a Taylor approximation.
% ; and (2) whose minimum has an efficiently computable closed form. 
In \alg{Momo}, this model also incorporates information from the history of gradients
and losses through momentum.

Denote $\rho_{i,t} = (1 - \beta) \beta^{t-i}$, so that $\vm_t = \sum_{i=0}^t \rho_{i,t}
\vg_i$, and denote by $F_t(\vw_t)$ the minibatch loss at step $t$. Then for each $t$, we
can build a model of the loss around $\vw_t$ as a weighted average of first-order Taylor
approximations centered at each iterate $\vw_i$:
\begin{align}
    F(\vw) &\approx \textstyle \sum_{i=0}^t \rho_{t,i} \left( F_i(\vw_i) + \langle \vg_i, \vw - \vw_i \rangle \right) \\
    &= \textstyle \sum_{i=0}^t \rho_{t,i} \left( F_i(\vw_i) + \langle \vg_i, \vw_t - \vw_i \rangle \right) + \sum_{i=0}^t \rho_{t,i} \langle \vg_i, \vw - \vw_t \rangle \\
    &= \textstyle \tilde{F}_t + \langle \vm_t, \vw - \vw_t \rangle,
\end{align}
where on the last line we denoted $\tilde{F}_t := \sum_{i=0}^t \rho_{t,i} \left(
F_i(\vw_i) + \langle \vg_i, \vw_t - \vw_i \rangle \right)$. Since $F(\vw) \geq F_*$ for
all $\vw$, we can improve our model by truncating, or clipping, it at $F_*$:
% \begin{equation}
$$ \textstyle
    F(\vw) \approx \max \left( \tilde{F}_t + \langle \vm_t, \vw - \vw_t \rangle, F_* \right).
    $$
% \end{equation}
Our truncated steepest descent methods, shown below, arise from minimizing this
truncated model either with a norm ball constraint or with squared norm regularization.

\begin{restatable}{proposition}{propmomocsd}[Constrained Momo]\label{prop:momo_csd}
% \begin{proposition}[Constrained Momo]
The ball constrained truncated model update is given by
\begin{align}
    \vw_{t+1} &= \argmin_{\|\vw - \vw_t\| \leq \eta} \left\{ \max \left( \tilde{F}_t + \langle \vm_t, \vw - \vw_t \rangle, F_* \right) \right\} \\
% \end{align}
% has the following closed form
% \begin{align}
%     \vw_{t+1}
&=    \vw_t + \min \left( \eta, \tfrac{\tilde{F}_t - F_*}{\|\vm_t\|_*} \right) {\sf LMO}(\vm_t)
\end{align}
\end{restatable}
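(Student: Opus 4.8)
The plan is to change variables to the displacement $\vd = \vw - \vw_t$, so that the problem becomes $\min_{\|\vd\| \leq \eta} \max\!\left(\tilde{F}_t + \langle \vm_t, \vd \rangle, F_*\right)$, with $\tilde{F}_t$, $F_*$, and $\vm_t$ all fixed. Because only the inner linear term $\langle \vm_t, \vd \rangle$ depends on $\vd$, and $\max(\cdot, F_*)$ is nondecreasing in its first argument, minimizing the objective amounts to driving $\langle \vm_t, \vd \rangle$ as negative as possible \emph{until} we hit the floor $F_*$, after which further decrease is wasted. This reduces everything to a one-dimensional question along the ray $\vd = s\,{\sf LMO}(\vm_t)$, $s \in [0, \eta]$. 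Rather than use first-order conditions (the inner $\max$ is nonsmooth and the minimizer is generally non-unique once the floor is reached), I would prove optimality by exhibiting a lower bound on the objective and checking that the claimed point attains it.

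For the lower bound, I would invoke the generalized Cauchy--Schwarz inequality for dual norms, $\langle \vm_t, \vd \rangle \geq -\|\vm_t\|_* \|\vd\|$, together with $\|\vd\| \leq \eta$, to obtain $\tilde{F}_t + \langle \vm_t, \vd \rangle \geq \tilde{F}_t - \eta \|\vm_t\|_*$ for every feasible $\vd$. Applying the monotone map $\max(\cdot, F_*)$ to both sides then gives the clean bound
\begin{equation}
    \max\!\left(\tilde{F}_t + \langle \vm_t, \vd \rangle, F_*\right) \;\geq\; \max\!\left(\tilde{F}_t - \eta\|\vm_t\|_*,\, F_*\right)
\end{equation}
valid over the whole feasible set.

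For attainment, I would set $s^\star = \min\!\left(\eta, \tfrac{\tilde{F}_t - F_*}{\|\vm_t\|_*}\right)$ and $\vd^\star = s^\star\,{\sf LMO}(\vm_t)$. Feasibility is immediate since ${\sf LMO}(\vm_t)$ has unit norm, so $\|\vd^\star\| = s^\star \leq \eta$; and by the defining identity $\langle {\sf LMO}(\vm_t), \vm_t \rangle = -\|\vm_t\|_*$ we get $\langle \vm_t, \vd^\star \rangle = -s^\star\|\vm_t\|_*$. Splitting into the two cases determined by whether $\tfrac{\tilde{F}_t - F_*}{\|\vm_t\|_*}$ is at most or greater than $\eta$, a short computation shows that in the first case the objective value at $\vd^\star$ equals $F_*$ and in the second it equals $\tilde{F}_t - \eta\|\vm_t\|_*$; in both cases this matches the lower bound above, so $\vd^\star$ is a minimizer and translating back yields $\vw_{t+1} = \vw_t + s^\star\,{\sf LMO}(\vm_t)$ as claimed.

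The main thing to handle carefully is not a hard estimate but the nonsmoothness and non-uniqueness of the truncated objective: the lower-bound-plus-attainment argument is what lets us sidestep stationarity conditions and still single out the specific minimizer in the statement. I would also flag the degenerate cases separately, namely $\|\vm_t\|_* = 0$ (where ${\sf LMO}$ is not uniquely defined and any feasible point is optimal) and $\tilde{F}_t < F_*$ (where the truncation is already active at $\vd = 0$); the latter is ruled out, or clipped to a nonnegative step, by the standard \alg{Momo} convention, and I would state that assumption so the ratio $\tfrac{\tilde{F}_t - F_*}{\|\vm_t\|_*}$ is nonnegative.
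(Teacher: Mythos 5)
Your argument is correct, but it certifies optimality by a different route than the paper. The paper reparametrizes the feasible set by magnitude $r = \|\vw - \vw_t\|$ and direction $\boldsymbol{\Delta} = (\vw - \vw_t)/\|\vw - \vw_t\|$, argues by monotonicity of the truncated objective in $\langle \vm_t, \boldsymbol{\Delta}\rangle$ that the optimal direction is ${\sf LMO}(\vm_t)$, and then solves the remaining one-dimensional problem $\min_{r \in [0,\eta]} \max\big(\tilde{F}_t - r\|\vm_t\|_*, F_*\big)$ by explicit case analysis, selecting the minimal-norm $r$ when the floor makes the minimizer non-unique. You instead produce a direct certificate: the dual-norm inequality $\langle \vm_t, \vd\rangle \geq -\|\vm_t\|_*\|\vd\|$ gives the uniform lower bound $\max\big(\tilde{F}_t - \eta\|\vm_t\|_*, F_*\big)$ over the whole ball, and your candidate $\vd^\star = s^\star\,{\sf LMO}(\vm_t)$ attains it in both cases. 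The two arguments rest on the same underlying facts (the defining identity of the dual norm and its attainment by the LMO), but your lower-bound-plus-attainment route sidesteps the mildly delicate step of splitting a joint $\argmin$ over $(r,\boldsymbol{\Delta})$ into sequential argmins of a nonsmooth objective, which is a genuine simplification. What it does not directly establish, and the paper's decomposition makes immediate, is that $\vd^\star$ is specifically the \emph{minimal-distance} minimizer invoked in the remark following the proposition; this follows from one more application of the same inequality, since any optimal $\vd$ in the truncated regime must satisfy $\langle \vm_t, \vd\rangle \leq -(\tilde{F}_t - F_*)$ and hence $\|\vd\| \geq (\tilde{F}_t - F_*)/\|\vm_t\|_*$. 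Your flagged degenerate cases ($\|\vm_t\|_* = 0$ and $\tilde{F}_t < F_*$) are real and are also left implicit in the paper's proof, so making the nonnegativity convention explicit is a welcome addition rather than a deviation.
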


The $\argmin$ above can have multiple solutions: we take the one that has
minimal distance to $\vw_t$.

\begin{restatable}{proposition}{propmomorsd}[Regularized Momo]\label{prop:momo_rsd}
% \begin{proposition}[Regularized Momo]
The regularized truncated model update is given by
\begin{align}
    \vw_{t+1} &= \argmin_{\vw} \left\{ \max \left( \tilde{F}_t + \langle \vm_t, \vw - \vw_t \rangle, F_* \right) + \tfrac{1}{2 \eta} \|\vw - \vw_t\|^2 \right\}\\
% \end{equation}
% has the following closed form
% \begin{equation}
    % \vw_{t+1} 
    &= \vw_t + \min \left( \eta, \tfrac{\tilde{F}_t - F_*}{\|\vm_t\|_*^2} \right) \|\vm_t\|_* {\sf LMO}(\vm_t)
\end{align}
\end{restatable}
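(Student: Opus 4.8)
The plan is to substitute the displacement $\vd := \vw - \vw_t$ and minimize
\[
\Phi(\vd) := \max\!\big(\tilde{F}_t + \langle \vm_t, \vd \rangle,\, F_*\big) + \tfrac{1}{2\eta}\|\vd\|^2
\]
over $\vd \in \R^d$, working in the regime $\tilde{F}_t \geq F_*$ (the regime in which truncation is meaningful and the claimed coefficient is nonnegative). Rather than differentiating $\Phi$ — which is awkward because $\|\cdot\|$ is a general, possibly non-smooth, non-Euclidean norm — I would reduce the problem to a one-dimensional minimization along the descent ray by exploiting the dual-norm inequality, and then recognize that the bound is tight exactly along the direction ${\sf LMO}(\vm_t)$.

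Concretely, for every $\vd$ the dual-norm inequality gives $\langle \vm_t, \vd \rangle \geq -\|\vm_t\|_* \|\vd\|$, and since $u \mapsto \max(u, F_*)$ is nondecreasing,
\[
\Phi(\vd) \;\geq\; \max\!\big(\tilde{F}_t - \|\vm_t\|_* \|\vd\|,\, F_*\big) + \tfrac{1}{2\eta}\|\vd\|^2 \;=\; \psi(\|\vd\|),
\]
where $\psi(r) := \max(\tilde{F}_t - \|\vm_t\|_* r,\, F_*) + \tfrac{1}{2\eta} r^2$ for $r \geq 0$. Hence $\inf_{\vd} \Phi(\vd) \geq \min_{r \geq 0} \psi(r)$. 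The crucial observation is that this bound is attained along the LMO ray: taking $\vd = r\,{\sf LMO}(\vm_t)$ gives $\|\vd\| = r$ and $\langle \vm_t, \vd \rangle = -r\|\vm_t\|_*$ (using $\|{\sf LMO}(\vm_t)\| = 1$ and $\langle {\sf LMO}(\vm_t), \vm_t \rangle = -\|\vm_t\|_*$), so $\Phi(r\,{\sf LMO}(\vm_t)) = \psi(r)$ exactly. Minimizing $\Phi$ therefore reduces to minimizing the scalar $\psi$, with the minimizer realized at $\vd^* = r^*\,{\sf LMO}(\vm_t)$.

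It then remains to minimize the convex piecewise function $\psi$. On the branch where the Taylor term dominates, i.e. $r \leq (\tilde{F}_t - F_*)/\|\vm_t\|_*$, we have $\psi(r) = \tilde{F}_t - \|\vm_t\|_* r + \tfrac{1}{2\eta} r^2$, whose unconstrained minimizer is $r = \eta\|\vm_t\|_*$; on the branch where the constant $F_*$ dominates, $\psi(r) = F_* + \tfrac{1}{2\eta} r^2$ is increasing. Comparing the unconstrained minimizer against the kink location $(\tilde{F}_t - F_*)/\|\vm_t\|_*$ yields $r^* = \eta\|\vm_t\|_*$ when $\eta \leq (\tilde{F}_t - F_*)/\|\vm_t\|_*^2$ and $r^* = (\tilde{F}_t - F_*)/\|\vm_t\|_*$ otherwise, i.e. $r^* = \min\!\big(\eta, (\tilde{F}_t - F_*)/\|\vm_t\|_*^2\big)\,\|\vm_t\|_*$. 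Substituting into $\vd^* = r^*\,{\sf LMO}(\vm_t)$ and recalling $\vw_{t+1} = \vw_t + \vd^*$ produces the claimed update; the first branch coincides with the plain RSD minimizer of Proposition~\ref{prop:rsd}, a useful sanity check.

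The only genuinely delicate point is the reduction to one dimension for a non-Euclidean norm: the lower bound $\Phi(\vd) \geq \psi(\|\vd\|)$ holds for every $\vd$, but I must verify it is attained, which hinges on the dual-norm inequality being an equality precisely along ${\sf LMO}(\vm_t)$. This argument also sidesteps the possible non-uniqueness of the minimizer (the square of a general norm need not be strictly convex): I only assert that $\vd^* = r^*\,{\sf LMO}(\vm_t)$ attains the infimum, consistent with the LMO-aligned convention already used in Proposition~\ref{prop:rsd}. The remaining scalar minimization, including the boundary case $\eta = (\tilde{F}_t - F_*)/\|\vm_t\|_*^2$ where the two branches agree, is routine.
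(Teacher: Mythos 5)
Your proposal is correct and follows essentially the same route as the paper's proof: both reduce the vector problem to minimizing the scalar piecewise quadratic $\psi(r)=\max(\tilde{F}_t - r\|\vm_t\|_*, F_*)+\tfrac{r^2}{2\eta}$ along the ${\sf LMO}(\vm_t)$ ray and then compare the unconstrained quadratic minimizer $r=\eta\|\vm_t\|_*$ against the kink at $(\tilde{F}_t-F_*)/\|\vm_t\|_*$. The only cosmetic difference is that you justify the one-dimensional reduction via the dual-norm lower bound and its attainment on the LMO ray, whereas the paper uses the change of variables $r=\|\vw-\vw_t\|$, $\boldsymbol{\Delta}=(\vw-\vw_t)/\|\vw-\vw_t\|$ and monotonicity in $\langle\vm_t,\boldsymbol{\Delta}\rangle$; your explicit restriction to the regime $\tilde{F}_t\geq F_*$ is an assumption the paper's proof also makes implicitly.
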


The term $\tilde{F}_t$ relies on the history of previous gradients and losses, 
but it can be computed with a single scalar running average.
%which incurs negligible additional costs compared to non-truncated steepest descent.
Pseudocode  for both \alg{Momo} variations is shown in Algorithm \ref{alg:momo} of
Appendix \ref{app:momo_proofs}.

Now that we have shown how to use \alg{Momo} with respect to any norm, we can
immediately combine \alg{Momo} with any steepest descent algorithm in our framework,
including \alg{MuonAdam}. For example, our proposed algorithm \alg{MuonMax}-\alg{Momo}
(Algorithm \ref{alg:muonmax} in Appendix \ref{app:momo_proofs}) can be written as
Regularized \alg{Momo} w.r.t. $\|\cdot\|_{\text{MM}}$ (defined in
\Eqref{eq:muonmax_norm}) with stale dual norm approximations.

\begin{restatable}{proposition}{propmuonmax}[\alg{MuonMax}-\alg{Momo}] \label{prop:muonmax}
% Define
% \begin{equation}
%     \|\mW\|_{\text{MM}} = \sqrt{ \Big( \max_{\ell \in [L]} \|\mW^{\ell}\|_{2 \rightarrow 2} \Big)^2 + \tfrac{\eta_m}{\eta_b} \|\boldsymbol{\theta}\|_{\text{ada2}}^2 }.
% \end{equation}
Regularized \alg{Momo} with respect to the norm $\|\cdot\|_{\text{MM}}$ as defined in~\eqref{eq:muonmax_norm} has the following closed form:
\begin{equation} \label{eq:muonmax_momo}
\begin{aligned}
    d_t &= \textstyle \sqrt{ \Big( \sum_{\ell=1}^L \|\mM_t^{\ell}\|_{\text{nuc}} \Big)^2 + \tfrac{\eta_b}{\eta_m} \Big\| \tfrac{\vm_t^{\theta}}{\sqrt{\sqrt{\vv_t^{\theta}} + \epsilon}} \Big\|_2^2 } \\
    \mW_{t+1}^{\ell} &= \textstyle \mW_t^{\ell} - \min \Big\{ \eta_m, \tfrac{\tilde{F}_t - F_*}{d_t^2} \Big\} \Big( \sum_{j=1}^L \|\mM_t^j\|_{\text{nuc}} \Big) \text{polar}(\mM_t^{\ell}) \\
    \boldsymbol{\theta}_{t+1} &=  \textstyle\boldsymbol{\theta}_t - \min \Big\{\eta_b, \tfrac{\eta_b}{\eta_m} \tfrac{\tilde{F}_t - F_*}{d_t^2} \Big\} \tfrac{\vm_t^{\theta}}{\sqrt{\vv_t^{\theta}} + \epsilon}.
\end{aligned}
\end{equation}
\end{restatable}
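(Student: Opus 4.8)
The plan is to realize \alg{MuonMax}-\alg{Momo} as the Regularized \alg{Momo} update of Proposition~\ref{prop:momo_rsd} instantiated with the single step size $\eta = \eta_m$ and the norm $\|\cdot\|_{\text{MM}}$ of \eqref{eq:muonmax_norm}, and then to make that abstract update explicit by computing its only two norm-dependent ingredients: the dual norm $\|\vm_t\|_*$ and the oracle ${\sf LMO}(\vm_t)$. Since $\|\cdot\|_{\text{MM}}$ is exactly the product norm that assigns the spectral norm to each hidden matrix $\mW^{\ell}$, the norm $\|\cdot\|_{\text{ada2}}$ to $\boldsymbol{\theta}$, and aggregates them with the hybrid norm $f(a_1, \ldots, a_L, b) = \sqrt{(\max_{\ell}|a_{\ell}|)^2 + \tfrac{\eta_m}{\eta_b} b^2}$, I would apply Lemma~\ref{lem:product_norm} to reduce both ingredients to per-block data plus the dual and oracle of $f$. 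The lemma gives $\|\vm_t\|_* = f_*\big(\|\mM_t^1\|_{\text{nuc}}, \ldots, \|\mM_t^L\|_{\text{nuc}}, g_*(\vm_t^{\theta})\big)$ and ${\sf LMO}(\vm_t) = (\phi_1 {\sf LMO}_{2 \rightarrow 2}(\mM_t^1), \ldots, \phi_{L+1} {\sf LMO}_{\text{ada2}}(\vm_t^{\theta}))$ with $\phi = -{\sf LMO}_f$ evaluated at the same vector of block dual norms.

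Next I would record the per-block lookups. For the spectral norm the dual is the nuclear norm and ${\sf LMO}_{2 \rightarrow 2}(\mM) = -\text{polar}(\mM)$. For the adaptive block, $\|\cdot\|_{\text{ada2}}$ is the weighted Euclidean norm $\sqrt{\langle \text{Diag}(\sqrt{\vv_t^{\theta}} + \epsilon)\,\boldsymbol{\theta}, \boldsymbol{\theta}\rangle}$ (see \eqref{eq:adam_2}), so a one-line Cauchy--Schwarz argument in the weighted inner product gives the dual $g_*(\vm) = \|\vm/\sqrt{\sqrt{\vv_t^{\theta}} + \epsilon}\|_2$ and the oracle ${\sf LMO}_{\text{ada2}}(\vm) = -\big(\vm/(\sqrt{\vv_t^{\theta}} + \epsilon)\big)/g_*(\vm)$.

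The crux is the hybrid aggregating norm $f$. I would first show its dual is the ``swapped'' combination $f_*(y_1, \ldots, y_L, s) = \sqrt{\|y\|_1^2 + \tfrac{\eta_b}{\eta_m} s^2}$, using that the dual of an $\ell_2$-aggregation of two block norms is the $\ell_2$-aggregation of their duals, together with $(\|\cdot\|_{\infty})_* = \|\cdot\|_1$ and $(\sqrt{c}\,|\cdot|)_* = |\cdot|/\sqrt{c}$ for $c = \eta_m/\eta_b$. Evaluating $f_*$ at the block dual norms then yields $\|\vm_t\|_* = d_t$, matching the definition of $d_t$ in \eqref{eq:muonmax_momo}. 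For the oracle I would use ${\sf LMO}_f(z) = -\nabla f_*(z)$ at points of differentiability, so $\phi = \nabla f_*(z)$; because every block dual norm is nonnegative, $\partial \|y\|_1 / \partial y_{\ell} = 1$ at $z$, giving the common matrix coefficient $\phi_{\ell} = \big(\sum_j \|\mM_t^j\|_{\text{nuc}}\big)/d_t$ for all $\ell \in [L]$ and $\phi_{L+1} = \tfrac{\eta_b}{\eta_m} g_*(\vm_t^{\theta})/d_t$. I expect this LMO-of-$f$ step to be the main obstacle, since it is the only computation that is not a direct table lookup: one must identify the shared matrix coefficient and correctly track the $\eta_m/\eta_b$ weight as it passes to the dual.

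Finally I would assemble. For each matrix block, multiplying $\phi_{\ell}$ by ${\sf LMO}_{2 \rightarrow 2}(\mM_t^{\ell}) = -\text{polar}(\mM_t^{\ell})$ and substituting into the Regularized \alg{Momo} step $\vw_{t+1} = \vw_t + \min(\eta_m, (\tilde{F}_t - F_*)/\|\vm_t\|_*^2)\,\|\vm_t\|_*\,{\sf LMO}(\vm_t)$ makes the factors of $d_t$ cancel and reproduces the matrix update of \eqref{eq:muonmax_momo} verbatim. For the $\boldsymbol{\theta}$ block, the factor $g_*(\vm_t^{\theta})$ in $\phi_{L+1}$ cancels against the one in ${\sf LMO}_{\text{ada2}}$, leaving a prefactor $\tfrac{\eta_b}{\eta_m}\min(\eta_m, (\tilde{F}_t - F_*)/d_t^2)$, which I would rewrite as $\min(\eta_b, \tfrac{\eta_b}{\eta_m}(\tilde{F}_t - F_*)/d_t^2)$ by pulling the positive constant $\eta_b/\eta_m$ inside the minimum, yielding exactly the stated $\boldsymbol{\theta}$ update. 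This confirms that the single-$\eta$ Regularized \alg{Momo} step with $\eta = \eta_m$ realizes both per-block learning rates through the $\eta_m/\eta_b$ weight baked into $\|\cdot\|_{\text{MM}}$; the stale-norm variant is then obtained by replacing the within-step quantities $\sum_j \|\mM_t^j\|_{\text{nuc}}$ and $d_t$ by their previous-step values.
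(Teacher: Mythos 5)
Your proposal is correct and follows essentially the same route as the paper's proof: instantiate the Regularized \alg{Momo} update of Proposition~\ref{prop:momo_rsd} with $\eta=\eta_m$, then use Lemma~\ref{lem:product_norm} to reduce the dual norm and ${\sf LMO}$ of $\|\cdot\|_{\text{MM}}$ to per-block quantities, identify $d_t=\|\mM_t\|_{\text{MM},*}$, and cancel the $d_t$ and $g_*(\vm_t^{\theta})$ factors to recover \eqref{eq:muonmax_momo}. The only (cosmetic) differences are that you apply the product-norm lemma once to a flat $(L+1)$-block decomposition with the hybrid aggregator rather than the paper's nested two-level composition, and you obtain ${\sf LMO}_f$ as $-\nabla f_*$ at points of differentiability instead of via Lemma~\ref{lem:linear_norm}; both yield the same coefficients $\phi_\ell$ (up to the same measure-zero caveat when some $\|\mM_t^{\ell}\|_{\text{nuc}}=0$, which the paper's $\sign$ computation also glosses over).
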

The update in Proposition \ref{prop:muonmax} matches that of Algorithm
\ref{alg:muonmax} except for the use of stale dual norms. \psqueeze
% in Algorithm \ref{alg:muonmax}.

\section{Experiments}
Here we provide a comprehensive evaluation of optimizers arising from our
steepest descent framework for training language models. We start by tuning and
evaluating 36 optimizer variations arising from different choices of normalization,
product norm, norm for the non-matrix parameters, and whether to use
model truncation. For this initial phase of evaluating all variations, we use 1B tokens
from the FineWeb dataset to train a GPT2-Small model with 124M params (Section
\ref{sec:fineweb}). We take the four best performing methods (\alg{MuonAdam}, \alg{Scion},
\alg{MuonAdam}-\alg{Momo}, \alg{MuonMax}-\alg{Momo}) and evaluate them for a GPT2-Large
model with 774M params on the SlimPajama dataset (Section \ref{sec:slim_pajama}), first
by thoroughly tuning all four algorithms with 1B tokens, then running a final evaluation
of \alg{Muon} and \alg{MuonMax}-\alg{Momo} with 6B tokens. Finally, in Section
\ref{sec:ablation} we perform two ablation studies: we examine the sensitivity of
\alg{Momo} variants to the choice of the loss lower bound  $F_*$, and we evaluate
the effect of stale nuclear norm approximations on final loss and wall-clock time.
% Each training run was executed on a single node with four NVIDIA H100 GPUs with 80GB memory each. % goes into reproducibility statement

\subsection{FineWeb Dataset} \label{sec:fineweb}
To identify the strongest methods within our framework, we thoroughly tune and evaluate
36 variations that arise from mixing and matching settings for the following design
choices: constrained vs regularized steepest descent, product norm ($\|\cdot\|_{\infty},
\|\cdot\|_2, \|\cdot\|_{\text{hyb}}$), norm for parameters besides hidden weight
matrices ($\|\cdot\|_{\infty}, \|\cdot\|_{\text{ada}\infty}, \|\cdot\|_{\text{ada2}}$),
%and vanilla vs \alg{Momo}.
and whether to apply model truncation.\psqueeze

\paragraph{Setup.}
For all variations, we run one epoch of training with 1B tokens from the FineWeb dataset
\citep{penedo2024fineweb}, using the GPT-2 Small architecture (124M params) from
modded-nanogpt \citep{modded_nanogpt_2024}. Each algorithm in our framework has two
learning rates: $\eta_m$ for the hidden weight matrices (which we call the Muon
learning rate) and $\eta_b$ for everything else (which we call the base learning rate).
%Similarly as prior works, we found that optimization speed is sensitive to the ratio between these two learning rates \citep{scion, moonlight}.
Due to the computational cost of performing a double grid search, we opt to tune with an
iterated grid search; for each algorithm, we fix $\eta_b$ while tuning $\eta_m$, then
fix $\eta_m$ at the tuned value while tuning $\eta_b$. See Appendix
\ref{app:experiment_details} for a complete specification of the tuning protocol and
other implementation details. For all \alg{Momo} variations, we set the lower bound $F_*
= 3.2$, and conduct a sensitivity analysis of this hyperparameter in Section
\ref{sec:ablation}.\psqueeze

\paragraph{Results.}
The final loss for each variation is shown in Tables \ref{tab:all_fineweb_losses} and
\ref{tab:all_fineweb_losses_truncated} of Appendix \ref{app:experiment_results}. For the
best performing variations (\alg{MuonAdam}, \alg{Scion}, \alg{MuonAdam}-\alg{Momo}, and
\alg{MuonMax}-\alg{Momo}), we additionally evaluate the sensitivity to learning rate
tuning by running each algorithm with LRs $(\rho \eta_m, \rho \eta_b)$, where $(\eta_m,
\eta_b)$ are the previously tuned LRs and $\rho$ varies over $\{0.03, 0.1, 0.3, 1, 3,
10, 30, 100\}$, with three random seeds each (Figure \ref{fig:fineweb_final}). Table
\ref{tab:final_fineweb_losses} in Appendix \ref{app:experiment_results} gives the mean
and standard deviation of final validation loss for each algorithm with tuned LRs. For
these runs, \alg{MuonAdam}-\alg{Momo} and \alg{MuonMax}-\alg{Momo} use stale
nuclear norms.

In Figure \ref{fig:fineweb_final}, we see that \alg{MuonAdam} and
\alg{MuonAdam}-\alg{Momo} achieve the smallest loss among all variations, though
\alg{MuonAdam} is much more sensitive to the learning rate. Both
\alg{MuonAdam}-\alg{Momo} and \alg{MuonMax}-\alg{Momo} enjoy a much wider range of
competitive learning rates compared with \alg{MuonAdam} and \alg{Scion}; for this search
range, the proportion of LRs yielding loss less than 3.65 is 25\% for \alg{MuonAdam} and
\alg{Scion}, 50\% for \alg{MuonMax}-\alg{Momo}, and 62.5\% for
\alg{MuonAdam}-\alg{Momo}. Also, Table \ref{tab:final_fineweb_losses} (Appendix
\ref{app:experiment_results}) shows that both of our \alg{Momo} methods achieve a
smaller variation in loss across random seeds compared with \alg{MuonAdam} and
\alg{Scion}.

% \begin{figure}[t]
% \centering
% \includegraphics[width=0.49\textwidth]{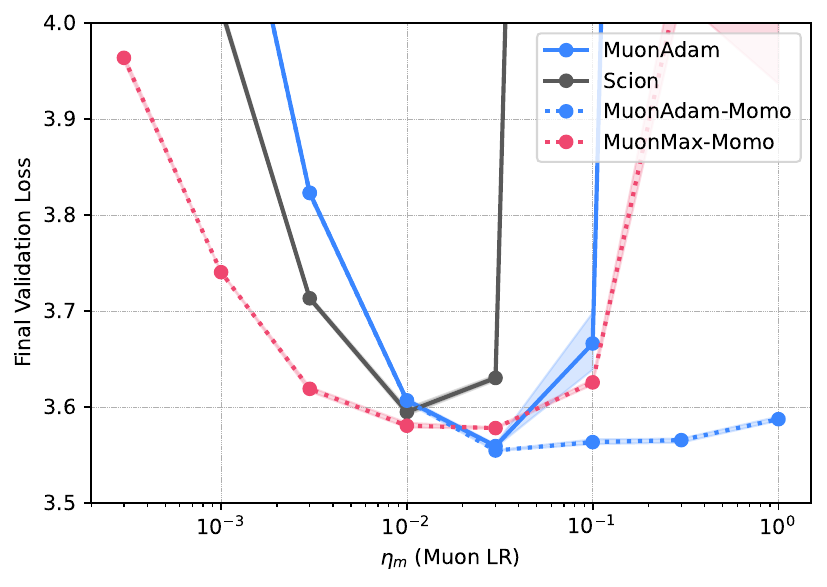}
% \includegraphics[width=0.49\textwidth]{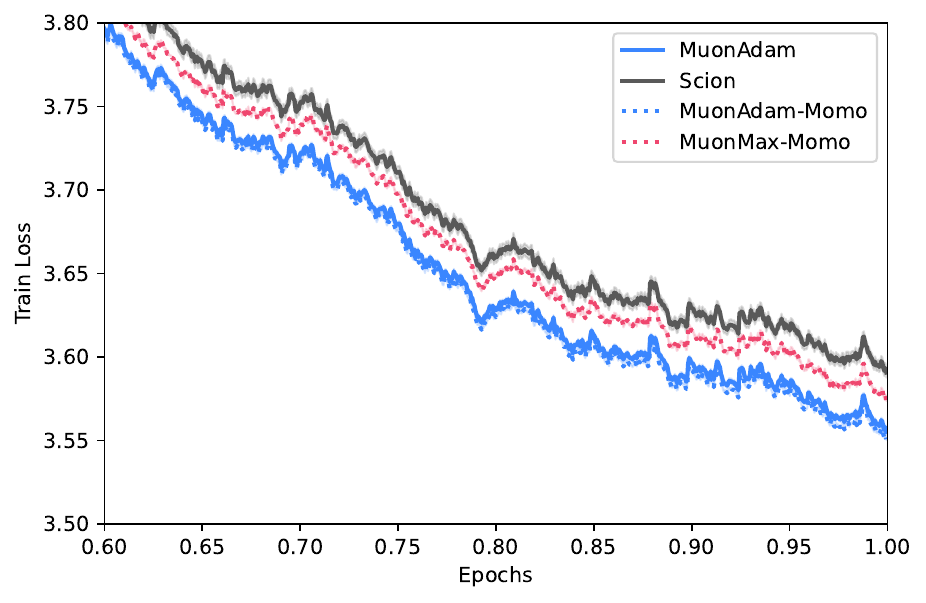}
% \caption{Comparison of best variations on FineWeb1B. \textbf{Left:} Final validation
% loss for each algorithm with varying learning rates. \alg{MuonAdam}-\alg{Momo} and
% \alg{MuonMax}-\alg{Momo} have much wider basins than \alg{MuonAdam} and \alg{Scion},
% indicating increased robustness to learning rate tuning. \textbf{Right:} Training loss
% for last 40\% of training for each algorithm with tuned learning rates.}
% \label{fig:fineweb_final}
% \end{figure}

\begin{figure}[t]
\centering
\begin{subfigure}[b]{0.49\textwidth}
\centering
\includegraphics[width=\textwidth]{figures/fineweb1B_final_comparison_lr_sensitivity.pdf}
\caption{FineWeb1B (GPT2-Small).}
\label{fig:fineweb_final}
\end{subfigure}
\hfill
\begin{subfigure}[b]{0.49\textwidth}
\centering
\includegraphics[width=\textwidth]{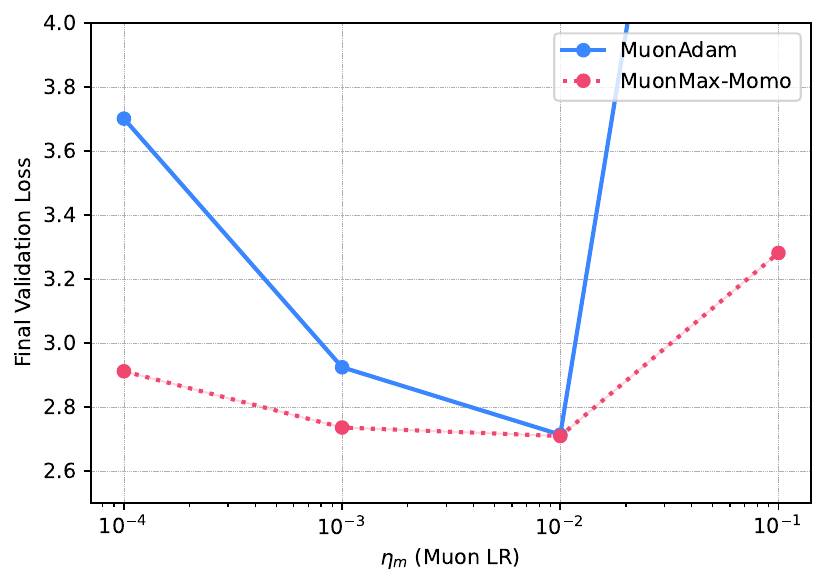}
\caption{SlimPajama6B (GPT2-Large).}
\label{fig:slim_pajama6B}
\end{subfigure}
\caption{Final validation loss with varying learning rates on FineWeb1B (left) and
SlimPajama6B (right). Our \alg{MuonAdam}-\alg{Momo} and \alg{MuonMax}-\alg{Momo} have
wider basins than \alg{MuonAdam} and \alg{Scion}, indicating increased robustness to
learning rate tuning.}
\label{fig:fineweb1B_slimpajama6B}
\end{figure}

\subsection{SlimPajama Dataset} \label{sec:slim_pajama}
Having identified \alg{MuonAdam}, \alg{Scion}, \alg{MuonAdam}-\alg{Momo}, and
\alg{MuonMax}-\alg{Momo} as the strongest variations, we evaluate these methods for
training the GPT2-Large architecture (774M params) using the SlimPajama dataset
\citep{cerebras2023slimpajama}. We first evaluate all four algorithms for one epoch with
1B tokens, then evaluate \alg{MuonAdam} and \alg{MuonMax}-\alg{Momo} for one epoch with
6B tokens. \psqueeze

\paragraph{Setup.} Most aspects of training are the same as in Section \ref{sec:fineweb},
the main difference being the tuning protocol. To tune the two learning rates $\eta_m$
and $\eta_b$, we run a double grid search for each algorithm, varying $\eta_m \in
\{$1e-4, 1e-3, 1e-2, 1e-1$\}$ and $\eta_b \in \{$1e-5, 1e-4, 1e-3, 1e-2$\}$ for a total
of 16 settings per algorithm. For the \alg{Momo} variants, we set the lower bound $F_* =
2.8$ when training with 1B tokens and $F_* = 2.0$ when training with 6B tokens. We did
not tune $F_*$, and based on the sensitivity analysis in Section \ref{sec:ablation}, we
expect that this hyperparameter does not have a large effect on final performance for
tuned learning rates.  \psqueeze
% Note that here both \alg{MuonAdam}-\alg{Momo} and
% \alg{MuonMax}-\alg{Momo} are using stale nuclear norm approximations.

\paragraph{Results.}
Figure \ref{fig:slim_pajama1B} shows the final loss of each method with LRs $(\rho
\eta_m, \rho \eta_b)$, where $(\eta_m, \eta_b)$ are tuned LRs and $\rho \in \{$1e-2,
1e-1, 1, 1e1, 1e2, 1e3$\}$. The sensitivity of each method with respect to both learning
rates is shown for the full 2D grid in Figure \ref{fig:slim_pajama1B_heatmap} of
Appendix \ref{app:experiment_results}. We see in Figure \ref{fig:slim_pajama1B} that
\alg{MuonMax}-\alg{Momo} achieves the lowest loss of all methods, and that both
\alg{Momo} variations are extremely robust to the choice of learning rates. Both
\alg{MuonAdam} and \alg{Scion} have quite narrow sensitivity curves, that is, shifting
the optimal learning rates by a factor of 10 in either direction creates a large
increase in final loss. In comparison, the final loss of \alg{MuonMax}-\alg{Momo}
remains between 3.13 and 3.24 even as $\eta_m$ varies over five orders of magnitude from
1e-3 to 10.

We see similar robustness of \alg{MuonMax}-\alg{Momo} when scaling up to 6B tokens. Due
to the cost of re-tuning learning rates, we reuse the ratio $\eta_m/\eta_b$ of the tuned
learning rates from 1B training, and vary $\eta_m \in \{$1e-4, 1e-3, 1e-2, 1e-1$\}$ for
\alg{MuonAdam} and \alg{MuonMax}-\alg{Momo}. Figure \ref{fig:slim_pajama6B} shows that
\alg{MuonMax}-\alg{Momo} achieves a lower loss than \alg{MuonAdam} for every setting in
this range, and generally exhibits less variation in the loss as the learning rates are
shifted from their optimal values.

% \begin{figure}[t]
% \centering
% \includegraphics[width=0.49\textwidth]{figures/slim_pajama6B_lr_sensitivity.pdf}
% \includegraphics[width=0.49\textwidth]{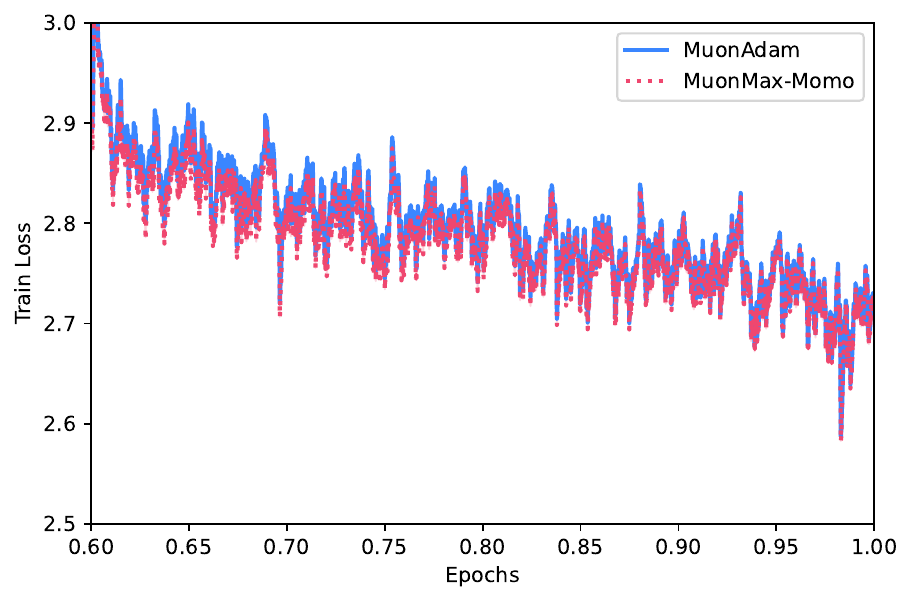}
% \caption{LR sweep for SlimPajama 6B.}
% \label{fig:slim_pajama6B}
% \end{figure}

\subsection{Ablations} \label{sec:ablation}
To probe the behavior of our proposed methods, we perform two ablation studies: (1) we
evaluate how the choice of loss lower bound $F_*$ affects the final validation loss of
\alg{MuonAdam}-\alg{Momo} and \alg{MuonMax}-\alg{Momo}; (2) we evaluate the effect of using
stale nuclear norm approximations on the final validation loss and wall-clock time per
iteration for several methods in our framework. In this section, we use the same setup
as in Section \ref{sec:fineweb} (GPT2-Small, FineWeb dataset, 1B tokens). \psqueeze

\paragraph{Sensitivity Analysis of $F_*$.} Figure \ref{fig:lb_sensitivity} shows the
final loss of \alg{MuonAdam}-\alg{Momo} and \alg{MuonMax}-\alg{Momo} with various
$\eta_m$, as the loss lower bound $F_*$ varies over $\{0, 1.6, 2.4, 2.8, 3.2\}$. We see
that the choice of $F_*$ makes the biggest difference when $\eta_m$ is larger than the
optimal value. For \alg{MuonAdam}-\alg{Momo}, the final loss is nearly identical for all
values of $F_*$ when $\eta_m \leq 0.1$. \alg{MuonMax}-\alg{Momo} is somewhat more
sensitive to the choice of $F_*$, but even the aggressive lower bound of $F_* = 0.0$
achieves 3.61 loss compared to the 3.58 optimum achieved with $F_* = 3.2$. \psqueeze

\begin{figure}[t]
\centering
\includegraphics[width=0.49\textwidth]{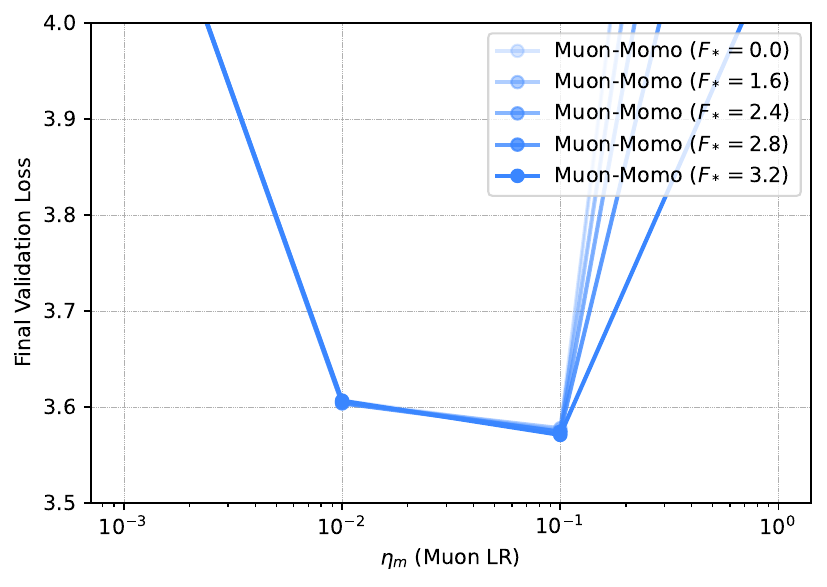}
\includegraphics[width=0.49\textwidth]{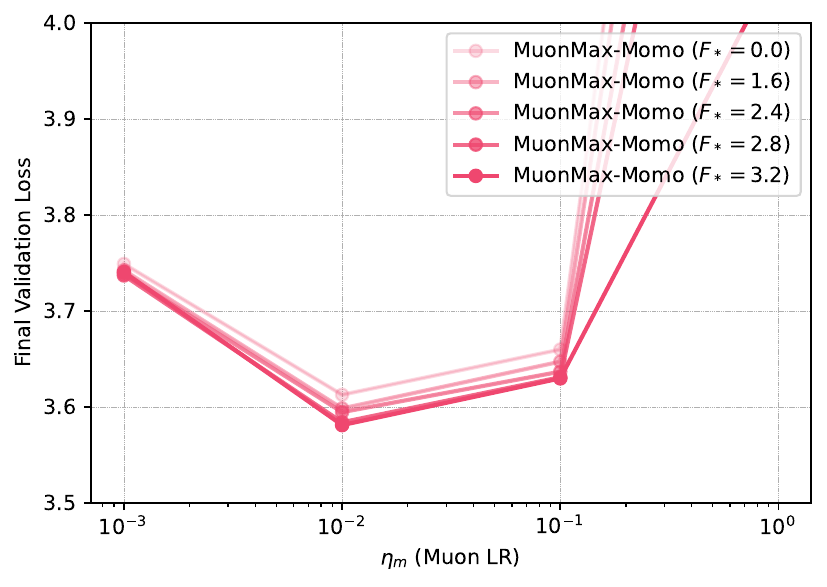}
\caption{Sensitivity to loss lower bound $F_*$ for model truncation (Fineweb1B).}
\label{fig:lb_sensitivity}
\end{figure}

\paragraph{Effect of Stale Approximation.} Table \ref{tab:stale} shows the final
validation loss and per-step wall-clock times of four methods (with tuned LRs) with and
without stale nuclear norm approximations. We see that in all cases, the stale
approximation increases the lost by at most $0.004$, while sometimes even decreasing it.
We therefore conclude that this approximation does not noticeably affect the final loss
for these tuned algorithms, although it does afford a speedup; for
\alg{MuonMax}-\alg{Momo}, the additional wall-clock time compared to \alg{MuonAdam} is
reduced from 11\% to 5\%. \psqueeze

\begin{table}[t]
\centering
%\resizebox{\textwidth}{!}{
\begin{tabular}{lccccc}
\toprule
& \alg{MuonAdam} & \alg{MuonMax} & \alg{MuonAdam}-\alg{Momo} & \alg{Scion}-\alg{Momo} & \alg{MuonMax}-\alg{Momo} \\
\midrule
Original & $3.604$ ($1 \times$) & $3.791$ ($1.09 \times$) & $3.551$ ($1.10 \times)$ & $3.592$ ($1.08 \times$) & $3.576$ ($1.11 \times$) \\
Stale & - & $3.768$ ($1.04 \times$) & $3.554$ ($1.04 \times$) & $3.590$ ($1.02 \times$) & $3.580$ ($1.05 \times$) \\
\bottomrule
\end{tabular}
%}
\caption{Effect of stale nuclear norm approximation on final loss and wall-clock time per-iteration compared to \alg{MuonAdam}, which has no stale variant because it does not involve nuclear norms.}
\label{tab:stale}
\end{table}

\newpage
% \subsubsection*{Reproducibility Statement}
% All of the code we used for our experiments is included in the supplementary material.
% This includes code to download and process data, run training, and make plots. See
% README.md in the supplementary material for instructions on running our code. On a
% conceptual level, all of our proposed methods are derived in full detail, and can in
% principle be implemented from scratch in PyTorch without any additional knowledge
% outside of the paper. Full pseudocode for our proposed method is given in Algorithm
% \ref{alg:muonmax}, and any of the other methods we discuss in this paper can be derived
% from Propositions \ref{prop:csd}, \ref{prop:rsd}, \ref{prop:momo_csd},
% \ref{prop:momo_rsd}, Lemma \ref{lem:product_norm}, and the specifications in Table
% \ref{tab:all_fineweb_losses}.

\bibliography{references.bib}
\bibliographystyle{iclr2026_conference.bst}

\newpage
\appendix
\tableofcontents
\newpage

\section{Proofs from Section \ref{sec:steepest}} \label{app:steepest_proofs}
In what follows, for a norm denoted by a subscript such as $\|\cdot\|_{\infty}$,
we will sometimes replace ${\sf LMO}_{\|\cdot\|_{\infty}}$ with ${\sf
LMO}_{\infty}$.

%%%%%%%%%%%REState%%%%%%%%%%%%%%%%%%
\propcsd*
%%%%%%%%%%%%%%%%%%%%%%%%%%%%%%%%%%%%
\begin{proof}[Proof of Proposition \ref{prop:csd}]
Denoting $r = \|\vw - \vw_t\|$ and $\boldsymbol{\Delta} = (\vw - \vw_t) / \|\vw -
\vw_t\|$, we can change variables in the optimization problem from \Eqref{eq:csd_opt},
yielding $\vw_{t+1} = \vw_t + r_t \boldsymbol{\Delta}_t$, where
\begin{equation}
    (r_t, \boldsymbol{\Delta}_t) = \argmin_{r \in [0, \eta], \|\boldsymbol{\Delta}\|=1} \left\{ r \langle \vm_t, \boldsymbol{\Delta} \rangle \right\},
\end{equation}
which can be separated into
\begin{equation}
    \boldsymbol{\Delta}_t = \argmin_{\|\boldsymbol{\Delta}\|=1} \langle \vm_t, \boldsymbol{\Delta} \rangle = {\sf LMO}(\vm_t),
\end{equation}
and
\begin{equation}
    r_t = \argmin_{r \in [0, \eta]} \left\{ r \langle \vm_t, \Delta_t \rangle \right\} = \argmin_{r \in [0, \eta]} \left\{ r \langle \vm_t, {\sf LMO}(\vm_t) \rangle \right\} = \argmin_{r \in [0, \eta]} \left\{ -r \|\vm_t\|_* \right\} = \eta,
\end{equation}
so $\vw_{t+1} = \vw_t + \eta {\sf LMO}(\vm_t)$.
\end{proof}

%%%%%%%%%%%RESTATE%%%%%%%%%%%%%%%%%%
\proprsd*
%%%%%%%%%%%%%%%%%%%%%%%%%%%%%%%%%%%%
\begin{proof}[Proof of Proposition \ref{prop:rsd}]
For the optimization problem from \Eqref{eq:rsd_opt}, we use the same change of
variables as in the proof of Proposition \ref{prop:csd}: $r = \|\vw - \vw_t\|$ and
$\boldsymbol{\Delta} = (\vw - \vw_t) / \|\vw - \vw_t\|$. Therefore $\vw_{t+1} = \vw_t +
r_t \boldsymbol{\Delta}_t$, where
\begin{align}
    (r_t, \boldsymbol{\Delta}_t) = \argmin_{r \geq 0, \|\boldsymbol{\Delta}\|=1} \left\{ r \langle \vm_t, \boldsymbol{\Delta} \rangle + \tfrac{r^2}{2 \eta} \right\},
\end{align}
which can be separated into
\begin{equation}
    \boldsymbol{\Delta}_t = \argmin_{\|\boldsymbol{\Delta}\|=1} \langle \vm_t, \boldsymbol{\Delta} \rangle = {\sf LMO}(\vm_t),
\end{equation}
and
\begin{align}
    r_t &= \argmin_{r \geq 0} \left\{ r \langle \vm_t, \boldsymbol{\Delta_t} \rangle + \tfrac{r^2}{2 \eta} \right\} \\
    &= \argmin_{r \geq 0} \left\{ r \langle \vm_t, {\sf LMO}(\vm_t) \rangle + \tfrac{r^2}{2 \eta} \right\} \\
    &= \argmin_{r \geq 0} \left\{ -r \|\vm_t\|_* + \tfrac{r^2}{2 \eta} \right\} \\
    &= \eta \|\vm_t\|_*,
\end{align}
so that $\vw_{t+1} = \vw_t + \eta \|\vm_t\|_* {\sf LMO}(\vm_t)$.
\end{proof}

%%%%%%%%%%%RESTATE%%%%%%%%%%%%%%%%%%
\lemproductnorm*
%%%%%%%%%%%%%%%%%%%%%%%%%%%%%%%%%%%%
\begin{proof}[Proof of Lemma 3.3]
To show that $h$ is a norm, we only need to show that
\begin{enumerate}
    \item $h(\vw_1, \ldots, \vw_n) \geq 0$ for all $\vw_1, \ldots, \vw_n$,
    \item $h(\vw_1, \ldots, \vw_n) = 0$ if and only if $(\vw_1 \ldots, \vw_n) = \vzero$,
    \item $h(\lambda \vw_1, \ldots, \lambda \vw_n) = |\lambda| h(\vw_1, \ldots, \vw_n)$ for all $\lambda \in \R, \vw_1, \ldots, \vw_n$,
    \item $h(\vw_1 + \vv_1, \ldots, \vw_n + \vv_n) \leq h(\vw_1, \ldots, \vw_n) + h(\vv_1, \ldots, \vv_n)$ for all $\vw_1, \vv_1, \ldots, \vw_n, \vv_n$.
\end{enumerate}
All of these properties hold immediately from the definition of $h = f \circ (g_1,
\ldots, g_n)$ together with repeated applications of the norm properties of $f$ and
$g_1, \ldots, g_n$.

From the definition of the dual norm,
\begin{align}
    h_*(\vw_1, \ldots, \vw_n) &= \max \left\{ \sum_{i=1}^n \langle \vw_i, \vv_i \rangle \;\middle|\; h(\vv_1, \ldots, \vv_n) = 1 \right\} \\
    &= \max \left\{ \sum_{i=1}^n \langle \vw_i, \vv_i \rangle \;\middle|\; f(g_1(\vv_1), \ldots, g_n(\vv_n)) = 1 \right\}. \label{eq:product_inter}
\end{align}
We use a change of variables $\vu_i = \vv_i/g_i(\vv_i)$ and $r_i = g_i(\vv_i)$, which
separates the update direction $\vu_i$ (with unit norm) from the update norm $r_i$. So
\Eqref{eq:product_inter} is equivalent to
\begin{align}
    h_*(\vw_1, \ldots, \vw_n) &= \max \left\{ \sum_{i=1}^n r_i \langle \vw_i, \vu_i \rangle \;\middle|\; f(r_1, \ldots, r_n) = 1 \right\} \label{eq:product_dual_inter}
\end{align}
Note that the condition $f(r_1, \ldots, r_n) = 1$ does not involve $\vu_i$, so each term
$r_i \langle \vw_i, \vu_i \rangle$ is maximized when
\begin{equation}
    \vu_i = \argmax_{g_i(\vz_i)=1} \langle \vw_i, \vz_i \rangle = -{\sf LMO}_{g_i}(\vw_i),
\end{equation}
with maximum value $\langle \vw_i, \vu_i \rangle =  g_{i,*}(\vw_i)$. Using this in
\Eqref{eq:product_dual_inter} gives
\begin{align}
    h_*(\vw_1, \ldots, \vw_n) &= \max \left\{ \sum_{i=1}^n r_i g_{i,*}(\vw_i) \;\middle|\; f(r_1, \ldots, r_n) = 1 \right\}.
\end{align}
Denoting $\vr = (r_1, \ldots, r_n)$ and $\vs = (g_{1,*}(\vw_1), \ldots,
g_{n,*}(\vw_n))$, this is equivalent to
\begin{align}
    h_*(\vw_1, \ldots, \vw_n) &= \max \left\{ \langle \vr, \vs \rangle \;\middle|\; f(\vr) = 1 \right\} \\
    &= f_*(\vs),
\end{align}
which gives us the dual norm $h_*$.

To obtain ${\sf LMO}_h$, we only need to look at the value of the variables which
achieved the maximum in the above derivation:
\begin{align}
    \vu_i = -{\sf LMO}_{g_i}(\vw_i), \quad \text{and} \quad \vr = {\sf LMO}_f(g_{1,*}(\vw_1), \ldots, g_{n,*}(\vw_n))
\end{align}
so that
\begin{equation}
    \vv_i = r_i {\sf LMO}_f(g_{1,*}(\vw_1), \ldots, g_{n,*}(\vw_n))
\end{equation}
maximizes $\sum_{i=1}^n \langle \vw_i, \vv_i \rangle$ subject to $h(\vv_1, \ldots,
\vv_n)=1$. Note that ${\sf LMO}_h(\vw_1, \ldots, \vw_n)$ is exactly the minimizer of
$\sum_{i=1}^n \langle \vw_i, \vv_i \rangle$ subject to the same norm constraint; since
$\sum_{i=1}^n \langle \vw_i, \vv_i \rangle$ is linear in $\vv_i$, the minimizer is the
negative of the maximizer. Therefore
\begin{equation}
    {\sf LMO}_h(\vw_1, \ldots, \vw_n) = -(r_1 {\sf LMO}_{g_1}(\vw_1), \ldots, r_n {\sf LMO}_{g_n}(\vw_n)).
\end{equation}
\end{proof}

The following lemma will be useful later for quickly computing duals and ${\sf LMO}$s of
various norms.

\begin{lemma} \label{lem:linear_norm}
For any norm $\|\cdot\|$ on $\R^d$ full rank matrix $\mD \in \R^{d \times d}$, the norm
defined by $\|\vv\|_{\mD} = \|\mD \vv\|$ has
\begin{align}
    {\sf LMO}_{\|\cdot\|_{\mD}}(\vv) &= \mD^{-1} {\sf LMO}_{\|\cdot\|}(\mD^{-T} \vv), \\
    \|\vv\|_{\mD,*} &= \left\| \mD^{-T} \vv \right\|_*.
\end{align}
\end{lemma}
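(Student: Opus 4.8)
The plan is to prove both identities at once through a single change of variables. Since $\mD$ is full rank, the linear map $\vu \mapsto \vz := \mD \vu$ is a bijection of $\R^d$, and by the very definition $\|\vu\|_{\mD} = \|\mD\vu\|$ it carries the unit sphere $\{\vu : \|\vu\|_{\mD} = 1\}$ bijectively onto the unit sphere $\{\vz : \|\vz\| = 1\}$. The key observation is how the relevant linear functional transforms: writing $\vu = \mD^{-1}\vz$, we have $\langle \vu, \vv \rangle = \langle \mD^{-1}\vz, \vv \rangle = \langle \vz, \mD^{-T}\vv \rangle$. This single identity, which simply moves $\mD^{-1}$ across the inner product as its adjoint $\mD^{-T}$, drives both claims.

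For the dual norm, I would begin from the definition $\|\vv\|_{\mD,*} = \max_{\|\vu\|_{\mD}=1} \langle \vu, \vv \rangle$, apply the substitution above, and note that maximizing over $\vu$ on the $\|\cdot\|_{\mD}$-sphere is the same as maximizing $\langle \vz, \mD^{-T}\vv \rangle$ over $\vz$ on the $\|\cdot\|$-sphere. By the definition of the dual norm of $\|\cdot\|$, this maximum is exactly $\|\mD^{-T}\vv\|_*$, which is the asserted formula.

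For the ${\sf LMO}$, the same substitution turns $\argmin_{\|\vu\|_{\mD}=1} \langle \vu, \vv \rangle$ into $\mD^{-1}\big( \argmin_{\|\vz\|=1} \langle \vz, \mD^{-T}\vv \rangle \big)$: the bijection preserves which feasible point is optimal, and the optimal $\vu$ is recovered from the optimal $\vz$ via $\vu = \mD^{-1}\vz$. The inner $\argmin$ is precisely ${\sf LMO}_{\|\cdot\|}(\mD^{-T}\vv)$, so ${\sf LMO}_{\|\cdot\|_{\mD}}(\vv) = \mD^{-1}{\sf LMO}_{\|\cdot\|}(\mD^{-T}\vv)$ as claimed. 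There is essentially no real obstacle here; the only point requiring care is the bookkeeping of the adjoint, ensuring that pulling $\mD^{-1}$ off of $\vu$ lands $\mD^{-T}$ (not $\mD^{-1}$) on $\vv$ inside the inner product. Should the minimizer fail to be unique, the identity continues to hold at the level of minimizer sets, or one may select the minimal-distance solution consistently with the convention used elsewhere in the paper.
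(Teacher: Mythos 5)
Your proof is correct and follows essentially the same route as the paper's: the substitution $\vz = \mD\vu$, which maps the $\|\cdot\|_{\mD}$-sphere onto the $\|\cdot\|$-sphere, together with the adjoint identity $\langle \mD^{-1}\vz, \vv\rangle = \langle \vz, \mD^{-T}\vv\rangle$. The only cosmetic difference is that you transform the $\argmin$ defining the ${\sf LMO}$ directly, whereas the paper first identifies the maximizer from the dual-norm computation and then negates it using linearity of the objective; both are valid and yield the same formulas.
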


\begin{proof}
The fact that $\|\cdot\|_{\mD}$ is a norm follows immediately from the norm properties
of $\|\cdot\|$ together with the fact that $\mD$ is full rank. For the dual norm,
\begin{equation}
    \|\vv\|_{\mD,*} = \max_{\|\vu\|_{\mD}=1} \langle \vv, \vu \rangle = \max_{\|\mD \vu\|=1} \langle \vv, \vu \rangle
\end{equation}
and a change of variables $\vz = \mD \vu$ yields
\begin{equation}
    \|\vv\|_{\mD,*} = \max_{\|\vz\|=1} \langle \vv, \mD^{-1} \vz \rangle = \max_{\|\vz\|=1} \langle \mD^{-T} \vv, \vz \rangle = \left\| \mD^{-T} \vv \right\|_*.
\end{equation}
For the ${\sf LMO}$, we can look at the value of the variables that maximize the inner
product in the above:
\begin{equation}
    \vz = \argmax_{\|\vz\|=1} \langle \mD^{-T} \vv, \vz \rangle = -{\sf LMO}_{\|\cdot\|}(\mD^{-T} \vv).
\end{equation}
Returning to the $\vu$ variable then gives
$$\vu = \mD^{-1} \vz = -\mD^{-1} {\sf LMO}_{\|\cdot\|}(\mD^{-T} \vv)$$ which maximizes
$\langle \vv, \vu \rangle$ subject to $\|\vu\|_{\mD} = 1$. Since $\langle \vv, \vu
\rangle$ is linear in $\vu$, the minimizer of $\langle \vv, \vu \rangle$ under the norm
constraint $\|\vu\|_{\mD} = 1$ is exactly the negative of the maximizer under the same
constraint. So
\begin{equation}
    {\sf LMO}_{\|\cdot\|_{\mD}}(\vv) = \argmin_{\|\vu\|_{\mD}=1} \langle \vv, \vu \rangle = \mD^{-1} {\sf LMO}_{\|\cdot\|}(\mD^{-T} \vv)
\end{equation}
\end{proof}

%%%%%%%%%%%RESTATE%%%%%%%%%%%%%%%%%%
\propadaminfty*
%%%%%%%%%%%%%%%%%%%%%%%%%%%%%%%%%%%%
\begin{proof}[Proof of Proposition \ref{prop:adam_infty}]
Let $\mD = \text{Diag} \left( \tfrac{\sqrt{\vv_t} + \epsilon}{|\vm_t|} \right)$, so that
$\|\boldsymbol{\theta}\|_{\text{ada}\infty} = \|\mD \boldsymbol{\theta}\|_{\infty}$.
Then by Proposition \ref{prop:csd}, one step of CSD w.r.t.
$\|\cdot\|_{\text{ada}\infty}$ is given by
\begin{align}
    \boldsymbol{\theta}_{t+1} &= \boldsymbol{\theta}_t + \eta {\sf LMO}_{\text{ada}\infty}(\vm_t) \\
    &\Eqmark{i}{=} \boldsymbol{\theta}_t + \eta \mD^{-1} {\sf LMO}_{\infty}(\mD^{-T} \vm_t) \\
    &\Eqmark{ii}{=} \boldsymbol{\theta}_t - \eta \mD^{-1} \sign(\mD^{-T} \vm_t) \\
    &= \boldsymbol{\theta}_t - \eta \text{Diag} \left( \tfrac{|\vm_t|}{\sqrt{\vv_t} + \epsilon} \right) \sign \left( \text{Diag} \left( \tfrac{|\vm_t|}{\sqrt{\vv_t} + \epsilon} \right) \vm_t \right) \\
    &= \boldsymbol{\theta}_t - \eta \tfrac{|\vm_t|}{\sqrt{\vv_t} + \epsilon} \odot \sign(\vm_t) \\
    &= \boldsymbol{\theta}_t - \eta \tfrac{\vm_t}{\sqrt{\vv_t} + \epsilon},
\end{align}
where $(i)$ uses Lemma \ref{lem:linear_norm} and $(ii)$ uses ${\sf
LMO}_{\infty}(\vv) = -\sign(\vv)$.
\end{proof}

%%%%%%%%%%%RESTATE%%%%%%%%%%%%%%%%%%
\propadamtwo*
%%%%%%%%%%%%%%%%%%%%%%%%%%%%%%%%%%%%
\begin{proof}[Proof of Proposition \ref{prop:adam_2}]
Let $\mD = \text{Diag} \left( \sqrt{\sqrt{\vv_t} + \epsilon} \right)$, so that
$\|\boldsymbol{\theta}\|_{\text{ada}2} = \|\mD \boldsymbol{\theta}\|_2$. Then by
Proposition \ref{prop:rsd}, one step of RSD w.r.t. $\|\cdot\|_{\text{ada}2}$ is given by
\begin{align}
    \boldsymbol{\theta}_{t+1} &= \boldsymbol{\theta}_t + \eta \|\vm_t\|_{\text{ada}2,*} {\sf LMO}_{\text{ada}2}(\vm_t) \\
    &\Eqmark{i}{=} \boldsymbol{\theta}_t + \eta \|\mD^{-T} \vm_t\|_{2,*} \mD^{-1} {\sf LMO}_2(\mD^{-T} \vm_t) \\
    &\Eqmark{ii}{=} \boldsymbol{\theta}_t - \eta \|\mD^{-T} \vm_t\|_2 \mD^{-1} \tfrac{\mD^{-T} \vm_t}{\|\mD^{-T} \vm_t\|_2} \\
    &= \boldsymbol{\theta}_t - \eta \mD^{-1} \mD^{-T} \vm_t \\
    &= \boldsymbol{\theta}_t - \eta \text{Diag} \left( \tfrac{1}{\sqrt{\vv_t}+\epsilon} \right) \vm_t \\
    &= \boldsymbol{\theta}_t - \eta \tfrac{\vm_t}{\sqrt{\vv_t}+\epsilon},
\end{align}
where $(i)$ uses Lemma \ref{lem:linear_norm} and $(ii)$ uses ${\sf
LMO}_2(\vv) = -\vv/\|\vv\|_2$.
\end{proof}

For reference, we include the pseudocode for \alg{MuonAdam} (\alg{Muon} side-by-side with \alg{Adam})
in Algorithm \ref{alg:muonadam}.

\begin{algorithm}[t]
    \caption{\alg{MuonAdam}: where $\mW^1, \ldots, \mW^L$ are the weight matrices,  \\ \phantom{Algorithm 1:} and 
    $\boldsymbol{\theta}$ are all other parameters flattened into a vector.}
    \label{alg:muonadam}
    \AlgIO{Inputs}{$\mW_0^1, \ldots, \mW_0^L, \boldsymbol{\theta}_0$, learning rates $\eta_b,\eta_m$, EMA parameters $\beta, \beta_1, \beta_2$}
    \begin{algorithmic}[1]
        \For{$t = 0, 1, \ldots, T-1$}
            \State $(\mG_t^1, \ldots, \mG_t^L, \vg_t^{\theta}) \gets \text{backward}(\mW_t^1, \ldots \mW_t^L, \boldsymbol{\theta}_t)$
            \For{$\ell = 1, \ldots, L$}
                \State $\mM_t^{\ell} = \beta \mM_{t-1}^{\ell} + (1 - \beta) \mG_t^{\ell}$
                \State $\mW_{t+1}^{\ell} \gets \mW_t^{\ell} - \eta_m \text{polar}(\mM_t^{\ell})$
            \EndFor
            \State $\vm_t^{\theta} = \beta_1 \vm_{t-1}^{\theta} + (1 - \beta_1) \vg_t^{\theta}$
            \State $\vv_t^{\theta} = \beta_2 \vv_{t-1}^{\theta} + (1 - \beta_2) \vg_t^{\theta} \odot \vg_t^{\theta}$
            \State $\boldsymbol{\theta}_{t+1} = \boldsymbol{\theta}_t - \eta_b \tfrac{\vm_t^{\theta}}{\sqrt{\vv_t^{\theta}} + \epsilon}$
        \EndFor
    \end{algorithmic}
\end{algorithm}

%%%%%%%%%%%RESTATE%%%%%%%%%%%%%%%%%%
\propmuon*
%%%%%%%%%%%%%%%%%%%%%%%%%%%%%%%%%%%%
\begin{proof}[Proof of Proposition \ref{prop:muon}]
By Proposition \ref{prop:csd}, one step of CSD w.r.t. $\|\cdot\|_{\text{muon}}$ can be
written as
\begin{equation} \label{eq:muon_update_inter}
    \mW_{t+1} = \mW_t + \eta_m {\sf LMO}_{\text{muon}}(\mM_t),
\end{equation}
where $\mM_t$ is the momentum buffer for all network parameters, i.e. it is the
concatenation of the momentum buffers of each parameter:
\begin{equation}
    \mM_t = (\mM_t^1, \ldots, \mM_t^L, \vm_t^{\theta}).
\end{equation}
Denote $\lambda = \eta_b / \eta_m$. To compute the ${\sf LMO}$ term, we can rewrite $\|\mW\|_{\text{muon}}$ as
\begin{equation}
    \|\mW\|_{\text{muon}} = \max \left( \|\mW^1\|_{2 \rightarrow 2}, \ldots \|\mW^L\|_{2 \rightarrow 2}, \tfrac{1}{\lambda} \|\boldsymbol{\theta}\|_{\text{ada}\infty} \right),
\end{equation}
so that $\|\cdot\|_{\text{muon}}$ can be written as the composition (as in Lemma \ref{lem:product_norm})
\begin{equation}
    \|\mW\|_{\text{muon}} = f(g_1(\mW^1), \ldots g_L(\mW^L), g_{L+1}(\boldsymbol{\theta})),
\end{equation}
with $g_i(\mM) = \|\mM\|_{2 \rightarrow 2}$ for $i \in [L]$,
$g_{L+1}(\boldsymbol{\theta}) = \|\boldsymbol{\theta}\|_{\text{ada}\infty}$, and $f(\vv)
= \|\mD \vv\|_{\infty}$, where $\mD = \text{Diag}(1, \ldots, 1, 1/\lambda) \in \R^{(L+1)
\times (L+1)}$. Therefore, by Lemma \ref{lem:product_norm}, the update in
\Eqref{eq:muon_update_inter} is equivalent to
\begin{equation} \label{eq:muon_update_inter_2}
\begin{aligned}
    \mW_{t+1}^{\ell} &= \mW_t^{\ell} + \eta_m \phi_{\ell} {\sf LMO}_{2 \rightarrow 2}(\mM_t^{\ell}) \\
    \boldsymbol{\theta}_{t+1} &= \boldsymbol{\theta}_t + \eta_m \phi_{L+1} {\sf LMO}_{\text{ada}\infty}(\vm_t^{\theta}),
\end{aligned}
\end{equation}
where $\boldsymbol{\phi} = -{\sf LMO}_f(\|\mM_t^1\|_{\text{nuc}}, \ldots, \|\mM_t^L\|_{\text{nuc}}, \|\vm_t^{\theta}\|_{\text{ada}\infty,*})$. We know ${\sf LMO}_{2 \rightarrow 2}(\mM) = -\text{polar}(\mM)$, and we proved in Proposition \ref{prop:adam_infty} that
\begin{equation}
    {\sf LMO}_{\text{ada}\infty}(\vv) = -\tfrac{|\vm_t^{\theta}|}{\sqrt{\vv_t^{\theta}} + \epsilon} \sign(\vv),
\end{equation}
so the ${\sf LMO}$ terms in \Eqref{eq:muon_update_inter_2} can be simplified as
\begin{equation}
\begin{aligned} \label{eq:muon_update_inter_3}
    \mW_{t+1}^{\ell} &= \mW_t^{\ell} - \eta_m \phi_{\ell} \text{polar}(\mM_t^{\ell}) \\
    \boldsymbol{\theta}_{t+1} &= \boldsymbol{\theta}_t - \eta_m \phi_{L+1} \tfrac{\vm_t^{\theta}}{\sqrt{\vv_t^{\theta}} + \epsilon}.
\end{aligned}
\end{equation}
To simplify $\boldsymbol{\phi}$, we use Lemma \ref{lem:linear_norm}. Denoting $\vu =
(\|\mM_t^1\|_{\text{nuc}}, \ldots, \|\mM_t^L\|_{\text{nuc}},
\|\vm_t^{\theta}\|_{\text{ada}\infty,*})$, we have
\begin{equation}
    \boldsymbol{\phi} = -{\sf LMO}_f(\vu) = -\mD^{-1} {\sf LMO}_{\infty}(\mD^{-T} \vu) = \mD^{-1} \sign(\mD^{-T} \vu) = \mD^{-1} \vone,
\end{equation}
so that $\phi_{\ell} = 1$ for $\ell \in [L]$ and $\phi_{L+1} = \lambda =
\eta_b/\eta_m$. Plugging back to \Eqref{eq:muon_update_inter_3} gives
\begin{equation}
\begin{aligned}
    \mW_{t+1}^{\ell} &= \mW_t^{\ell} - \eta_m \text{polar}(\mM_t^{\ell}) \\
    \boldsymbol{\theta}_{t+1} &= \boldsymbol{\theta}_t - \eta_a \tfrac{\vm_t^{\theta}}{\sqrt{\vv_t^{\theta}} + \epsilon},
\end{aligned}
\end{equation}
which is exactly the update from Algorithm \ref{alg:muonadam}.
\end{proof}

\subsection{Recovering Existing Algorithms} \label{app:recover}
Propositions \ref{prop:scion} and \ref{prop:polargrad} below show how \alg{Scion}
\citep{scion} and \alg{PolarGrad} \citep{polargrad} are both instances of our steepest
descent framework. All notation in this section follows that of Section
\ref{sec:steepest}.

Throughout our paper, \alg{Scion} refers to the following algorithm:
\begin{equation}
\begin{aligned} \label{eq:scion}
    \mW_{t+1}^{\ell} &= \mW_t^{\ell} - \eta_m \text{polar}(\mM_t^{\ell}) \\
    \boldsymbol{\theta}_{t+1} &= \boldsymbol{\theta}_t - \eta_b \text{sign}(\vm_t^{\theta}).
\end{aligned}
\end{equation}
This update differs slightly from the algorithm proposed by \citet{scion} in that for
each parameter matrix $\mW$ of shape $d_{\text{out}} \times d_{\text{in}}$, we omit a
coefficient of $\sqrt{d_{\text{out}}/d_{\text{in}}}$ from the update. This corresponds
to assigning to each weight matrix the spectral norm $\|\cdot\|_{2 \rightarrow 2}$
rather than the RMS to RMS operator norm used by \citet{scion}. Indeed, the motivation
of the RMS to RMS norm is to allow for hyperparameter transfer across architecture
sizes, but in our work we focus on LR sensitivity for a fixed architecture, so for
simplicity we did not employ this RMS scaling. However, we could easily recover the RMS
variant by replacing the spectral norm $\|\cdot\|_{2 \rightarrow 2}$ with the RMS to RMS
operator norm.

% For a matrix $\mW$ of shape $d_{\text{out}} \times d_{\text{in}}$, the RMS operator norm
% satisfies
% \begin{align}
%     {\sf LMO}_{\text{RMS} \rightarrow \text{RMS}}(\mW) &= - \sqrt{\frac{d_{\text{out}}}{d_{\text{in}}}} \text{polar}(\mW) \\
%     \|\mW\|_{\text{RMS} \rightarrow \text{RMS}, *} &= \sqrt{\frac{d_{\text{out}}}{d_{\text{in}}}} \|\mW\|_{\text{nuc}},
% \end{align}
% compared to the spectral norm:
% \begin{align}
%     {\sf LMO}_{2 \rightarrow 2}(\mW) &= - \text{polar}(\mW) \\
%     \|\mW\|_{2 \rightarrow 2, *} &= \|\mW\|_{\text{nuc}}.
% \end{align}
% So using the RMS to RMS norm scales the {\sf LMO} and dual norm according to the shape
% of the input matrix, which allows for hyperparameter transfer across architecture sizes
% \citep{scion}. 

\begin{proposition} \label{prop:scion}
\alg{Scion} is exactly CSD with step size $\eta_m$ with respect to
\begin{equation}
    \|\mW\|_{\text{scion}} = \max \left( \max_{1 \leq \ell \leq L} \|\mW^{\ell}\|_{2 \rightarrow 2}, \tfrac{\eta_m}{\eta_b} \|\boldsymbol{\theta}\|_{\infty} \right).
\end{equation}
\end{proposition}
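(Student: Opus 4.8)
The plan is to mirror the proof of Proposition \ref{prop:muon} almost verbatim, since \alg{Scion} differs from \alg{MuonAdam} only in the norm assigned to the non-matrix parameters $\boldsymbol{\theta}$: here we use the plain $\ell_\infty$ norm $\|\cdot\|_\infty$ rather than the adaptive $\|\cdot\|_{\text{ada}\infty}$. First I would invoke Proposition \ref{prop:csd} to write one step of CSD with respect to $\|\cdot\|_{\text{scion}}$ as $\mW_{t+1} = \mW_t + \eta_m {\sf LMO}_{\text{scion}}(\mM_t)$, where $\mM_t = (\mM_t^1, \ldots, \mM_t^L, \vm_t^\theta)$ is the concatenated momentum buffer of all parameters.

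Next I would exhibit $\|\cdot\|_{\text{scion}}$ as a product norm of the type handled by Lemma \ref{lem:product_norm}. Setting $\lambda = \eta_b/\eta_m$, I would write $\|\mW\|_{\text{scion}} = f(g_1(\mW^1), \ldots, g_L(\mW^L), g_{L+1}(\boldsymbol{\theta}))$ with $g_\ell = \|\cdot\|_{2 \rightarrow 2}$ for $\ell \in [L]$, $g_{L+1} = \|\cdot\|_\infty$, and $f(\vv) = \|\mD \vv\|_\infty$ for $\mD = \text{Diag}(1, \ldots, 1, 1/\lambda)$. Lemma \ref{lem:product_norm} then splits the update into per-matrix updates $\mW_{t+1}^\ell = \mW_t^\ell + \eta_m \phi_\ell {\sf LMO}_{2 \rightarrow 2}(\mM_t^\ell)$ and a non-matrix update $\boldsymbol{\theta}_{t+1} = \boldsymbol{\theta}_t + \eta_m \phi_{L+1} {\sf LMO}_\infty(\vm_t^\theta)$, with coefficient vector $\boldsymbol{\phi} = -{\sf LMO}_f(\|\mM_t^1\|_{\text{nuc}}, \ldots, \|\mM_t^L\|_{\text{nuc}}, \|\vm_t^\theta\|_{\infty,*})$.

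The only substantive change from the Muon argument is the non-matrix block: since the underlying norm is now plain $\ell_\infty$, its dual is the $\ell_1$ norm and ${\sf LMO}_\infty(\vv) = -\sign(\vv)$, with no adaptive diagonal rescaling by $|\vm_t^\theta|/(\sqrt{\vv_t^\theta}+\epsilon)$. To simplify $\boldsymbol{\phi}$ I would proceed exactly as in the Muon proof using Lemma \ref{lem:linear_norm}: writing $\vu = (\|\mM_t^1\|_{\text{nuc}}, \ldots, \|\mM_t^L\|_{\text{nuc}}, \|\vm_t^\theta\|_1)$, whose entries are all nonnegative, gives $\boldsymbol{\phi} = -{\sf LMO}_f(\vu) = \mD^{-1} \sign(\mD^{-T} \vu) = \mD^{-1} \vone$, so $\phi_\ell = 1$ for $\ell \in [L]$ and $\phi_{L+1} = \lambda = \eta_b/\eta_m$.

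Finally I would plug these values back in. With ${\sf LMO}_{2 \rightarrow 2}(\mM) = -\text{polar}(\mM)$ and $\phi_\ell = 1$, the matrix updates become $\mW_{t+1}^\ell = \mW_t^\ell - \eta_m \text{polar}(\mM_t^\ell)$; and $\phi_{L+1} = \eta_b/\eta_m$ together with ${\sf LMO}_\infty(\vm_t^\theta) = -\sign(\vm_t^\theta)$ gives $\boldsymbol{\theta}_{t+1} = \boldsymbol{\theta}_t - \eta_m (\eta_b/\eta_m) \sign(\vm_t^\theta) = \boldsymbol{\theta}_t - \eta_b \sign(\vm_t^\theta)$, which is exactly the \alg{Scion} update in \eqref{eq:scion}. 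I expect no real obstacle here, as the result is a direct specialization of the \alg{MuonAdam} argument; the only points requiring care are the bookkeeping of the dual norm ($\ell_1$) and the sign-based LMO of the plain $\ell_\infty$ block, which simply replace the adaptive rescaling that appeared for \alg{MuonAdam}.
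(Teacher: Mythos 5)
Your proposal is correct and follows essentially the same route as the paper's own proof: both reduce to the argument for Proposition~\ref{prop:muon}, express $\|\cdot\|_{\text{scion}}$ as a composition handled by Lemma~\ref{lem:product_norm} with $f(\vv)=\|\mD\vv\|_\infty$, simplify $\boldsymbol{\phi}$ via Lemma~\ref{lem:linear_norm} to get $\phi_\ell=1$ and $\phi_{L+1}=\eta_b/\eta_m$, and substitute ${\sf LMO}_\infty(\vv)=-\sign(\vv)$ for the non-matrix block. Your bookkeeping of the dual as $\|\vm_t^\theta\|_1$ is in fact slightly cleaner than the paper's, which carries a leftover $\|\vm_t^\theta\|_{\text{ada}\infty,*}$ in one intermediate expression.
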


Note that the same conclusion was already reached by \citet{scion}, that is, they
already described \alg{Scion} in terms of a norm on the space of all parameters (see
their Equation (6)). We include Proposition \ref{prop:scion} to specify how \alg{Scion}
is a special case of our framework.

\begin{proof}
The proof is very similar to that of Proposition \ref{prop:muon}, since
\alg{Muon}-\alg{Adam} differs from \alg{Scion} only in that \alg{Adam} is used for
non-matrix parameters instead of sign SGD with momentum.

By Proposition \ref{prop:csd}, one step of CSD w.r.t. $\|\cdot\|_{\text{scion}}$ can be
written as
\begin{equation} \label{eq:scion_update_inter}
    \mW_{t+1} = \mW_t + \eta_m {\sf LMO}_{\text{scion}}(\mM_t),
\end{equation}
where $\mM_t$ is the momentum buffer for all network parameters, i.e. it is the
concatenation of the momentum buffers of each parameter:
\begin{equation}
    \mM_t = (\mM_t^1, \ldots, \mM_t^L, \vm_t^{\theta}).
\end{equation}
Denote $\lambda = \eta_b / \eta_m$. To compute the ${\sf LMO}$ term, we can rewrite $\|\mW\|_{\text{scion}}$ as
\begin{equation}
    \|\mW\|_{\text{scion}} = \max \left( \|\mW^1\|_{2 \rightarrow 2}, \ldots \|\mW^L\|_{2 \rightarrow 2}, \tfrac{1}{\lambda} \|\boldsymbol{\theta}\|_{\infty} \right),
\end{equation}
so that $\|\cdot\|_{\text{scion}}$ can be written as the composition (as in Lemma \ref{lem:product_norm})
\begin{equation}
    \|\mW\|_{\text{scion}} = f(g_1(\mW^1), \ldots g_L(\mW^L), g_{L+1}(\boldsymbol{\theta})),
\end{equation}
with $g_i(\mM) = \|\mM\|_{2 \rightarrow 2}$ for $i \in [L]$,
$g_{L+1}(\boldsymbol{\theta}) = \|\boldsymbol{\theta}\|_{\infty}$, and $f(\vv) = \|\mD
\vv\|_{\infty}$, where $\mD = \text{Diag}(1, \ldots, 1, 1/\lambda) \in \R^{(L+1) \times
(L+1)}$. Therefore, by Lemma \ref{lem:product_norm}, the update in
\Eqref{eq:scion_update_inter} is equivalent to
\begin{equation} \label{eq:scion_update_inter_2}
\begin{aligned}
    \mW_{t+1}^{\ell} &= \mW_t^{\ell} + \eta_m \phi_{\ell} {\sf LMO}_{2 \rightarrow 2}(\mM_t^{\ell}) \\
    \boldsymbol{\theta}_{t+1} &= \boldsymbol{\theta}_t + \eta_m \phi_{L+1} {\sf LMO}_{\infty}(\vm_t^{\theta}),
\end{aligned}
\end{equation}
where $\boldsymbol{\phi} = -{\sf LMO}_f(\|\mM_t^1\|_{\text{nuc}}, \ldots,
\|\mM_t^L\|_{\text{nuc}}, \|\vm_t^{\theta}\|_{\text{ada}\infty,*})$. We know ${\sf
LMO}_{2 \rightarrow 2}(\mM) = -\text{polar}(\mM)$ and ${\sf LMO}_{\infty}(\vv) =
-\text{sign}(\vv)$, so the ${\sf LMO}$ terms in \Eqref{eq:scion_update_inter_2} can be
simplified as
\begin{equation}
\begin{aligned} \label{eq:scion_update_inter_3}
    \mW_{t+1}^{\ell} &= \mW_t^{\ell} - \eta_m \phi_{\ell} \text{polar}(\mM_t^{\ell}) \\
    \boldsymbol{\theta}_{t+1} &= \boldsymbol{\theta}_t - \eta_m \phi_{L+1} \text{sign}(\vm_t^{\theta}).
\end{aligned}
\end{equation}
To simplify $\boldsymbol{\phi}$, we use Lemma \ref{lem:linear_norm}. Denoting $\vu =
(\|\mM_t^1\|_{\text{nuc}}, \ldots, \|\mM_t^L\|_{\text{nuc}}, \|\vm_t^{\theta}\|_1)$, we
have
\begin{equation}
    \boldsymbol{\phi} = -{\sf LMO}_f(\vu) = -\mD^{-1} {\sf LMO}_{\infty}(\mD^{-T} \vu) = \mD^{-1} \sign(\mD^{-T} \vu) = \mD^{-1} \vone,
\end{equation}
so that $\phi_{\ell} = 1$ for $\ell \in [L]$ and $\phi_{L+1} = \lambda =
\eta_b/\eta_m$. Plugging back to \Eqref{eq:scion_update_inter_3} gives
\begin{equation}
\begin{aligned}
    \mW_{t+1}^{\ell} &= \mW_t^{\ell} - \eta_m \text{polar}(\mM_t^{\ell}) \\
    \boldsymbol{\theta}_{t+1} &= \boldsymbol{\theta}_t - \eta_a \text{sign}(\vm_t^{\theta}),
\end{aligned}
\end{equation}
which is exactly the update from \alg{Scion} (\Eqref{eq:scion}).
\end{proof}

Throughout our paper, \alg{PolarGrad} refers to the following algorithm:
\begin{equation} \label{eq:polargrad}
\begin{aligned}
    \mW_{t+1}^{\ell} &= \mW_t^{\ell} - \eta_s \|\mM_t^{\ell}\|_{\text{nuc}} \text{polar}(\mM_t^{\ell}) \\
    \boldsymbol{\theta}_{t+1} &= \boldsymbol{\theta}_t - \eta_b \frac{\vm_t^{\theta}}{\sqrt{\vv_t^{\theta}} + \epsilon}.
\end{aligned}
\end{equation}
\citet{polargrad} use the name "PolarGrad" to refer to a class of matrix-aware
optimization methods, whereas we use it to refer to the specific method called "Vanilla
PolarGrad" by \citet{polargrad} (see their Equation (8)), with Adam used for non-matrix
parameters.

\begin{proposition} \label{prop:polargrad}
\alg{PolarGrad} is exactly CSD with step size $\eta_m$ with respect to
\begin{equation}
    \|\mW\|_{\text{PG}} = \sqrt{\sum_{\ell=1}^L \|\mW^{\ell}\|_{2 \rightarrow 2}^2 + \frac{\eta_m}{\eta_b} \|\boldsymbol{\theta}\|_{\text{ada}2}^2}.
\end{equation}
\end{proposition}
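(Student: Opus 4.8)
The plan is to mirror the proofs of Propositions~\ref{prop:muon} and \ref{prop:scion}: recognize $\|\cdot\|_{\text{PG}}$ as a product norm and read off its ${\sf LMO}$ from Lemmas~\ref{lem:product_norm} and \ref{lem:linear_norm}. By Proposition~\ref{prop:csd}, one step of CSD with step size $\eta_m$ with respect to $\|\cdot\|_{\text{PG}}$ is $\mW_{t+1} = \mW_t + \eta_m {\sf LMO}_{\text{PG}}(\mM_t)$, where $\mM_t = (\mM_t^1, \ldots, \mM_t^L, \vm_t^{\theta})$ is the stacked momentum buffer. So the entire task reduces to evaluating ${\sf LMO}_{\text{PG}}(\mM_t)$ block by block and checking that the result is the \alg{PolarGrad} iteration.

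First I would write $\|\cdot\|_{\text{PG}} = f(g_1(\cdot), \ldots, g_{L+1}(\cdot))$ with $g_{\ell} = \|\cdot\|_{2 \rightarrow 2}$ for $\ell \in [L]$, $g_{L+1} = \|\cdot\|_{\text{ada2}}$, and aggregator $f(\va) = \|\mD \va\|_2$ for $\mD = \text{Diag}(1, \ldots, 1, \sqrt{\eta_m/\eta_b}) \in \R^{(L+1)\times(L+1)}$, so that the weight $\eta_m/\eta_b$ on the $\|\boldsymbol{\theta}\|_{\text{ada2}}^2$ term is captured by the last diagonal entry. Lemma~\ref{lem:product_norm} then gives ${\sf LMO}_{\text{PG}}(\mM_t) = (\phi_1 {\sf LMO}_{2 \rightarrow 2}(\mM_t^1), \ldots, \phi_{L+1} {\sf LMO}_{\text{ada2}}(\vm_t^{\theta}))$ with $\boldsymbol{\phi} = -{\sf LMO}_f(\vs)$ and $\vs = (\|\mM_t^1\|_{\text{nuc}}, \ldots, \|\mM_t^L\|_{\text{nuc}}, \|\vm_t^{\theta}\|_{\text{ada2},*})$, where the dual of the spectral norm is the nuclear norm. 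The per-block LMOs I can import directly: ${\sf LMO}_{2 \rightarrow 2}(\mM) = -\text{polar}(\mM)$, and the proof of Proposition~\ref{prop:adam_2} already shows, via Lemma~\ref{lem:linear_norm} with $\mD_a = \text{Diag}(\sqrt{\sqrt{\vv_t^{\theta}}+\epsilon})$ and ${\sf LMO}_2(\vv) = -\vv/\|\vv\|_2$, that ${\sf LMO}_{\text{ada2}}(\vm_t^{\theta})$ points along the \alg{Adam} direction $\vm_t^{\theta}/(\sqrt{\vv_t^{\theta}}+\epsilon)$, with companion dual norm $\|\vm_t^{\theta}\|_{\text{ada2},*} = \|\vm_t^{\theta}/\sqrt{\sqrt{\vv_t^{\theta}}+\epsilon}\|_2$.

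The one genuinely new step relative to the \alg{Muon} and \alg{Scion} proofs is evaluating $\boldsymbol{\phi}$, because the aggregator here is a weighted Euclidean norm rather than a weighted $\ell_{\infty}$ norm. Applying Lemma~\ref{lem:linear_norm} to $f$ with ${\sf LMO}_2(\vv) = -\vv/\|\vv\|_2$ yields $\boldsymbol{\phi} = \mD^{-2}\vs/\|\mD^{-1}\vs\|_2$, so each matrix weight $\phi_{\ell}$ is proportional to $\|\mM_t^{\ell}\|_{\text{nuc}}$ and the final weight $\phi_{L+1}$ carries the extra factor $\eta_b/\eta_m$ coming from the lower-right entry of $\mD^{-2}$. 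The main obstacle, absent in Propositions~\ref{prop:muon} and \ref{prop:scion} where the sign structure collapsed $\boldsymbol{\phi}$ to the constant vector $\mD^{-1}\vone$, is precisely that $\boldsymbol{\phi}$ is now data-dependent, so the nuclear-norm scaling of the matrix updates must be tracked through the aggregator weights rather than read off from a constant. The remaining work is bookkeeping: substitute $\phi_{\ell}{\sf LMO}_{2\rightarrow 2}(\mM_t^{\ell}) = -\phi_{\ell}\,\text{polar}(\mM_t^{\ell})$ into the matrix blocks, combine $\phi_{L+1}$ with $\|\vm_t^{\theta}\|_{\text{ada2},*}$ inside ${\sf LMO}_{\text{ada2}}(\vm_t^{\theta})$ and with the factor $\eta_b/\eta_m$ so that the $\eta_m$ step converts into the base learning rate $\eta_b$ on the $\boldsymbol{\theta}$ block, and verify that the assembled update matches the \alg{PolarGrad} update in \eqref{eq:polargrad} under the identification $\eta_s = \eta_m$, completing the proof.
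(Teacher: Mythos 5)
Your reduction to computing ${\sf LMO}_{\text{PG}}$ block by block and your formula $\boldsymbol{\phi} = \mD^{-2}\vs/\|\mD^{-1}\vs\|_2$ are both correct, but the final ``bookkeeping'' verification fails, and your own formula already shows why: $\boldsymbol{\phi}$ carries the data-dependent denominator
\[
N_t := \|\mD^{-1}\vs\|_2 = \sqrt{\textstyle\sum_{j=1}^L \|\mM_t^j\|_{\text{nuc}}^2 + \tfrac{\eta_b}{\eta_m}\|\vm_t^{\theta}\|_{\text{ada2},*}^2},
\]
and under CSD (Proposition \ref{prop:csd}) nothing cancels it. Assembling your blocks yields
\[
\mW_{t+1}^{\ell} = \mW_t^{\ell} - \eta_m \tfrac{\|\mM_t^{\ell}\|_{\text{nuc}}}{N_t}\,\text{polar}(\mM_t^{\ell}),
\qquad
\boldsymbol{\theta}_{t+1} = \boldsymbol{\theta}_t - \tfrac{\eta_b}{N_t}\,\tfrac{\vm_t^{\theta}}{\sqrt{\vv_t^{\theta}}+\epsilon},
\]
a globally \emph{normalized} update whose primal $\|\cdot\|_{\text{PG}}$-norm is pinned at $\eta_m$, as any CSD step must be. \alg{PolarGrad} (\Eqref{eq:polargrad}) has no such normalization: its update has primal norm $\eta_m N_t$, which varies with the momentum, so it cannot be CSD with a fixed step size at all. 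Your claim that the assembled update ``matches \alg{PolarGrad} under $\eta_s = \eta_m$'' is therefore false; the mismatch is exactly the factor $1/N_t$ that you tracked into $\boldsymbol{\phi}$ and then silently dropped.

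The root cause is that the proposition's ``CSD'' is a typo, and the paper's own proof confirms this: it invokes \Eqref{eq:rsd_2}, the \emph{regularized} steepest descent update for an $\ell_2$-type product norm. Under RSD (Proposition \ref{prop:rsd}) the update acquires the prefactor $\|\mM_t\|_{\text{PG},*} = N_t$, which cancels the $1/N_t$ inside your $\boldsymbol{\phi}$ and leaves precisely $-\eta_m\|\mM_t^{\ell}\|_{\text{nuc}}\,\text{polar}(\mM_t^{\ell})$ on the matrix blocks and $-\eta_b\,\vm_t^{\theta}/(\sqrt{\vv_t^{\theta}}+\epsilon)$ on $\boldsymbol{\theta}$, i.e.\ \Eqref{eq:polargrad} with $\eta_s = \eta_m$. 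Everything else in your plan is sound and parallels the paper: your decomposition with $g_{\ell} = \|\cdot\|_{2 \rightarrow 2}$, $g_{L+1} = \|\cdot\|_{\text{ada2}}$, and $f = \|\mD\,\cdot\,\|_2$ is equivalent to the paper's choice of folding the weight into $g_{L+1} = \|\cdot\|_{\text{ada2}}/\sqrt{\lambda}$ with $f = \|\cdot\|_2$ (the paper then shortcuts your explicit ${\sf LMO}_{\text{PG}}$ computation by citing \Eqref{eq:rsd_2} directly), and your per-block LMOs and duals are correct. Swap Proposition \ref{prop:csd} for Proposition \ref{prop:rsd} as the entry point and your derivation closes.
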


\begin{proof}
Denote $\lambda = \eta_b / \eta_m$. Notice that $\|\cdot\|_{\text{PG}}$ can be written
as a composition (as in Lemma \ref{lem:product_norm}) as:
\begin{equation}
    \|\mW\|_{\text{PG}} = f(g_1(\mW^1), \ldots, g_L(\mW^L), g_{L+1}(\boldsymbol{\theta})),
\end{equation}
with $g_i(\mM) = \|\mM\|_{2 \rightarrow 2}$ for $i \leq L$,
$g_{L+1}(\boldsymbol{\theta}) = \|\boldsymbol{\theta}\|_{\text{ada}2}/\sqrt{\lambda}$,
and $f(\vv) = \|\vv\|_2$. Therefore, $\|\cdot\|_{\text{PG}}$ uses the $\ell_2$ norm as
the product norm, so \Eqref{eq:rsd_2} implies that the update can be rewritten as
\begin{equation}
\begin{aligned} \label{eq:pg_inter}
    \mW_{t+1}^{\ell} &= \mW_t^{\ell} + \eta_m \|\mM_t^{\ell}\|_{\text{nuc}} {\sf LMO}_{2 \rightarrow 2}(\mM_t^{\ell}) \\
    \boldsymbol{\theta}_{t+1} &= \boldsymbol{\theta}_t + \lambda \eta_m \|\vm_t^{\theta}\|_{\text{ada}2,*} {\sf LMO}_{\text{ada}2}(\vm_t^{\theta}).
\end{aligned}
\end{equation}
The update to $\mW_t^{\ell}$ can be simplified by plugging in ${\sf LMO}_{2 \rightarrow
2}(\mM) = -\text{polar}(\mM)$, and the update to $\boldsymbol{\theta}_t$ can be
simplified by plugging in the definition of $\lambda$ and the dual and {\sf LMO} of
$\|\cdot\|_{\text{ada}2}$ from Proposition \ref{prop:adam_2}. This yields that
\Eqref{eq:pg_inter} is equivalent to
\begin{equation}
\begin{aligned}
    \mW_{t+1}^{\ell} &= \mW_t^{\ell} - \eta_m \|\mM_t^{\ell}\|_{\text{nuc}} \text{polar}(\mM_t^{\ell}) \\
    \boldsymbol{\theta}_{t+1} &= \boldsymbol{\theta}_t - \eta_b \frac{\vm_t^{\theta}}{\sqrt{\vv_t^{\theta}} + \epsilon},
\end{aligned}
\end{equation}
which is exactly \alg{PolarGrad} (\Eqref{eq:polargrad}).
\end{proof}

\section{Proofs from Section \ref{sec:truncation}} \label{app:momo_proofs}

%%%%%%%%%%%RESTATE%%%%%%%%%%%%%%%%%%
\propmomocsd*
%%%%%%%%%%%%%%%%%%%%%%%%%%%%%%%%%%%%
\begin{proof}[Proof of Proposition \ref{prop:momo_csd}]
Similar to the proofs of Proposition \ref{prop:csd} and \ref{prop:rsd}, we change
variables to parameterize the magnitude $r = \|\vw - \vw_t\|$ and direction
$\boldsymbol{\Delta} = (\vw - \vw_t) / \|\vw - \vw_t\|$ of the update. So $\vw_{t+1} =
\vw_t + r_t \boldsymbol{\Delta_t}$, where
\begin{align}
    (r_t, \boldsymbol{\Delta}_t) = \argmin_{r \in [0, \eta], \|\boldsymbol{\Delta}\|=1} \left\{ \max \left( \tilde{F}_t + r \langle \vm_t, \boldsymbol{\Delta} \rangle, F_* \right) \right\}.
\end{align}
Since $\max \left( \tilde{F}_t + r \langle \vm_t, \boldsymbol{\Delta} \rangle, F_*
\right)$ is monotonic in $\langle \vm_t, \boldsymbol{\Delta} \rangle$,
\begin{align}
    \boldsymbol{\Delta}_t = \argmin_{\|\boldsymbol{\Delta}\|=1} \langle \vm_t, \boldsymbol{\Delta} \rangle = {\sf LMO}(\vm_t),
\end{align}
so
\begin{align}
    r_t = \argmin_{r \in [0, \eta]} \left\{ \max \left( \tilde{F}_t - r \langle \vm_t, \boldsymbol{\Delta_t} \rangle, F_* \right) \right\} = \argmin_{r \in [0, \eta]} \left\{ \max \left( \tilde{F}_t - r\|\vm_t\|_*, F_* \right) \right\}.
\end{align}
Note that $\max \left( \tilde{F}_t - r\|\vm_t\|_*, F_* \right)$ can have multiple
minimizing values of $r \in [0, \eta]$. If $\eta \leq (\tilde{F}_t - F_*)/\|\vm_t\|_*$,
then the minimizer $r = \eta$ is unique. If $\eta \geq (\tilde{F}_t - F_*)/\|\vm_t\|_*$,
then any $r$ with $(\tilde{F}_t - F_*)/\|\vm_t\|_* \leq r \leq \eta$ achieves the
minimum $F_*$. In this case, we choose the value that minimizes the norm of the update,
i.e. $r_t = (\tilde{F}_t - F_*)/\|\vm_t\|_*$. These two cases are summarized as:
\begin{equation}
    r_t = \min \left( \eta, \tfrac{\tilde{F}_t - F_*}{\|\vm_t\|_*} \right),
\end{equation}
so
\begin{equation}
    \vw_{t+1} = \vw_t + \min \left( \eta, \tfrac{\tilde{F}_t - F_*}{\|\vm_t\|_*} \right) {\sf LMO}(\vm_t).
\end{equation}
\end{proof}

%%%%%%%%%%%RESTATE%%%%%%%%%%%%%%%%%%
\propmomorsd*
%%%%%%%%%%%%%%%%%%%%%%%%%%%%%%%%%%%%
\begin{proof}[Proof of Proposition \ref{prop:momo_rsd}]
Similar to the proofs of Proposition \ref{prop:csd} and \ref{prop:rsd}, we perform a
change of variables to parameterize the magnitude $r = \|\vw - \vw_t\|$ and direction
$\boldsymbol{\Delta} = (\vw - \vw_t) / \|\vw - \vw_t\|$ of the update. So $\vw_{t+1} =
\vw_t + r_t \boldsymbol{\Delta}_t$, where
\begin{equation}
    (r_t, \boldsymbol{\Delta}_t) = \argmin_{r \geq 0, \|\boldsymbol{\Delta}\| = 1} \left\{ \max \left( \tilde{F}_t + r \langle \vm_t, \boldsymbol{\Delta} \rangle, F_* \right) + \tfrac{r^2}{2 \eta} \right\}.
\end{equation}
Note that $\max \left( \tilde{F}_t + r \langle \vm_t, \boldsymbol{\Delta} \rangle, F_*
\right) + \tfrac{r^2}{2 \eta}$ is monotonic in $\langle \vm_t, \boldsymbol{\Delta}
\rangle$, so
\begin{align}
    \boldsymbol{\Delta}_t = \argmin_{\|\boldsymbol{\Delta}\|=1} \left\{ \langle \vm_t, \boldsymbol{\Delta} \rangle \right\} = {\sf LMO}(\vm_t),
\end{align}
and
\begin{align}
    r_t &= \argmin_{r \geq 0} \left\{ \max \left( \tilde{F}_t + r \langle \vm_t, \boldsymbol{\Delta}_t \rangle, F_* \right) + \tfrac{r^2}{2 \eta} \right\} \\
    &= \argmin_{r \geq 0} \left\{ \max \left( \tilde{F}_t - r \|\vm_t\|_*, F_* \right) + \tfrac{r^2}{2 \eta} \right\}.
\end{align}
Denote $f(r) = \max \left( \tilde{F}_t - r \|\vm_t\|_*, F_* \right) + \tfrac{r^2}{2
\eta}$. Then $f$ can be written piecewise as
\begin{equation}
    f(r) = \begin{cases}
        \tilde{F}_t - r \|\vm_t\|_* + \tfrac{r^2}{2 \eta} & r \leq \tfrac{\tilde{F}_t - F_*}{\|\vm_t\|_*} \\
        F_* + \tfrac{r^2}{2 \eta} & r \geq \tfrac{\tilde{F}_t - F_*}{\|\vm_t\|_*}
    \end{cases}.
\end{equation}
Note that $f$ is increasing for $r \geq (\tilde{F}_t - F_*)/\|\vm_t\|_*$, so its
minimizer is the minimizer of $\tilde{F}_t - r \|\vm_t\|_* + \tfrac{r^2}{2 \eta}$ for $r
\leq (\tilde{F}_t - F_*)/\|\vm_t\|_*$. So
\begin{equation}
    r_t = \min \left( \eta \|\vm_t\|_*, \tfrac{\tilde{F}_t - F_*}{\|\vm_t\|_*} \right),
\end{equation}
therefore
\begin{equation}
    \vw_{t+1} = \vw_t + \left( \eta, \tfrac{\tilde{F}_t - F_*}{\|\vm_t\|_*^2} \right) \|\vm_t\|_* {\sf LMO}(\vm_t).
\end{equation}
Note that this value of $\vw_{t+1}$ is the unique minimizer of
\begin{equation}
    \max \left( \tilde{F}_t + \langle \vm_t, \vw - \vw_t \rangle, F_* \right) + \tfrac{1}{2 \eta} \|\vw - \vw_t\|^2,
\end{equation}
since this is function is strongly convex (sum of a convex function and a strongly
convex function), and therefore has a unique minimizer.
\end{proof}

The pseudocode for Constrained Momo and Regularized Momo are shown in Algorithm
\ref{alg:momo}. To see why this algorithm correctly computes $\tilde{F}_t$, note that
\begin{align}
    \tilde{F}_t &= \sum_{i=0}^t \rho_{t,i} \left( F_i(\vw_i) + \langle \vg_i, \vw_t - \vw_i \rangle \right) \\
    &= \sum_{i=0}^t \rho_{t,i} \left( F_i(\vw_i) - \langle \vg_i, \vw_i \rangle \right) + \sum_{i=0}^t \rho_{t,i} \langle \vg_i, \vw_t \rangle \\
    &= \sum_{i=0}^t \rho_{t,i} \left( F_i(\vw_i) - \langle \vg_i, \vw_i \rangle \right) + \langle \vm_t, \vw_t \rangle.
\end{align}
So denoting $\tilde{f}_t = \sum_{i=0}^t \rho_{t,i} \left( F_i(\vw_i) - \langle \vg_i,
\vw_i \rangle \right)$, we have $\tilde{F}_t = \tilde{f}_t + \langle \vm_t, \vw_t
\rangle$, and
\begin{equation}
    \tilde{f}_t = \beta \tilde{f}_{t-1} + (1-\beta) \left( F_t(\vw_t) - \langle \vg_t, \vw_t \rangle \right),
\end{equation}
so that $\tilde{f}_t$ is given by the running average used in Algorithm \ref{alg:momo}.

\begin{algorithm}[t]
    \caption{Momo (Constrained or Regularized)}
    \label{alg:momo}
    \AlgIO{Inputs}{$\vw_0$, learning rate $\eta$, momentum $\beta$, loss lower bound $F_*$}
    \begin{algorithmic}[1]
        \For{$t = 0, 1, \ldots, T-1$}
            \State $\vg_t \gets \text{backward}(\vw_t)$
            \State $\vm_t = \beta \vm_{t-1} + (1 - \beta) \vg_t$
            \State $\tilde{f}_t = \beta \tilde{f}_{t-1} + (1 - \beta) \left( F_t(\vw_t) - \langle \vg_t, \vw_t \rangle \right)$
            \State $\tilde{F}_t = \tilde{f}_t + \langle \vm_t, \vw_t \rangle$
            \If{\text{Constrained}}
                \State $\vw_{t+1} = \vw_t + \min \left( \eta, \tfrac{\tilde{F}_t - F_*}{\|\vm_t\|_*} \right) {\sf LMO}(\vm_t)$
            \Else{}
                \State $\vw_{t+1} = \vw_t + \min \left( \eta, \tfrac{\tilde{F}_t - F_*}{\|\vm_t\|_*^2} \right) \|\vm_t\|_* {\sf LMO}(\vm_t)$
            \EndIf
        \EndFor
    \end{algorithmic}
\end{algorithm}

Now we derive the closed-form update for our proposed algorithm
\alg{MuonMax}-\alg{Momo}. Algorithm \ref{alg:muonmax} has the pseudocode for the
algorithm, and Proposition \ref{prop:muonmax} proves that this procedure implements
Regularized \alg{Momo} with respect to $\|\cdot\|_{\text{MM}}$. Note that Algorithm
\ref{alg:muonmax} shows the pseudocode with stale nuclear norm approximations, while
Proposition \ref{prop:muonmax} considers the vanilla version.

It should be noted that, if we set $\beta = 0$, the stepsize scaling $\sum_{\ell=1}^L
\|\mG_t^{\ell}\|_{\text{nuc}}$ for the matrix layers in Algorithm \ref{alg:muonmax} was
previously mentioned by \citet{anthology} (see their Proposition 5). However, we are not
aware of any existing implementation or evaluation of this stepsize scaling, and we
found in our experiments that this sort of scaling (without model truncation) is not
competitive with \alg{Muon}.

\begin{algorithm}[t]
    \caption{MuonMax-Momo}
    \label{alg:muonmax}
    \AlgIO{Inputs}{$\mW_0^1, \ldots, \mW_0^L, \boldsymbol{\theta}_0$, learning rates $\eta_m, \eta_b$, EMA parameters $\beta, \beta_2$, loss lower bound $F_*$}
    \AlgIO{Defaults}{$\eta_m = \eta_b = 0.01$, $\beta = \beta_2 = 0.95$}
    \begin{algorithmic}[1]
        \For{$t = 0, 1, \ldots, T-1$}
            \item[]
            \item[]
            \State \tikzmark{mom-beg} $(\mG_t^1, \ldots, \mG_t^L, \vg_t^{\theta}) \gets \text{backward}(\mW_t^1, \ldots \mW_t^L, \boldsymbol{\theta}_t)$
            \For{$\ell = 1, \ldots, L$}
                \State $\mM_t^{\ell} = \beta \mM_{t-1}^{\ell} + (1 - \beta) \mG_t^{\ell}$
            \EndFor
            \State $\vm_t^{\theta} = \beta \vm_{t-1}^{\theta} + (1 - \beta) \vg_t^{\theta}$
            \State $\vv_t^{\theta} = \beta_2 \vv_{t-1}^{\theta} + (1 - \beta_2) \vg_t^{\theta} \odot \vg_t^{\theta}$ \hspace{2.5cm}\tikzmark{mom-end}
            
            \item[]
            \item[]
            \State \tikzmark{trunc-beg} $\tilde{f}_t = \beta \tilde{f}_{t-1} + (1 - \beta) \left( F_t(\mW_t) - \sum_{\ell=1}^L \langle \mG_t^{\ell}, \mW_t^{\ell} \rangle - \langle \vg_t^{\theta}, \vm_t^{\theta} \right)$
            \State $\tilde{F}_t = \tilde{f}_t + \sum_{\ell=1}^L \langle \mM_t^{\ell}, \mW_t^{\ell} \rangle + \langle \vm_t^{\theta}, \boldsymbol{\theta}_t \rangle$
            \State $d_t = \sqrt{ \left( \sum_{\ell=1}^L d_{t-1}^{\ell} \right)^2 + \frac{\eta_b}{\eta_m} \big\| \tfrac{\vm_t^{\theta}}{\sqrt{ \vv_t^{\theta} + \epsilon}} \big\|_2^2 }$ \hspace{3.25cm}\tikzmark{trunc-end}

            \item[]
            \item[]
            \For{$\ell = 1, \ldots, L$} \hspace{-3cm}\tikzmark{update-beg} 
                \State $\mP \gets \text{polar}(\mM_t^{\ell})$
                \State $\mW_{t+1}^{\ell} \gets \mW_t^{\ell} - \min \left( \eta_m, \tfrac{\tilde{F}_t - F_*}{d_t^2} \right) \left( \sum_{j=1}^L d_{t-1}^{\ell} \right) \mP$
                \State $d_t^{\ell} \gets \langle \mP, \mM_t^{\ell} \rangle$
            \EndFor
            \State $\boldsymbol{\theta}_{t+1} = \boldsymbol{\theta}_t - \min \left( \eta_b, \frac{\eta_b}{\eta_m} \tfrac{\tilde{F}_t - F_*}{d_t^2} \right) \tfrac{\vm_t^{\theta}}{\sqrt{\vv_t^{\theta}} + \epsilon}$ \hspace{3cm}\tikzmark{update-end}
        \item[]
        \EndFor
    \end{algorithmic}
{
  \setlength{\AlgBoxPadTop}{1.2ex}
  \setlength{\AlgBoxPadBottom}{0.2ex}
  \AlgBox{mom-beg}{mom-end}{algC1}{Update momentum.}
}
{
  \setlength{\AlgBoxPadTop}{1.4ex}
  \setlength{\AlgBoxPadBottom}{0.9ex}
  \AlgBox{trunc-beg}{trunc-end}{algC2}{Update internal truncation variables.}
}
{
  \setlength{\AlgBoxPadTop}{0.8ex}
  \setlength{\AlgBoxPadBottom}{0.8ex}
  \AlgBox{update-beg}{update-end}{algC3}{Update parameters.}
}
\end{algorithm}

\propmuonmax*
%%%%%%%%%%%%%%%%%%%%%%%%%%%%%%%%%%%%
\begin{proof}[Proof of Proposition \ref{prop:muonmax}]
The proof structure is largely similar to that of Proposition \ref{prop:muon}. By
Proposition \ref{prop:momo_rsd}, one step of Regularized Momo w.r.t.
$\|\cdot\|_{\text{MM}}$ can be written as
\begin{equation} \label{eq:muonmax_update_inter}
    \mW_{t+1} = \mW_t + \min \left( \eta_m, \tfrac{\tilde{F}_t - F_*}{\|\mM_t\|_{\text{MM},*}^2} \right) \|\mM_t\|_{\text{MM},*} {\sf LMO}_{\text{MM}}(\mM_t),
\end{equation}
where $\mM_t$ is the momentum buffer for all network parameters, i.e. it is the
concatenation of the momentum buffers of each parameter:
\begin{equation}
    \mM_t = (\mM_t^1, \ldots, \mM_t^L, \vm_t^{\theta}).
\end{equation}
Comparing \Eqref{eq:muonmax_update_inter} with \Eqref{eq:muonmax_momo}, we have to prove
that $d_t = \|\mM_t\|_{\text{MM},*}$ and compute $\|\mM_t\|_{\text{MM},*} {\sf
LMO}_{\text{MM}}(\mM_t)$. To do this, we write $\|\cdot\|_{\text{MM}}$ with
repeated compositions of norms whose dual and {\sf LMO} we already know.
Denoting $\lambda = \eta_b/\eta_m$ and
\begin{align}
    f(z_1, z_2) &= \sqrt{z_1^2 + \tfrac{1}{\lambda} z_2^2} \\
    g_1(\mW_1, \ldots, \mW_L) &= \max_{\ell \in [L]} \|\mW_{\ell}\|_{2 \rightarrow 2} \\
    g_2(\boldsymbol{\theta}) &= \|\boldsymbol{\theta}\|_{\text{ada}2},
\end{align}
we can write $\|\cdot\|_{\text{MM}}$ as a composition in the notation of Lemma
\ref{lem:product_norm} as
\begin{equation}
    \|\mW\|_{\text{MM}} = f(g_1(\mW_1, \ldots, \mW_L), g_2(\boldsymbol{\theta})).
\end{equation}
Further denoting $\mD = \text{diag}(1, 1/\sqrt{\lambda})$, we can write $f(z_1, z_2) =
\|\mD (z_1, z_2)^T\|_2$. We can now use Lemma \ref{lem:product_norm} to compute the
dual of $\|\cdot\|_{\text{MM},*}$ as
\begin{align}
    \|\mW\|_{\text{MM},*} &= f_*(g_{1,*}(\mW_1, \ldots, \mW_L), g_{2,*}(\boldsymbol{\theta})) \\
    &\Eqmark{i}{=} \sqrt{ g_{1,*}^2(\mW_1, \ldots, \mW_L) + \lambda g_{2,*}^2(\boldsymbol{\theta}) } \\
    &\Eqmark{ii}{=} \sqrt{ g_{1,*}^2(\mW_1, \ldots, \mW_L) + \lambda \left\| \tfrac{\boldsymbol{\theta}}{\sqrt{\sqrt{\vv_t^{\theta} + \epsilon}}} \right\|^2 } \\
    &\Eqmark{iii}{=} \sqrt{ \left( \sum_{\ell=1}^L \|\mW_{\ell}\|_{\text{nuc}} \right)^2 + \lambda \left\| \tfrac{\boldsymbol{\theta}}{\sqrt{\sqrt{\vv_t^{\theta} + \epsilon}}} \right\|^2 }
\end{align}
where $(i)$ uses Lemma \ref{lem:linear_norm} to plug in the dual of $f$, $(ii)$ plugs in
the dual of $\|\cdot\|_{\text{ada}2}$ which we computed in the proof of Proposition
\ref{prop:adam_2}, and $(iii)$ uses Lemma \ref{lem:product_norm} to compute the dual of
$g_1$, which itself is a composition $g_1 = \ell_{\infty} \circ (\|\cdot\|_{2
\rightarrow 2}, \ldots, \|\cdot\|_{2 \rightarrow 2})$. This confirms that $d_t =
\|\mM_t\|_{\text{MM},*}$, so
\begin{equation} \label{eq:MM_update_inter}
    \mW_{t+1} = \mW_t + \min \left( \eta_m, \tfrac{\tilde{F}_t - F_*}{d_t^2} \right) d_t {\sf LMO}_{\text{MM}}(\mM_t),
\end{equation}
To compute the {\sf LMO} of $\|\cdot\|_{\text{MM}}$, we again use Lemma
\ref{lem:product_norm}. Denoting $(\phi_1, \phi_2) = -{\sf LMO}_f(g_{1,*}(\mW_1, \ldots,
\mW_L), g_{2,*}(\boldsymbol{\theta}))$, Lemma \ref{lem:product_norm} implies
\begin{align}
    {\sf LMO}_{\text{MM}}(\mW) &= (\phi_1 {\sf LMO}_{g_1}(\mW_1, \ldots, \mW_L), \phi_2 {\sf LMO}_{g_2}(\boldsymbol{\theta})) \\
    &\Eqmark{i}{=} (-\phi_1 (\text{polar}(\mW_1), \ldots, \text{polar}(\mW_L)), \phi_2 {\sf LMO}_{g_2}(\boldsymbol{\theta}))  \\
    &\Eqmark{ii}{=} -\left( \phi_1 (\text{polar}(\mW_1), \ldots, \text{polar}(\mW_L)), \phi_2 \tfrac{\boldsymbol{\theta}}{\sqrt{\vv_t}+\epsilon} \bigg/ \left\| \tfrac{\boldsymbol{\theta}}{\sqrt{\sqrt{\vv_t}+\epsilon}} \right\|_2 \right), \label{eq:MM_lmo_inter}
\end{align}
where $(i)$ uses Lemma \ref{lem:product_norm} to compute the LMO of $g_1$, which again
is the composition $g_1 = \ell_{\infty} \circ (\|\cdot\|_{2 \rightarrow 2}, \ldots,
\|\cdot\|_{2 \rightarrow 2})$, and $(iii)$ uses Lemma \ref{lem:linear_norm} to plug in
the dual norm of $g_2 = \|\cdot\|_{\text{ada}2}$. The $\phi$ terms can be simplified as
\begin{align}
    (\phi_1, \phi_2) &= -{\sf LMO}_f(g_{1,*}(\mW_1, \ldots, \mW_L), g_{2,*}(\boldsymbol{\theta})) \\
    &\Eqmark{i}{=} -{\sf LMO}_f \left( \sum_{\ell=1}^L \|\mW_{\ell}\|_{\text{nuc}}, \left\| \tfrac{\boldsymbol{\theta}}{\sqrt{\sqrt{\vv_t} + \epsilon}} \right\|_2 \right) \\
    &\Eqmark{ii}{=} -\mD^{-1} {\sf LMO}_2 \left( \sum_{\ell=1}^L \|\mW_{\ell}\|_{\text{nuc}}, \sqrt{\lambda} \left\| \tfrac{\boldsymbol{\theta}}{\sqrt{\sqrt{\vv_t} + \epsilon}} \right\|_2 \right) \\
    &= \tfrac{1}{d_t} \mD^{-1} \left( \sum_{\ell=1}^L \|\mW_{\ell}\|_{\text{nuc}}, \sqrt{\lambda} \left\| \tfrac{\boldsymbol{\theta}}{\sqrt{\sqrt{\vv_t} + \epsilon}} \right\|_2 \right) \\
    &= \tfrac{1}{d_t} \left( \sum_{\ell=1}^L \|\mW_{\ell}\|_{\text{nuc}}, \lambda \left\| \tfrac{\boldsymbol{\theta}}{\sqrt{\sqrt{\vv_t} + \epsilon}} \right\|_2 \right),
\end{align}
where $(i)$ plugs in the previously computed duals $g_{1,*}$ and $g_{2,*}$, and $(ii)$
uses Lemma \ref{lem:linear_norm} to plug in the {\sf LMO} of $f$. Plugging the values of
$(\phi_1, \phi_2)$ into \Eqref{eq:MM_lmo_inter} yields
\begin{equation}
    {\sf LMO}_{\text{MM}}(\mW) = -\tfrac{1}{d_t} \left( \left( \sum_{\ell=1}^L \|\mW_{\ell}\|_{\text{nuc}} \right) (\text{polar}(\mW_1), \ldots, \text{polar}(\mW_L)), \lambda \tfrac{\boldsymbol{\theta}}{\sqrt{\vv_t} + \epsilon} \right),
\end{equation}
and finally, plugging this back into \Eqref{eq:MM_update_inter} yields
\begin{align}
    \mW_{t+1}^{\ell} &= \mW_t - \min \left( \eta_m, \tfrac{\tilde{F}_t - F_*}{d_t^2} \right) \left( \sum_{i=1}^L \|\mW_i\|_{\text{nuc}} \right) \text{polar}(\mM_t^{\ell}) \\
    \boldsymbol{\theta}_{t+1} &= \boldsymbol{\theta}_t - \min \left( \eta_m, \tfrac{\tilde{F}_t - F_*}{d_t^2} \right) \lambda \tfrac{\vm_t^{\theta}}{\sqrt{\vv_t^{\theta}} + \epsilon} = \boldsymbol{\theta}_t - \min \left( \eta_b, \tfrac{\eta_b}{\eta_m} \tfrac{\tilde{F}_t - F_*}{d_t^2} \right) \tfrac{\vm_t^{\theta}}{\sqrt{\vv_t^{\theta}} + \epsilon},
\end{align}
which is exactly the update in \Eqref{eq:muonmax_momo}.
\end{proof}

\section{Experimental Details} \label{app:experiment_details}

\paragraph{Setup} We did not use weight decay or Nesterov momentum, as we found both to
have very small effects on final loss. All methods use a warmup-stable-decay learning
rate schedule, where the learning rate is linearly warmed up for the first 5\% of steps,
held constant until halfway through training, then linearly decayed to 10\% of the
warmed up value. We use a context length of 1024 and a batch size of 512. Rather than
the Newton-Schulz iterations of the original \alg{Muon} implementation, we use the
PolarExpress algorithm \citep{amsel2025polar} to compute approximate polar factors. In
this implementation, the weights and gradients are computed in \texttt{float32}, whereas
the polar factor is computed in \texttt{bfloat16} by the
PolarExpress~\citep{amsel2025polar}.

\paragraph{Tuning Protocol}
For the experiments with FineWeb data in Section \ref{sec:fineweb}, we tune 36
variations of steepest descent using an iterated grid search to for the two learning
rates $\eta_m$ and $\eta_b$. For the 18 variations without model truncation, we first
fix the base learning rate at an intermediate value $\eta_b = $1e-3, then tune the
\alg{Muon} learning rate with grid search over $\eta_m \in \{$1e-3, 1e-2, 1e-1, 1$\}$.
Some algorithms diverged with $\eta_b = $1e-3, and for these algorithms we instead used
$\eta_b = $1e-6 and searched over $\eta_m \in \{$1e-6, 1e-5, 1e-4, 1e-3$\}$. For those
algorithms that used $\eta_b = $1e-6 for the first phase, we instead search over $\eta_b
\in \{$1e-7, 1e-6, 1e-5, 1e-4$\}$ in the second phase. Finally, for all of these grid
searches, we extend the search space individually for each algorithm until the best LR
is not a boundary point of the search space. The resulting tuned LRs are shown in Table
\ref{tab:all_fineweb_losses}.

For the 18 variations with model truncation, rather than entirely retuning all
algorithms, we reuse the tuned LR ratio $\eta_m/\eta_b$ and do a single grid search
where $\eta_m$ and $\eta_b$ scale together. More specifically, we run each algorithm
with LRs $(\rho \eta_m, \rho \eta_b)$, where $(\eta_m, \eta_b)$ are the LRs tuned for
each algorithm without truncation, and the scaling factor $\rho$ ranges over $\rho \in
\{0.3, 1, 3, 10, 30, 100\}$. We found that the best value of $\rho$ for each algorithm
was always at least $1$ and at most $30$. The resulting tuned LRs are shown in Table
\ref{tab:all_fineweb_losses_truncated}.

\paragraph{Hybrid Norm Definition} Recall that \alg{Muon}-\alg{Max} is defined
as regularized steepest descent with respect to the following norm:
\begin{equation}
    \|\mW\|_{\text{MM}} = \textstyle\sqrt{ \big( \max_{\ell \in [L]} \|\mW^{\ell}\|_{2 \rightarrow 2} \big)^2 + \tfrac{\eta_m}{\eta_b} \|\boldsymbol{\theta}\|_{\text{ada2}}^2 }.
\end{equation}
This norm fits into our framework by assigning the spectral norm to each weight
matrix $\mW_{\ell}$, assigning $\|\cdot\|_{\text{ada2}}$ to the remaining
parameters, and aggregating norms for all parameters with the following "hybrid"
product norm:
\begin{equation} \label{eq:hyb_norm_def}
    \|(v_1, \ldots, v_L, v_{L+1})\|_{\text{hyb}} = \sqrt{ \big( \max_{\ell \in [L]} v_{\ell} \big)^2 + \tfrac{\eta_m}{\eta_b} v_{L+1}^2 }.
\end{equation}

\section{Additional Experimental Results} \label{app:experiment_results}

\subsection{FineWeb}
The final validation loss reached by all 36 of our evaluated methods is shown in Tables
\ref{tab:all_fineweb_losses} and \ref{tab:all_fineweb_losses_truncated}. Each method is
denoted as a 3-tuple of settings from our steepest descent framework: regularized vs
constrained steepest descent, product norm, and norm for parameters besides hidden
weight matrices.

For the methods without model truncation (Table \ref{tab:all_fineweb_losses}), we see
that the RSD methods struggle generally lag behind the CSD methods, likely due to a lack
of update normalization. For the CSD methods, \alg{Muon} and \alg{Scion} are among the
best variations, though the best performing method is actually (Constrained,
$\|\cdot\|_{\text{hyb}}$, $\|\cdot\|_{\text{ada}2}$) (we will return to discuss this
method shortly).

For the methods with model truncation (Table \ref{tab:all_fineweb_losses_truncated}), we
see that both CSD and RSD methods are competitive, meaning that in general model
truncation helped RSD methods more than CSD methods (at least in terms of final loss
with tuned LRs). \alg{Muon}-\alg{Momo} has the lowest loss at 3.551 and
\alg{Scion}-\alg{Momo} is again among the best performers, but actually many methods
achieve losses very close to 3.551. Again, we see that (Constrained,
$\|\cdot\|_{\text{hyb}}$, $\|\cdot\|_{\text{ada}2}$) achieves a very low loss, only
being outperformed by \alg{Muon}-\alg{Momo}.

The method (Constrained, $\|\cdot\|_{\text{hyb}}$, $\|\cdot\|_{\text{ada}2}$) is quite
similar to our proposed method \alg{Muon}-\alg{Max}, the only difference being the use
of a normalized upate. While this method does achieve a lower loss after tuning than
\alg{MuonMax}, we found that this method was not as robust to learning rate tuning. So
this method was bested by \alg{Muon}-\alg{Momo} in terms of final loss, and it was
bested by \alg{MuonMax}-\alg{Momo} in terms of learning rate sensitivity, and for this
reason we did not perform further evaluations with this method.

\begin{table}[t]
\centering
\caption{Final validation losses for all variations without model truncation.}
\begin{tabular}{lcccc}
\toprule
(SD type, Product Norm, Backup Norm) & Muon LR & Other LR & Final Loss & Name \\
\midrule
(Regularized, $\|\cdot\|_{\infty}$, $\|\cdot\|_{\infty}$) & 1e-3 & 1e-5 & 3.783 & - \\
(Constrained, $\|\cdot\|_{\infty}$, $\|\cdot\|_{\infty}$) & 1e-2 & 1e-3 & 3.599 & Scion \\
(Regularized, $\|\cdot\|_2$, $\|\cdot\|_{\infty}$) & 1e-1 & 1e-6 & 4.179 & - \\
(Constrained, $\|\cdot\|_2$, $\|\cdot\|_{\infty}$) & 1e-1 & 1e-2 & 3.712 & - \\
(Regularized, $\|\cdot\|_{\text{hyb}}$, $\|\cdot\|_{\infty}$) & 1e-3 & 1e-5 & 3.826 & - \\
(Constrained, $\|\cdot\|_{\text{hyb}}$, $\|\cdot\|_{\infty}$) & 1e-2 & 1e-3 & 3.610 & - \\
(Regularized, $\|\cdot\|_{\infty}$, $\|\cdot\|_{\text{ada}\infty}$) & 1e-3 & 1e-5 & 3.859 & - \\
(Constrained, $\|\cdot\|_{\infty}$, $\|\cdot\|_{\text{ada}\infty}$) & 1e-2 & 1e-3 & 3.604 & Muon \\
(Regularized, $\|\cdot\|_2$, $\|\cdot\|_{\text{ada}\infty}$) & 1e-1 & 1e-4 & 4.229 & - \\
(Constrained, $\|\cdot\|_2$, $\|\cdot\|_{\text{ada}\infty}$) & 1e-1 & 1e-2 & 3.748 & - \\
(Regularized, $\|\cdot\|_{\text{hyb}}$, $\|\cdot\|_{\text{ada}\infty}$) & 1e-3 & 1e-4 & 3.917 & - \\
(Constrained, $\|\cdot\|_{\text{hyb}}$, $\|\cdot\|_{\text{ada}\infty}$) & 1e-2 & 1e-2 & 3.628 & - \\
(Regularized, $\|\cdot\|_{\infty}$, $\|\cdot\|_{\text{ada}2}$) & 1e-3 & 1e-4 & 3.757 & - \\
(Constrained, $\|\cdot\|_{\infty}$, $\|\cdot\|_{\text{ada}2}$) & 1e-2 & 1e-3 & 3.701 & - \\
(Regularized, $\|\cdot\|_2$, $\|\cdot\|_{\text{ada}2}$) & 1e-1 & 1e-3 & 4.049 & PolarGrad \\
(Constrained, $\|\cdot\|_2$, $\|\cdot\|_{\text{ada}2}$) & 1e-1 & 1e-2 & 3.664 & - \\
(Regularized, $\|\cdot\|_{\text{hyb}}$, $\|\cdot\|_{\text{ada}2}$) & 1e-3 & 1e-3 & 3.791 & MuonMax \\
(Constrained, $\|\cdot\|_{\text{hyb}}$, $\|\cdot\|_{\text{ada}2}$) & 1e-2 & 1e-2 & 3.585 & - \\
\bottomrule
\end{tabular}
\label{tab:all_fineweb_losses} 
\end{table}

\begin{table}[t]
\centering
\caption{Final validation losses for all variations with model truncation.}
\begin{tabular}{lcccc}
\toprule
(SD type, Product Norm, Backup Norm) & Muon LR & Other LR & Final Loss & Name \\
\midrule
(Regularized, $\|\cdot\|_{\infty}$, $\|\cdot\|_{\infty}$) & 1e-2 & 1e-4 & 3.627 & - \\
(Constrained, $\|\cdot\|_{\infty}$, $\|\cdot\|_{\infty}$) & 1e-2 & 1e-3 & 3.592 & Scion-Momo \\
(Regularized, $\|\cdot\|_2$, $\|\cdot\|_{\infty}$) & 1 & 1e-5 & 3.728 & - \\
(Constrained, $\|\cdot\|_2$, $\|\cdot\|_{\infty}$) & 1e-1 & 1e-2 & 3.843 & - \\
(Regularized, $\|\cdot\|_{\text{hyb}}$, $\|\cdot\|_{\infty}$) & 1e-2 & 1e-4 & 3.628 & - \\
(Constrained, $\|\cdot\|_{\text{hyb}}$, $\|\cdot\|_{\infty}$) & 3e-2 & 3e-3 & 3.604 & - \\
(Regularized, $\|\cdot\|_{\infty}$, $\|\cdot\|_{\text{ada}\infty}$) & 3e-2 & 3e-4 & 3.578 & - \\
(Constrained, $\|\cdot\|_{\infty}$, $\|\cdot\|_{\text{ada}\infty}$) & 3e-2 & 3e-3 & 3.551 & Muon-Momo \\
(Regularized, $\|\cdot\|_2$, $\|\cdot\|_{\text{ada}\infty}$) & 1 & 1e-3 & 3.719 & - \\
(Constrained, $\|\cdot\|_2$, $\|\cdot\|_{\text{ada}\infty}$) & 1e-1 & 1e-2 & 3.737 & - \\
(Regularized, $\|\cdot\|_{\text{hyb}}$, $\|\cdot\|_{\text{ada}\infty}$) & 3e-2 & 3e-3 & 3.584 & - \\
(Constrained, $\|\cdot\|_{\text{hyb}}$, $\|\cdot\|_{\text{ada}\infty}$) & 3e-2 & 3e-2 & 3.607 & - \\
(Regularized, $\|\cdot\|_{\infty}$, $\|\cdot\|_{\text{ada}2}$) & 3e-3 & 3e-4 & 3.662 & - \\
(Constrained, $\|\cdot\|_{\infty}$, $\|\cdot\|_{\text{ada}2}$) & 1e-2 & 1e-3 & 3.701 & - \\
(Regularized, $\|\cdot\|_2$, $\|\cdot\|_{\text{ada}2}$) & 3 & 3e-2 & 3.613 & PolarGrad-Momo \\
(Constrained, $\|\cdot\|_2$, $\|\cdot\|_{\text{ada}2}$) & 3e-1 & 3e-2 & 3.602 & - \\
(Regularized, $\|\cdot\|_{\text{hyb}}$, $\|\cdot\|_{\text{ada}2}$) & 1e-2 & 1e-2 & 3.576 & MuonMax-Momo \\
(Constrained, $\|\cdot\|_{\text{hyb}}$, $\|\cdot\|_{\text{ada}2}$) & 3e-2 & 3e-2 & 3.553 & - \\
\bottomrule
\end{tabular}
\label{tab:all_fineweb_losses_truncated} 
\end{table}

We also include loss curves for the last 40\% of training for the best variations (with
tuned learning rates) in Figure \ref{fig:fineweb1B_loss_curves}, and the final losses
reached by the best variations (over three seeds) in Table
\ref{tab:final_fineweb_losses}. Lastly, Figure \ref{fig:fineweb1B_trunc_comparison}
shows a comparison of \alg{MuonAdam}, \alg{Scion}, \alg{MuonMax} against their truncated
counterparts.

\begin{figure}[t]
\centering
\begin{subfigure}[b]{0.49\textwidth}
\centering    
\includegraphics[width=\textwidth]{}
\caption{FineWeb1B.}
\label{fig:fineweb1B_loss_curves}
\end{subfigure}
\begin{subfigure}[b]{0.49\textwidth}
\centering
\includegraphics[width=\textwidth]{}
\caption{SlimPajama6B.}
\label{fig:slimpajama6B_loss_curves}
\end{subfigure}
\caption{Training loss for the last 40\% of training for FineWeb1B (left) and SlimPajama6B (right).}
\label{fig:fineweb1B_slimpajama1B_loss_curves}
\end{figure}

\begin{table}[t]
\centering
\caption{Validation loss for FineWeb1B with tuned LRs (mean $\pm$ std over three seeds).}
\begin{tabular}{lcccc}
\toprule
\alg{MuonAdam} & \alg{Scion} & \alg{MuonAdam}-\alg{Momo} & \alg{MuonMax}-\alg{Momo} \\
\midrule
$3.5592 \pm 0.0014$ & $3.5947 \pm 0.0031$ & $3.5546 \pm 0.0004$ & $3.5779 \pm 0.0007$ \\
\bottomrule
\end{tabular}
\label{tab:final_fineweb_losses} 
\end{table}

\begin{figure}
\centering
\includegraphics[width=0.49\textwidth]{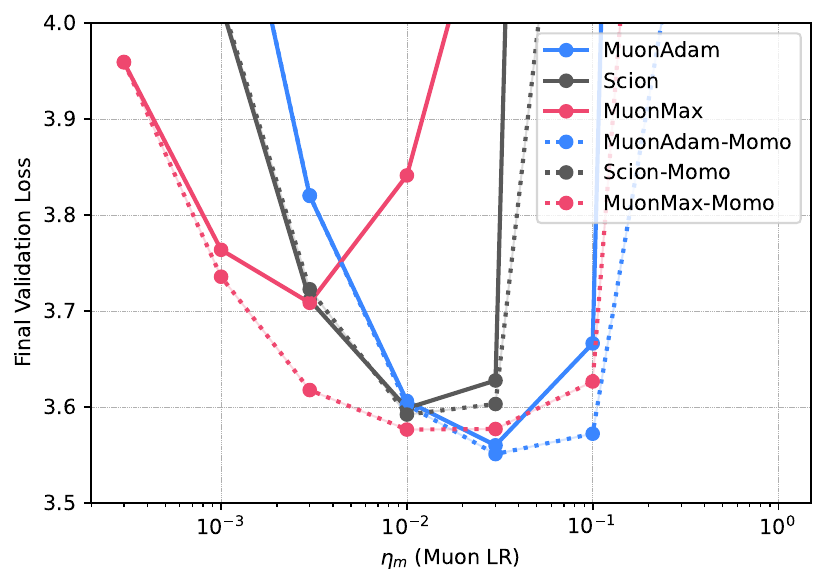}
\caption{Effect of model truncation on final validation loss. Note that for these runs,
we did not use stale nuclear norm approximations in order to isolate the effect of model
truncation.}
\label{fig:fineweb1B_trunc_comparison}
\end{figure}

\subsection{SlimPajama}
Figure \ref{fig:slim_pajama1B_heatmap} shows a 2D visualization of final validation
losses for \alg{Muon}, \alg{Scion}, \alg{Muon}-\alg{Momo}, and \alg{MuonMax}-\alg{Momo}
as the two learning rates vary. We find \alg{MuonMax}-\alg{Momo} to be most stable to
changes in the learning rates, with both \alg{Muon} and \alg{Scion} suffering high
losses when the base LR $\eta_b$ is large. Interestingly, \alg{Muon}-\alg{Momo} has the
highest loss when the Muon LR $\eta_m$ is small and the base LR $\eta_b$ is large.

\begin{figure}[t]
\centering
\includegraphics[width=0.99\textwidth]{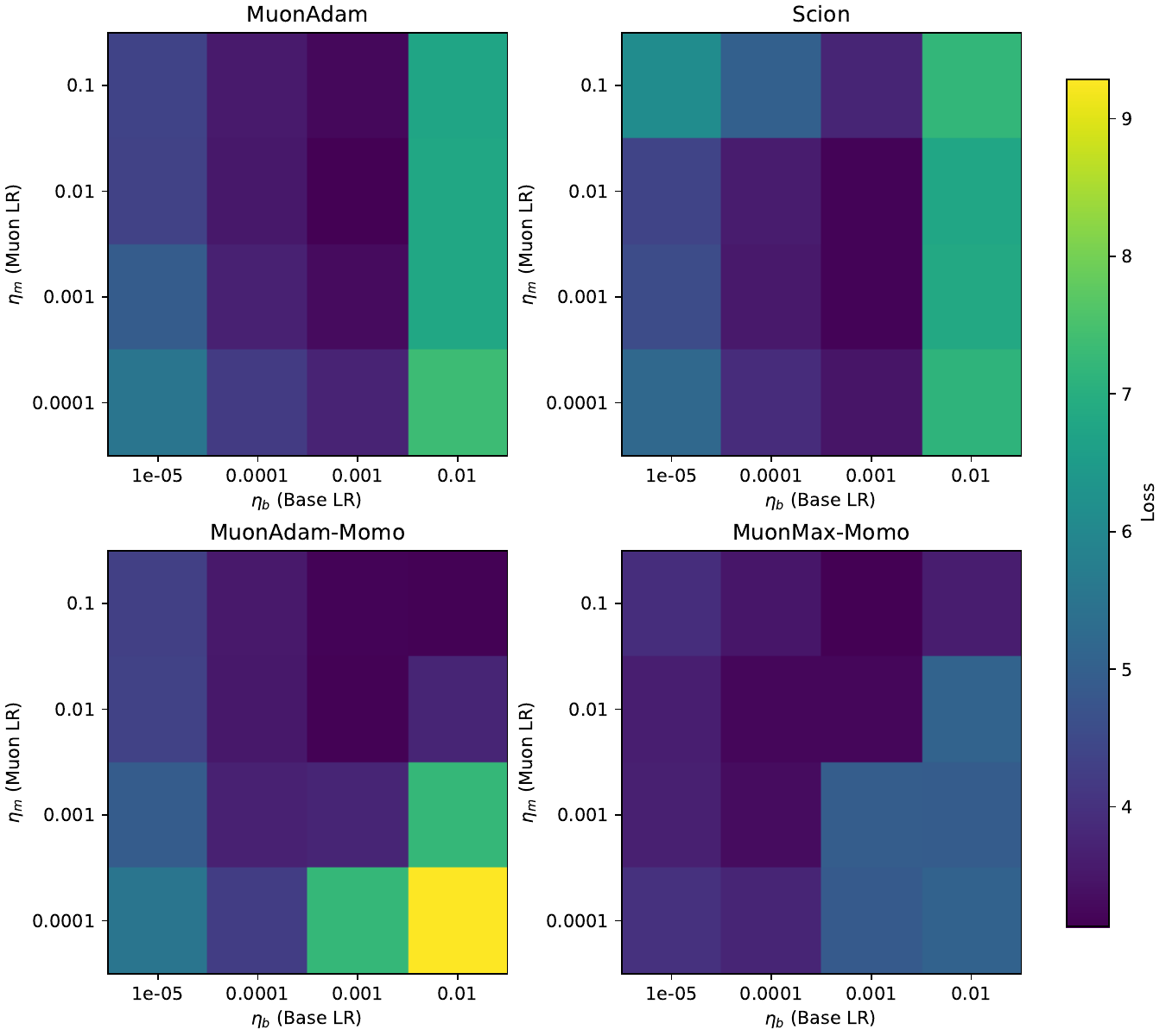}
\caption{2D learning rate sensitivity for SlimPajama1B.}
\label{fig:slim_pajama1B_heatmap}
\end{figure}

We also include loss curves for the last 40\% of training for \alg{MuonAdam} and
\alg{MuonMax}-\alg{Momo} (with tuned learning rates) in Figure
\ref{fig:slimpajama6B_loss_curves}.

\end{document}